\newif\ifjournal
\def \figdir {.}
\newcommand{\mbf}[1]{\mathbf{#1}}              
\newcommand{\mbif}[1]{\boldsymbol{#1}}         
\newcommand{\tens}[1]{\mbf{#1}}                
\newcommand{\vect}[1]{\mbif{#1}}               
\newcommand{\dd}{\mathrm{d}}
\DeclareMathOperator*{\supp}{\operatorname{supp}}
\newcommand{\freealg}{freealg}
\def \eg {e.g.,\ }
\title{Spectral Estimation with Free Decompression}
\newcommand{\authrone}{Siavash Ameli}
\newcommand{\affilone}{ICSI and Department of Statistics}
\newcommand{\addrsone}{University of California, Berkeley}
\newcommand{\emailone}{sameli@berkeley.edu}
\newcommand{\authrtwo}{Chris van der Heide}
\newcommand{\affiltwo}{Dept. of Electrical and Electronic Eng.}
\newcommand{\addrstwo}{University of Melbourne}
\newcommand{\emailtwo}{chris.vdh@gmail.com}
\newcommand{\authrthree}{Liam Hodgkinson}
\newcommand{\affilthree}{School of Mathematics and Statistics}
\newcommand{\addrsthree}{University of Melbourne}
\newcommand{\emailthree}{lhodgkinson@unimelb.edu.au}
\newcommand{\authrfour}{Michael W. Mahoney}
\newcommand{\affilfour}{ICSI, LBNL, and Department of Statistics}
\newcommand{\addrsfour}{University of California, Berkeley}
\newcommand{\emailfour}{mmahoney@stat.berkeley.edu}
    \newcommand{\AuthorBlock}[4]{%
        {\normalfont                          
        \begin{minipage}[t]{0.49\textwidth}   
          \raggedright
          \textbf{#1}\\[2pt]                  
          #2\\                                
          #3\\                                
          \texttt{#4}                         
        \end{minipage}}%
    }
    \author{%
      \AuthorBlock{\authrone}{\affilone}{\addrsone}{\emailone}\hfill
      \AuthorBlock{\authrtwo}{\affiltwo}{\addrstwo}{\emailtwo}%
      \\[17mm]
      \AuthorBlock{\authrthree}{\affilthree}{\addrsthree}{\emailthree}\hfill
      \AuthorBlock{\authrfour}{\affilfour}{\addrsfour}{\emailfour}%
    }
    \author[1]{\authrone}
    \author[2]{\authrtwo}
    \author[3]{\authrthree}
    \author[4]{\authrfour}
    \affil[1]{\small{\textit{\affilone, \addrsone} \protect\\ \href{mailto:\emailone}{\protect\nolinkurl{\emailone}}} \vspace{2mm}}
    \affil[2]{\small{\textit{\affiltwo, \addrstwo} \protect\\ \href{mailto:\emailtwo}{\protect\nolinkurl{\emailtwo}}} \vspace{2mm}}
    \affil[3]{\small{\textit{\affilthree, \addrsthree} \protect\\ \href{mailto:\emailthree}{\protect\nolinkurl{\emailthree}}} \vspace{2mm}}
    \affil[4]{\small{\textit{\affilfour, \addrsfour} \protect\\ \href{mailto:\emailfour}{\protect\nolinkurl{\emailfour}}} \vspace{2mm}}
    \date{}
    \def \FreeAlgPypiUrl {[Anonymous URL]}
    \def \FreeAlgDocUrl {[Anonymous URL]}
    \def \FreeAlgGithubUrl {\url{https://anonymous.4open.science/r/freealg-9C42/}}
    \def \FreeAlgPypiUrl {\url{https://pypi.org/project/freealg}}
    \def \FreeAlgDocUrl {\url{https://ameli.github.io/freealg}}
    \def \FreeAlgGithubUrl {\url{https://github.com/ameli/freealg}}
\begin{document}

\addtocontents{toc}{\protect\setcounter{tocdepth}{-1}}  

\maketitle


\begin{abstract}
Computing eigenvalues of very large matrices is a critical task in many machine learning applications, including the evaluation of log-determinants, the trace of matrix functions, and other important metrics. As datasets continue to grow in scale, the corresponding covariance and kernel matrices become increasingly large, often reaching magnitudes that make their direct formation impractical or impossible. Existing techniques typically rely on matrix-vector products, which can provide efficient approximations, if the matrix spectrum behaves well. However, in settings like distributed learning, or when the matrix is defined only indirectly, access to the full data set can be restricted to only very small sub-matrices of the original matrix. In these cases, the matrix of nominal interest is not even available as an implicit operator, meaning that even matrix-vector products may not be available. In such settings, the matrix is ``impalpable,'' in the sense that we have access to only masked snapshots of it. We draw on principles from free probability theory to introduce a novel method of ``free decompression'' to estimate the spectrum of such matrices. Our method can be used to extrapolate from the empirical spectral densities of small submatrices to infer the eigenspectrum of extremely large (impalpable) matrices (that we cannot form or even evaluate with full matrix-vector products). We demonstrate the effectiveness of this approach through a series of examples, comparing its performance against known limiting distributions from random matrix theory in synthetic settings, as well as applying it to submatrices of real-world datasets, matching them with their full empirical eigenspectra.
\end{abstract}


\section{Introduction}

Eigenvalues encode essential information about matrices and are central to the evaluation of spectral invariants in computational linear algebra. Two such invariants, the log-determinant and the trace of the inverse, appear frequently in statistical modeling and Bayesian inference \citep{ubaru2017applications}.
However, these quantities require access to the full range of eigenvalues, posing substantial challenges in modern large-scale settings. As datasets continue to scale up, it is becoming increasingly common to encounter matrices that cannot be practically formed in their entirety.
This may occur, for example, if the matrices are so large that they exceed the storage capacity in local memory.
Popular methods for operating with large matrices often rely on randomized low-rank approximations from Randomized Numerical Linear Algebra (RandNLA) \citep{DM16_CACM,MURRAY-2023,DM21_NoticesAMS},
and can broadly be classified as either iterative subspace or sampling methods, although hybrid methods have become prominent \citep{UBARU-2017,saibaba2017randomized,chen21}.  \emph{Implicit methods} \citep{adams2018estimating} that rely on Krylov subspaces using Arnoldi- or Lanczos-based techniques \citep{UBARU-2017} have been successful for estimating arbitrary spectral invariants \citep{gardner2018gpytorch,cola}, and they require access only to matrix-vector products. These techniques avoid explicit matrix storage and can often converge quickly.
However, their performance degrades significantly when the matrix is ill-conditioned.

The situation is \emph{particularly dire} when most matrix entries are inaccessible and matrix-vector products are unavailable or too costly. This can occur when the data used to generate the matrix is spread across a distributed system, or when the matrix itself is enormous (see \eg \citet{ameli2025determinant}). Such matrices can be considered \emph{impalpable}, because neither the matrix nor its matrix-vector products can be handled in their entirety. Despite being inaccessible, the information these impalpable matrices contain can be crucial, making methods to summarize or access them highly valuable. Approaches which subsample a few rows and columns are feasible, assuming that any observed quantities can be directly extended to the full matrix \citep{DMM08_CURtheory_JRNL}. This is the strategy behind the Nystr\"{o}m method, which accesses a matrix via column/row sampling, and then reconstructs the matrix under linearity assumptions \citep{GM15_NYSTROM_JRNL,Git16B_TR}. Unfortunately, such techniques incur significant bias, as they target behavior dictated by large eigenvalues by design, giving less prominence to the near-singular dimensions that play a crucial role in determinants and inverses.

In the setting of impalpable matrices, or when otherwise fine-grained behavior of ill-conditioned matrices becomes important and implicit methods fail, there are very few tools currently available \citep{couillet_liao_2022,ameli2025determinant}. To address this gap, we provide a novel method for approximating the empirical spectral density of extremely large (impalpable) matrices, requiring only access to appropriate submatrices. Our technique is based on tools from random matrix theory and free probability, and it applies to virtually any class of matrices. Assuming only that the underlying matrix is sufficiently large, the empirical spectral density of a (sampled) submatrix can be evolved by a partial differential equation (PDE) in terms of the dimension of the matrix to access the spectral density of the full (impalpable) matrix. Due to connections with free probability theory, which we explain below, we call this process \textbf{free decompression}. 

To summarize, we:
\begin{enumerate}[leftmargin=*, label=(\Roman*)]
    \item 
    introduce and derive \emph{free decompression}, the first method for spectral density estimation of impalpable matrices, by evolving the empirical spectral density of submatrices, and provide rigorous polynomial error bounds under mild assumptions (\Cref{prop:stability_bound});
    \item 
    demonstrate the utility of our free decomposition  method for a range of synthetic examples, social media graph data, and empirical neural tangent kernels; and
    \item 
    provide a user-friendly Python package, \texttt{\freealg},\footnote{The source code of the Python package \texttt{\freealg} is available at \FreeAlgGithubUrl, with documentation at \FreeAlgDocUrl.} implementing all proposed algorithms, that reproduces the numerical results of this paper.
\end{enumerate}

The remainder of this paper is structured as follows: \Cref{sec:2spectral} provides background and outlines the different classes of matrices of interest; \Cref{sec:freedecompression} provides the necessary tools in order to construct our free decompression procedure; \Cref{sec:approximation} describes its implementation and issues faced in operationalization; \Cref{sec:numerics} provides numerical examples. We conclude with discussion in \Cref{sec:conlusion}. A summary of the notation used throughout the paper is provided in \cref{sec:nomenclature}. An overview of free decompression and its application to well-studied random matrix ensembles is presented in \cref{sec:rm-models}. Discussions of the significant numerical considerations required to perform free decompression are provided in \cref{sec:continuation} and \cref{app:ESDapprox}. Proofs of our theoretical contributions can be found in \cref{app:proofs,app:stability}. Background on the neural tangent kernel example is provided in \cref{sec:ntk-bg}. Finally, \cref{app:software} presents a guide to our implementation of free decompression.

\section{Impalpable Matrices}
\label{sec:2spectral}

We are interested in estimating the spectral density of a wide class of matrices for which operating on the full matrix is infeasible. In the most extreme case, these matrices cannot be formed in their entirety, either implicitly or explicitly. We call these matrices \emph{impalpable}. We will consider the case where the impalpable matrix of interest is Hermitian. Since we are interested in the setting where we cannot access the full matrix, we will work with submatrices indexed by randomly sampled columns of the impalpable matrix of interest, where the submatrix consists of the intersection of the impalpable matrix's rows and columns that appear in this index set. This corresponds to an assumption that the matrices are part of a sequence that is \emph{asymptotically free} \citep{maida2023strong}. We will then see that it is convenient to work with tools from random matrix theory and free probability, namely the familiar Stieltjes transform \citep{couillet_liao_2022}, whose favorable properties we exploit in our free decompression procedure.


\subsection{Categories of Matrix Difficulty}

We find it useful to classify matrices by four modes of computational difficulty: explicit, implicit, out-of-core, and impalpable, the latter of which is our focus. 

\paragraph{Explicit Matrices.} This is the most straightforward class of matrices. They can be formed explicitly as contiguous arrays and operated on in memory. Existing implementations on such matrices are highly optimized, with their runtime limited only by their computational complexity.

\paragraph{Implicit Matrices.} 
Difficulties begin to arise once matrices become too large to be stored in memory. 
\emph{Implicit matrices} are those that can be computed or estimated only in terms of matrix--vector products $\vect{v} \mapsto \tens{A} \vect{v}$ \citep{adams2018estimating}. 
Treating a matrix $\tens{A} \in \mathbb{R}^{n \times n}$ as implicit often has significant computational benefits, even before the memory wall \citep{GHOLAMI-2024}; for iterative methods, computation time can often be reduced, e.g., from $\mathcal{O}(n^3)$ operations to $\mathcal{O}(n^2)$. 
If the formation of the matrix--vector product is efficient, this can be further reduced to $\mathcal{O}(n)$. 
Memory requirements may follow similar reductions. 
Implicit methods exist for most standard linear algebra operations, often based on Arnoldi- or Lanczos-based iterations, and appear in mature software packages \citep{cola}. 
One of the most popular implicit matrix algorithms for spectral function estimation is \emph{stochastic Lanczos quadrature} (SLQ) \citep{UBARU-2017}; however, the performance of such methods may deteriorate for highly ill-conditioned matrices \citep{ameli2025determinant}. 

\paragraph{Out-of-Core Matrices.} 
Implicit methods typically operate on Krylov iteration schemes, with error rates depending on condition numbers of the matrix---see \citet{bhattacharjee2025improved}, e.g., in the case of SLQ. This can be disastrous for large and ill-conditioned dense matrices, with accurate estimates requiring a large number of matrix--vector products. Exact methods become necessary in this setting if a high level of accuracy is required, but the memory bottleneck still remains. \emph{Out-of-core} matrices have the property that for some integer $m \ll n$, any subblock of size $\mathbb{R}^{m\times m}$ can be formed explicitly and operated on in memory, and the full matrix can be stored locally by iterating through these blocks \citep{aggarwal1988input}. Decompositions \citep{rabani2001out,mu2014higher}, inverses \citep{
caron2002parallel}, and products of out-of-core matrices can be (slowly) computed to high precision, assuming adequate (typically very large) available external storage.

\begin{wraptable}{r}{0.55\textwidth}
\centering
\vspace{-3mm}
\begin{tabular}{lccc}
\toprule
& \multicolumn{3}{c}{\textbf{Access in Memory}} \\
\cmidrule(lr){2-4}
\textbf{Type} & Matrix & \makecell[bc]{Matrix--Vector \\ Product} & \makecell[bc]{Any \\ Subblock} \\
\midrule
Explicit & \tikzcmark & \tikzcmark & \tikzcmark \\
Implicit & \tikzxmark & \tikzcmark & \tikzxmark \\
Out-of-core & \tikzxmark & $\sim$ & \tikzcmark \\
Impalpable & \tikzxmark & \tikzxmark & \tikzxmark \\
\bottomrule
\end{tabular}
\caption{Comparison of matrix classes, according to attributes that can be formed and operated on in memory.}
\end{wraptable}

\paragraph{Impalpable Matrices.} 
The most extreme category of matrices includes cases that do not fit into the other categories. 
Here, the matrix cannot be formed in memory, as it is either too large for out-of-core operations, or has missing portions. 
Matrix-vector products are assumed too expensive to compute accurately, or are rendered useless due to strong ill-conditioning. 
In principle, such a matrix can be considered ``impalpable,'' as the vast proportion of either its elements or properties cannot be operated on. 
Comprising the worst-case scenario, impalpable matrices offer unique challenges for linear algebra practitioners. 
Determining accuracy of any proposed solution is an underdetermined problem in general. The only feasible strategy is \emph{extrapolation}, using available information. 
While this is inherently risky, the class of impalpable matrices simply offer no alternative. To our knowledge, only the FLODANCE algorithm of \citet{ameli2025determinant} offers a valid approach for estimating log-determinants of impalpable matrices.


\subsection{Examples of Impalpable Matrices}

\paragraph{Enormous Gram Matrices.} 
Our motivating example of an impalpable matrix is an ill-conditioned Gram matrix $\tens{A} = (\kappa(\vect{x}_i, \vect{x}_j))_{i,j=1}^n$ comprised of a massive number $n$ of points $\vect{x}_i$ and a kernel $\kappa$ that is expensive to compute. In this case, computing a few subblocks of $\tens{A}$ is viable, but out-of-core methods are too expensive. \citet{ameli2025determinant} demonstrates the challenges of computing the log-determinant of a \emph{neural tangent kernel matrix} for a large neural network model. Their most cost-effective strategy extrapolated the behavior of the log-determinant from smaller submatrices in accordance with a scaling law. This is an example of an \emph{impalpable matrix strategy}, relying on smaller portions of the matrix without matrix--vector products. The potential gains are best demonstrated with a synthetic example: the covariance matrix of $\vect{x}_1, \dots, \vect{x}_n$, where each $\vect{x}_i \sim \mathcal{N}(\vect{0}_p, \tens{I}_{p \times p})$ is $p$-dimensional. Once $p$ and $n$ become extremely large, forming this matrix is prohibitive. However, its spectral density is well-approximated by the Marchenko--Pastur law \citep{MARCHENKO-1967}, and its log-determinant over submatrices of increasing size is predictable \citep{nguyen2014random,cai2015law,ameli2025determinant,hodgkinson2022monotonicity}.

\paragraph{Catastrophic Cancellation.} 
Impalpable matrices typically arise due to issues of scale, but they need not be large themselves. Indeed, such matrices can arise due to \emph{catastrophic cancellation}. Some matrices can be comprised of a non-trivial product of rectangular matrices; for example, $\mbox{cond}(\tens{A}^{\intercal} \tens{A}) = \mbox{cond}(\tens{A})^2$, so representing the outer product $\tens{A}^{\intercal} \tens{A}$ (e.g. in the normal equations) in floating-point precision can incur greater numerical error \citep{ameli2025determinant}. \citet{highamblog} refers to the formation of $\tens{A}^{\intercal} \tens{A}$ as a ``cardinal sin'' of numerical linear algebra. More products yield even larger errors. General Gram matrices exhibit similar behavior: the condition number can become so large that the matrix cannot be accurately represented in double precision, rendering even exact methods hopeless \citep{lowe2025assessing}. Fortunately, submatrices have smaller condition numbers, so there may be a submatrix that \emph{can} be represented in double precision. Ideally, spectral behavior of these submatrices can then be extrapolated to obtain insight into the behavior of the full matrix in a meaningful way.

\section{Free Decompression}
\label{sec:freedecompression}

\ifjournal
    \begin{wrapfigure}{r}{7.6cm}
    \vspace{-6.5mm}
\else
    \begin{wrapfigure}{r}{8.3cm}
    \vspace{-7mm}
\fi
\begin{tcolorbox}
\textbf{Free decompression} of a random submatrix $\tens{A}_{n}$ to a larger matrix $\tens{A}$ requires:
\begin{enumerate}[leftmargin=*]
    \item \textbf{estimation} of its Stieltjes transform $m_{\tens{A}_{n}}$;
    \item \textbf{evolution} of $m_{\tens{A}_{n}}$ in $n$ via equation (\ref{eq:StieltjesPDE});
    \item \textbf{evaluation} of the spectral distribution of $\tens{A}$.
\end{enumerate}
\end{tcolorbox}
\ifjournal
    \vspace{-3mm}
    \end{wrapfigure}
\else
    \end{wrapfigure}
\fi

Our objective is to approximate the spectral distribution of an impalpable matrix $\tens{A}$, given only explicit access to a randomly sampled $n\times n$ submatrix.
This seemingly intractable task can be accomplished by working with the Stieltjes transform of the submatrix, and observing that for large $n$, this Stieltjes transform varies with $n$ in a manner that is well approximated by a PDE.
This enables an approximation of the Stieltjes transform of the full matrix $\tens{A}$, from which a corresponding spectral distribution is easily computed. 
We emphasize that the matrix $\tens{A}$ itself may be arbitrary and deterministic.
Asymptotic freeness enters our model through the method of sumbatrix selection, making it amenable to random matrix theoretic and free probabilistic tools. 


\subsection{Tools from Free Probability}
\label{sxn:tools_free_probability}

In order to make free decompression concrete, we require tools from free probability.
Let $\tens{A}$ be a fixed matrix of large size. 
If we can approximate the spectral density of a small randomly sampled submatrix, and extrapolate its behavior to the full matrix, 
it should also be possible to extrapolate \emph{further} to larger matrices than the one we are given. 
To this end, we treat $\tens{A}$ as an element in an infinite sequence of matrices $\{\tens{A}_1, \tens{A}_2,\dots\}$ of increasing size, where $\tens{A}_n \in \mathbb{R}^{n \times n}$. 
For consistency, we assume that for each $n$, the matrix $\tens{A}_n$ is the top-left $n \times n$ submatrix of $\tens{A}_{n+1}$. 
This ensures that $[\tens{A}_n]_{ij} = [\tens{A}]_{ij}$ is constant in $n$ for fixed $i,j \leq n$. 
We will also assume that for any integer $k \geq 1$, the quantity $\frac{1}{n} \tr(\tens{A}_n^k)$ converges as $n \to \infty$. 
This is equivalent to assuming that the empirical spectral distributions of $\tens{A}_n$ converge weakly as $n \to \infty$ \citep{DV91}. 
Note that this imposes no requirements on the matrix to be approximated.

Still, we cannot predict the upcoming row and column in $\tens{A}_{n+1}$ from $\tens{A}_n$ alone, and we will need to appeal to random transformations to connect each element in the sequence together. 
One way to transform $\tens{A}$ into a random matrix, while preserving its eigenspectrum, is to perform a similarity transform with a random matrix $\tens{P}$, recovering $\tens{P} \tens{A} \tens{P}^{-1}$. 
The most inexpensive choice of random matrix $\tens{P}$ is a \emph{(Haar) random permutation matrix}, so that the (random) similarity transform is equivalent to the following operation: let $\mathcal{S}_n$ be the uniform distribution on the space of permutations on $\{1,\dots,n\}$, and consider the matrix \(\tens{A}_n^\sigma = (\tens{A}_{\sigma(i)\sigma(j)})_{i,j=1}^n\), where \(\sigma \sim \mathcal{S}_n\).

\paragraph{Free Compression.}
Due to a result of \citet{nica1993asymptotically}, the sequence of matrices $\tens{A}_n^\sigma$ is asymptotically free as $n\to \infty$, in the sense of free probability theory. 
We shall not further discuss free random matrices here, but we refer the interested reader to \citet{maida2023strong} for a detailed introduction. 
Instead, we rely on one useful property. 

Let $m_{\tens{A}}(z)$ denote the Stieltjes transform of $\tens{A}$ of size $n \times n$, given by $m_{\tens{A}}(z) = n^{-1}\mathbb{E}[\tr(\tens{A} - z \tens{I})^{-1}],$ and let $\omega(z)$ denote the functional inverse of $m(z)$, satisfying $m(\omega(z)) = z$. The corresponding $R$-transform is then defined as $R(z) = \omega(-z) - z^{-1}$. Our method depends on the following fundamental theorem of \citet{nica1996multiplication}; see also \citet[Section 7]{olver2012numerical}.

\begin{theorem}[\textsc{Nica-Speicher}]
\label{thm:NicaSpeicher}
For a free random matrix $\tens{A}$ of size $n \times n$ with $R$-transform $R(z)$, the $R$-transform of the top-left $n_s\times n_s$ submatrix of $\tens{A}$ is given by $R_{n_s}(z) = R(z \frac{n_s}{n})$. 
\end{theorem}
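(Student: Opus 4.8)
The plan is to recognize the statement as the classical \emph{free compression} formula and to prove it through free cumulants. Write $t = n_s/n$ and pass to the limiting $W^*$-probability space $(\mathcal{M},\tau)$ (with $\tau$ the normalized trace) in which $\tens{A}$ is realized as a bounded self-adjoint free random variable $a$. By Nica's theorem on conjugation by Haar permutations (\citet{nica1993asymptotically}, cited above), the deterministic coordinate projection onto the first $n_s$ coordinates becomes, in this limit, a projection $p \in \mathcal{M}$ with $\tau(p) = t$ that is free from $a$; the top-left $n_s \times n_s$ submatrix is then the compressed element $pap$, regarded inside the compressed space $\bigl(p\mathcal{M}p,\ \tau_p\bigr)$ equipped with the renormalized trace $\tau_p = t^{-1}\tau|_{p\mathcal{M}p}$ (so $\tau_p(p)=1$). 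It therefore suffices to prove that $R_{pap}^{\tau_p}(z) = R_a^{\tau}(tz)$.

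First I would express all mixed moments of $pap$ in terms of the free cumulants of $a$. Since $p^2 = p$, one has $(pap)^k = p\,a\,p\,a\cdots p\,a\,p$ with $k$ copies of $a$, and the moment formula for a word that alternates a projection with an element free from it gives
\[
\tau\bigl((pap)^k\bigr) \;=\; \sum_{\pi \in \mathrm{NC}(k)} t^{\,k+1-|\pi|}\,\kappa_\pi[a,\dots,a],
\]
where $\mathrm{NC}(k)$ is the lattice of non-crossing partitions, $|\pi|$ its number of blocks, and $\kappa_\pi = \prod_{V \in \pi}\kappa_{|V|}(a,\dots,a)$. Dividing by $t$ yields $\tau_p\bigl((pap)^k\bigr) = \sum_{\pi \in \mathrm{NC}(k)} \prod_{V\in\pi}\bigl(t^{\,|V|-1}\kappa_{|V|}(a)\bigr)$. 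Comparing this with the moment--cumulant relation $\tau_p\bigl((pap)^k\bigr) = \sum_{\pi\in \mathrm{NC}(k)}\kappa_\pi^{\tau_p}[pap,\dots,pap]$ in the compressed space, and using uniqueness of free cumulants (an induction on $k$ via Möbius inversion on $\mathrm{NC}(k)$), I conclude $\kappa_k^{\tau_p}(pap,\dots,pap) = t^{\,k-1}\,\kappa_k^{\tau}(a,\dots,a)$ for every $k \geq 1$. Summing the generating series $R(z) = \sum_{k\geq 1}\kappa_k z^{k-1}$ (convergent near $0$ since $a$ is bounded, or read as formal power series) then gives $R_{pap}^{\tau_p}(z) = \sum_{k\geq1} t^{k-1}\kappa_k^\tau(a)\, z^{k-1} = R_a^{\tau}(tz)$, which is the claim $R_{n_s}(z) = R(z\,n_s/n)$ — provided one also checks that the paper's normalization of the $R$-transform (via $m$, its inverse $\omega$, and $R(z) = \omega(-z) - z^{-1}$) coincides with the free-cumulant generating function, which is a routine unwinding of the definitions.

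The main obstacle is the boxed moment identity, specifically pinning down the exponent $k+1-|\pi|$ of $t$. A self-contained derivation by centering $p$ and $a$ about their means $t$ and $\tau(a)$ and tracking which words survive under freeness is elementary but bookkeeping-heavy, since one must carefully account for the power of $t$ attached to each block of $\pi$; the cleaner route is to invoke Speicher's general formula for the free cumulants of products of free variables, essentially reproducing the argument of \citet{nica1996multiplication}. A secondary, more analytic route avoids cumulants altogether: write the Cauchy transform of $pap$ in $p\mathcal{M}p$ as $\tau_p\bigl((pap - z p)^{-1}\bigr)$, use the subordination property for $p$ free from $a$ to relate this resolvent to $\tau\bigl((a-w)^{-1}\bigr)$ under an explicit change of variables $w = w(z)$, and invert to recover the stated reparametrization. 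This version is closer in spirit to the Stieltjes-transform language used elsewhere in the paper, at the cost of invoking operator-valued subordination machinery.
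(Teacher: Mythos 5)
The paper does not prove this statement; it is stated as \Cref{thm:NicaSpeicher} and attributed to \citet{nica1996multiplication} (with a pointer to \citet[Section~7]{olver2012numerical}), so there is no internal proof to compare yours against. Judged on its own terms, your sketch is a faithful outline of the standard free-cumulant proof of the Nica--Speicher compression theorem, and it is correct. The boxed moment identity with exponent $t^{\,k+1-|\pi|}$ is right: dividing by $t$ gives $\tau_p\bigl((pap)^k\bigr)=\sum_{\pi\in\mathrm{NC}(k)}\prod_{V\in\pi}t^{\,|V|-1}\kappa_{|V|}(a)$, and a direct check at $k=1$ ($\tau(pap)=t\,\tau(a)$) and $k=2$ ($\tau(papap)=t\mu^2+t^2\kappa_2(a)$, using $\kappa_3(p)=t-3t^2+2t^3$) confirms the formula. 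From there, M\"obius inversion on $\mathrm{NC}(k)$ yields $\kappa_k^{\tau_p}(pap)=t^{k-1}\kappa_k^\tau(a)$, hence $R_{pap}^{\tau_p}(z)=R_a^\tau(tz)$. Your closing remark on normalization is also easy to discharge: the paper's $m$ is the negative of the Cauchy transform $G$, so $\omega(-z)-z^{-1}=G^{-1}(z)-z^{-1}$ is the usual $R$-transform, which coincides with the cumulant generating series.

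The one place where your argument is not self-contained is exactly where you flag it: establishing the boxed identity from scratch requires Speicher's formula for cumulants with products as arguments (equivalently, the description of mixed cumulants for a word alternating between two free subalgebras and the factorization over blocks of $\mathrm{NC}(k)$ after contracting the $p$-positions). Invoking that machinery effectively reproduces the Nica--Speicher argument, so your proposal is best read as a proof modulo a citation to that lemma rather than as an independent derivation. This is a reasonable level of rigor given that the paper itself only cites the result. Your alternative route via subordination for the resolvent $\tau_p\bigl((pap-zp)^{-1}\bigr)$ is also viable and arguably more in keeping with the Stieltjes-transform perspective the paper uses elsewhere, but it trades combinatorial bookkeeping for operator-valued subordination machinery, so the net gain in self-containedness is modest.
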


In other words, to arrive at the $R$-transform for a submatrix comprised of only the top-left proportion $\alpha$ of rows and columns, one need only scale the argument by $\alpha$. 
In free probability theory, this procedure is called \emph{free compression} \citep{olver2012numerical}. 

\paragraph{Free Decompression.}
Our main approach is based on the observation that nothing obstructs this operation from being conducted \emph{in reverse}, to glean information about a larger matrix from a smaller one. 
For this reason, we call our procedure \emph{free decompression}: 
starting from the $R$-transform of a smaller $n_s\times n_s$ submatrix $R_{n_s}(z)$, the $R$-transform of the larger $n \times n$ matrix is given by $R_n(z) = R_{n_s}(\frac{n}{n_s} z)$.

\begin{figure}
    \includegraphics[width=\textwidth]{\figdir/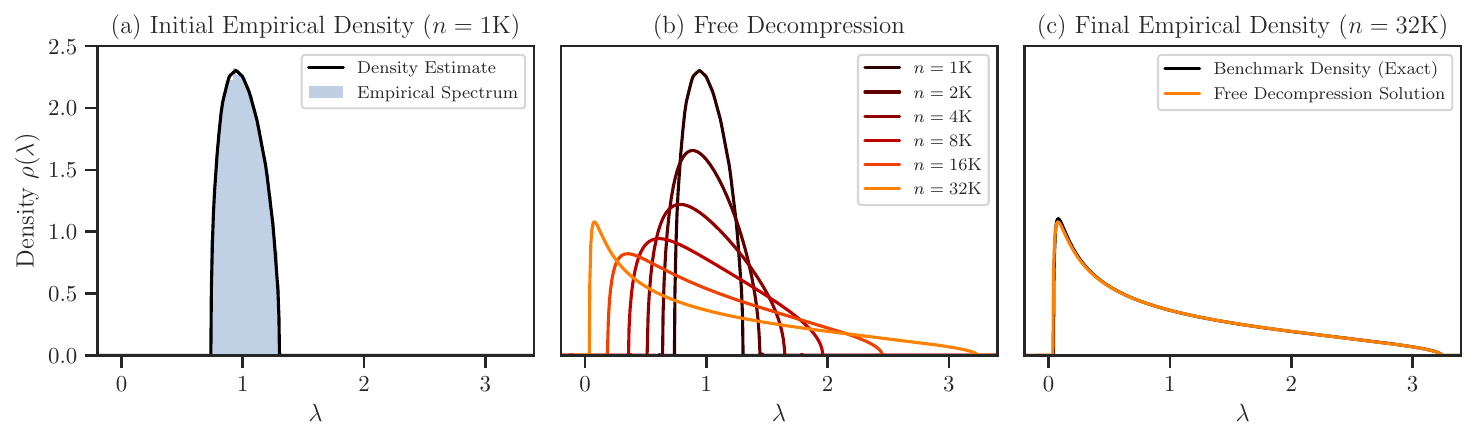}
    \ifjournal
    \vspace{-6mm}
    \fi
    \caption{(a) Estimation of the Marchenko--Pastur law with ratio $\lambda=\frac{1}{50}$ from eigenvalues of Wishart matrices with $d=\num{50000}$ degrees of freedom and size $n\times n=\num{1000} \times \num{1000}$; (b) approximation of densities for various values of $n$ via free decompression (\Cref{algo:fd}); and (c) comparison of our decompressed approximation of $n=\num{32000}$ with the known limiting analytic density for $\lambda = \frac{32}{50}$.}
    \label{fig:mp-free}
\end{figure}

\begin{example*}[\textsc{Covariance Matrices}]
Let $\tens{X} \in \mathbb{R}^{n \times d}$ have elements $X_{ij}\sim \mathcal{N}(0,1)$ and let $\tens{A} \coloneqq \frac{1}{d} \tens{X} \tens{X}^{\intercal} \in \mathbb{R}^{n\times n}$. 
Assuming $d \geq n$, then $\tens{A}$ is almost surely full rank, and it is $n \times n$ Wishart-distributed with $d$ degrees of freedom. 
The top-left $n_s \times n_s$ corner of $\tens{A}^\sigma$ is also Wishart-distributed with $d$ degrees of freedom; and so, for large $n_s$ and $d$, its eigenvalues approximately follow the Marchenko--Pastur law with ratio $\lambda = n_s / d$. The $R$-transform is given by $R_{n_s}(z) = (1 - z n_s / d)^{-1}.$ Applying free decompression, we estimate the $R$-transform of the full matrix to be
\[
R_n(z) = \left(1 - z \cdot \frac{n}{n_s} \cdot \frac{n_s}{d} \right)^{-1} = \left(1 - z \cdot \frac{n}{d} \right)^{-1}  .
\]
This is the $R$-transform of a Marchenko--Pastur law with ratio $\lambda = n / d$, and it approximates the spectral density of a Wishart-distributed matrix with $d$ degrees of freedom and size $n \times n$. This is precisely what we expected for the full matrix $\tens{A}$. In effect, this allows us to infer spectral properties of the $\tens{A} = \frac{1}{d} \tens{X} \tens{X}^{\intercal} \in \mathbb{R}^{n\times n}$ from the submatrix $\tens{A}_{n_s} = \frac{1}{d} \tens{X}_{n_s} \tens{X}_{n_s}^{\intercal} \in \mathbb{R}^{n_s\times n_s}$, where the $\tens{X}_{n_s}\in\mathbb{R}^{n_s\times d}$ are (much) shorter matrices with (much) smaller aspect ratio $\lambda$ (see \Cref{fig:mp-free}).
\end{example*}

However, for more general classes of matrices, inverting the Stieltjes transform in order to compute its $R$-transform is not analytically tractable. Instead, we will work with the Stieltjes transform directly. 


\subsection{Evolution of the Stieltjes Transform}

Our approach can be made precise by appealing to \Cref{thm:NicaSpeicher} and operating directly on the $R$ transform. 
We do this in \Cref{prop:MainPDE}, which shows that free decompression corresponds to the evolution of a PDE in the Stieltjes transform, given by (\ref{eq:StieltjesPDE}). 
To do so, for a fixed $R$-transform, we let $R(t,z) = R(z e^t)$, so that $\alpha = n/n_s = e^t$, and we consider the corresponding Stieltjes transform $m(t,z)$ and its inverse $\omega(t,z)$. 
Then, the following proposition, proved in \Cref{app:proofs}, holds. 

\begin{proposition}
\label{prop:MainPDE}
The Stieltjes transform $m(t,z)$ corresponding to $R$-transforms $R(z e^t)$ satisfies
\begin{equation}
\label{eq:StieltjesPDE}
\frac{\partial m}{\partial t} = -m + m^{-1} \frac{\partial m}{\partial z}.
\end{equation}
\end{proposition}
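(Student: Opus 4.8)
The plan is to start from the defining relations and differentiate with respect to $t$, using the chain rule and the inverse-function identity. Set $R(t,z) = R(ze^t)$. Recall that the $R$-transform is related to the functional inverse $\omega$ of the Stieltjes transform by $R(z) = \omega(-z) - z^{-1}$, so that $\omega(t, -z) = R(t, z) + z^{-1}$, and equivalently $\omega(t, z) = R(t, -z) - z^{-1}$. Since $m(t, \cdot)$ and $\omega(t, \cdot)$ are mutual inverses, we have the identity $\omega(t, m(t, z)) = z$ for all $t$ and $z$ in the relevant domain.

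First I would compute $\partial_t \omega(t, w)$ at fixed $w$. Writing $u = -w$, we have $\omega(t,w) = R(ue^t) + w^{-1}$, hence $\partial_t \omega(t,w) = u e^t R'(u e^t) = -w e^t R'(-w e^t)$. I also need to express this without explicit reference to $R'$: differentiating $R(t,z) = R(ze^t)$ in $z$ gives $\partial_z R(t,z) = e^t R'(ze^t)$, and from $\omega(t,w) = R(t,-w) - w^{-1}$ we get $\partial_w \omega(t,w) = -\partial_z R(t,z)\big|_{z=-w} = -e^t R'(-we^t)$. Comparing the two displays yields the clean relation $\partial_t \omega(t,w) = w\,\partial_w \omega(t,w)$, a transport-type equation for $\omega$. (Alternatively one notes directly that $\omega(t,w) = \tilde\omega(we^t)\cdot(\text{correction})$ is not quite homogeneous because of the $-w^{-1}$ term, which is exactly why the $-m$ term appears below; carrying the $w^{-1}$ term explicitly is the careful part.)

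Next I would differentiate the identity $\omega(t, m(t,z)) = z$ totally in $t$ at fixed $z$:
\[
\partial_t \omega(t,w)\big|_{w=m(t,z)} + \partial_w \omega(t,w)\big|_{w=m(t,z)} \cdot \partial_t m(t,z) = 0.
\]
Substituting the transport relation $\partial_t\omega(t,w) = w\,\partial_w\omega(t,w)$ evaluated at $w = m(t,z)$ gives
\[
\partial_w\omega\big|_{w=m}\big( m + \partial_t m\big) = 0,
\]
so (away from points where $\partial_w\omega$ vanishes) $\partial_t m = -m$ — but this cannot be the whole story, because that would ignore the $z$-dependence coming through the $-w^{-1}$ term in $\omega$. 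The resolution is that I must be more careful: the correct transport relation for $\omega$ is $\partial_t\omega(t,w) = w\,\partial_w\omega(t,w) + w^{-1}$ (the extra $w^{-1}$ is precisely $\partial_t$ of the $w^{-1}$ piece being absent, i.e. it is $0$, versus $w\partial_w(-w^{-1}) = w^{-1}$, so the homogeneity fails by exactly $w^{-1}$). Redoing the substitution, $\partial_w\omega|_m(m + \partial_t m) + m^{-1} = 0$; then using $\partial_w\omega|_{w=m(t,z)} = 1/\partial_z m(t,z)$ (differentiate $\omega(t,m(t,z))=z$ in $z$) gives $(m + \partial_t m)/\partial_z m = -m^{-1}$, i.e.
\[
\partial_t m = -m + m^{-1}\,\partial_z m,
\]
which is (\ref{eq:StieltjesPDE}).

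The main obstacle I expect is bookkeeping the $-z^{-1}$ term in the definition $R(z) = \omega(-z) - z^{-1}$ correctly through the two differentiations — it is the sole source of the $m^{-1}\partial_z m$ term, and a naïve homogeneity argument (treating $\omega(t,w)$ as literally $\omega(0, we^t)$) produces only the spurious $\partial_t m = -m$. A secondary, more technical point is justifying that $\partial_w\omega \ne 0$ and that all transforms are analytic on a common domain (say, the upper half-plane or a neighborhood thereof where $m$ maps into the domain of $\omega$), so that the inverse-function manipulations and the chain rule are valid; this is standard for Stieltjes transforms of compactly supported measures and I would invoke it without belaboring the details.
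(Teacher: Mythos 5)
Your strategy is essentially the paper's: write $\omega(t,w) = R(t,-w) - w^{-1}$, derive a first-order transport PDE for $\omega$ in $(t,w)$ by eliminating $R'$ between its $t$- and $w$-derivatives, and then convert to a PDE for $m$ using the inverse-function identities obtained from $\omega(t, m(t,z)) = z$. (The paper phrases the first step by integrating $R(t,e^{-t}z) = R(0,z)$ up to $\omega(t,e^{-t}z) - (1-e^t)z^{-1} = \omega(0,z)$ and then differentiating in $t$, but after the substitution $w=e^{-t}z$ this is literally your transport equation.) You also correctly identify that the $-w^{-1}$ term in the definition of $\omega$ is the sole source of the $m^{-1}\partial_z m$ contribution, which is the only nontrivial point.

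However, there are two compensating sign errors. From $\omega(t,w) = R(-we^t) - w^{-1}$ one has $\partial_t\omega = -we^tR'(-we^t)$ and $w\,\partial_w\omega = -we^tR'(-we^t) + w^{-1}$, hence $\partial_t\omega = w\,\partial_w\omega - w^{-1}$, not $+w^{-1}$ as you wrote. (Your own verbal accounting --- the $-w^{-1}$ piece contributes $0$ to $\partial_t\omega$ but $+w^{-1}$ to $w\partial_w\omega$ --- already forces the minus sign.) With the correct sign, substituting into $\partial_t\omega|_{w=m} + \partial_w\omega|_{w=m}\,\partial_t m = 0$ gives $\partial_w\omega|_m\,(m + \partial_t m) - m^{-1} = 0$, so $(m + \partial_t m)/\partial_z m = m^{-1}$, which rearranges to the claimed PDE. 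Your version contains a second error in the opposite direction: from $(m + \partial_t m)/\partial_z m = -m^{-1}$ one gets $\partial_t m = -m - m^{-1}\partial_z m$, not $-m + m^{-1}\partial_z m$. The final equation is right only because the two mistakes cancel; as written, the derivation does not hold together line-by-line, so you should redo the bookkeeping of the $w^{-1}$ term with care.
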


Since (\ref{eq:StieltjesPDE}) is a first-order quasilinear PDE, it can be readily solved using the method of characteristics. 
This is outlined in \Cref{prop:CharCurve}, which is proved in \Cref{app:proofs}. 
A key consequence of this proposition is that we obtain explicit solutions to \eqref{prop:MainPDE} in terms of the initial data and desired ``time'' \(t\). 

\begin{proposition}
\label{prop:CharCurve}
  Suppose \( m_0 \) is analytic in an open domain \(\Omega\subset\mathbb{C}\), and \( m_0(z)\neq 0 \) for all \( z\in\Omega \). Consider the initial-value problem (\ref{eq:StieltjesPDE}) with $m(0, z) = m_0(z)$. 
  For $z_0 \in \Omega$, there is a unique solution \(m(t,z)\) whose graph can be parametrized by curves \(\tau \mapsto (t(\tau),z(\tau),m(\tau))\), \(\tau \in \mathbb{R}_{\geq 0}\), where
  \begin{equation}
    t(\tau) = \tau, \qquad z(\tau) = z_0 - m_0(z_0)^{-1} (e^\tau - 1),\qquad m(\tau) = m_0(z_0) e^{-\tau}, \label{eq:PDE-sol}
  \end{equation}
    valid for $\tau$ such that $z(\tau) \in \Omega$. 
    Furthermore, letting $\Phi(t,z,z_0) \coloneqq z - z_0 + m(z_0)^{-1}(e^\tau - 1)$, so $\Phi(t(\tau),z(\tau),z_0)~=~0$, there exists an inverse function \(z_0~=~\phi(t, z)\) solving \(\Phi(t, z, \phi(t, z)) = 0\) which allows the explicit solution of \eqref{eq:StieltjesPDE} by $m(t, z) = m_0(\phi(t, z)) e^{-t}.$
\end{proposition}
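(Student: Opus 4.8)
The plan is to integrate the quasilinear PDE \eqref{eq:StieltjesPDE} by the method of characteristics. Writing the equation in the form $m_t - m^{-1}m_z = -m$, the associated characteristic system for a curve $\tau \mapsto (t(\tau),z(\tau),m(\tau))$ emanating from $z_0 \in \Omega$ is
\begin{equation*}
\frac{dt}{d\tau} = 1, \qquad \frac{dz}{d\tau} = -\frac{1}{m}, \qquad \frac{dm}{d\tau} = -m, \qquad (t,z,m)|_{\tau=0} = (0,z_0,m_0(z_0)).
\end{equation*}
The $m$-equation integrates at once to $m(\tau) = m_0(z_0)e^{-\tau}$; the hypothesis $m_0(z_0)\neq 0$ is exactly what is needed to keep $m(\tau)$ away from zero, so that the singular coefficient $m^{-1}$ in the $z$-equation stays analytic along the whole characteristic. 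Substituting gives $dz/d\tau = -e^\tau/m_0(z_0)$, hence $z(\tau) = z_0 - m_0(z_0)^{-1}(e^\tau-1)$ and $t(\tau)=\tau$, which is \eqref{eq:PDE-sol}. Local existence and uniqueness of these curves follows from the Picard--Lindel\"of theorem applied to the (analytic, hence locally Lipschitz) right-hand side on the region $\{m \neq 0\}$; the curve persists for as long as $z(\tau)$ remains in $\Omega$.

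Next I would show these characteristics foliate a neighbourhood of $\{t=0\}\times\Omega$, so that $m(t,z)$ is well-defined by tracing the unique characteristic through $(t,z)$ back to $\tau = 0$. For this I would apply the (analytic) implicit function theorem to $\Phi(t,z,z_0) = z - z_0 + m_0(z_0)^{-1}(e^t-1)$. At $t=0$ we have $\Phi(0,z,z_0) = z-z_0$, whose $z_0$-derivative is $-1\neq 0$, yielding a unique analytic root $z_0 = \phi(t,z)$ near $t=0$ with $\phi(0,z)=z$. In general, $\partial_{z_0}\Phi = -\bigl(1 + m_0'(z_0)m_0(z_0)^{-2}(e^t-1)\bigr)$, which is (up to sign) precisely the Jacobian $\partial z(\tau)/\partial z_0$ of the characteristic map at $\tau = t$; wherever this does not vanish the characteristics do not cross and $\phi$ extends as an analytic function. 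By construction, along $\tau\mapsto(z(\tau),\tau)$ one has $\phi(t(\tau),z(\tau)) = z_0$, so $m_0(\phi(t,z))e^{-t}$ restricted to the characteristic equals $m_0(z_0)e^{-\tau} = m(\tau)$, identifying the candidate solution $m(t,z)\coloneqq m_0(\phi(t,z))e^{-t}$ with the value transported along characteristics.

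It then remains to verify directly that $m(t,z) = m_0(\phi(t,z))e^{-t}$ solves \eqref{eq:StieltjesPDE} with $m(0,z)=m_0(z)$, and that it is the only such solution. Implicit differentiation of $\Phi(t,z,\phi(t,z))=0$ yields $\phi_t$ and $\phi_z$ as rational expressions in $m_0(\phi)$, $m_0'(\phi)$ and $e^t$; feeding these into $m_t$ and $m_z$ and simplifying using $m = m_0(\phi)e^{-t}$ reduces the PDE to an algebraic identity. Uniqueness is the standard fact that any $C^1$ solution of a first-order quasilinear PDE is constant along its characteristics in the appropriate sense, and must therefore coincide with the function we constructed.

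The main obstacle I anticipate is not any single computation but the book-keeping around the domain of validity: pinning down that the solution $m(t,z)$ exists, and is analytic, precisely on the region swept out by characteristics before either $z(\tau)$ leaves $\Omega$ or the Jacobian $1 + m_0'(z_0)m_0(z_0)^{-2}(e^\tau - 1)$ vanishes (the onset of characteristic crossing, i.e.\ the formation of a shock), and stating the conclusion with the correct such caveat. The characteristic ODEs and the final PDE check are routine.
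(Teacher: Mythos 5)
Your proof takes essentially the same approach as the paper: set up the characteristic ODE system from the quasilinear PDE, integrate it explicitly to obtain \eqref{eq:PDE-sol}, invoke standard ODE existence/uniqueness for the characteristics, and then apply the implicit function theorem to $\Phi(t,z,z_0)$ to obtain $\phi(t,z)$ and hence the explicit solution $m(t,z) = m_0(\phi(t,z))e^{-t}$. If anything, you are slightly more careful than the paper on two points --- you explicitly compute $\partial_{z_0}\Phi$ and observe that it is $-1$ at $t=0$, giving a concrete verification of the IFT hypothesis (the paper merely asserts ``if $\partial\Phi/\partial z_0 \neq 0$''), and you flag the onset of characteristic crossing as the natural boundary of the solution's domain --- but these are refinements, not a different route.
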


Finally, \Cref{prop:stability_bound}, proved in \Cref{app:stability}, shows that errors in free decompression can grow, but no more than polynomially fast in the decompression ratio \(\frac{n}{n_s}\).

\begin{proposition}
\label{prop:stability_bound}
Let $m$ be a Stieltjes transform of a density $\rho$ such that $\|\rho\|_{L^\infty} < +\infty$, and let $\delta m_{n_s}$ denote the error in an approximation to $m$ starting with a matrix size of $n_s$. For $\delta m_n$, the error under free decompression (\ref{eq:StieltjesPDE}) to a ratio $n / n_s$, there is a constant $\nu > 0$ such that for any $t > 0$, $$\|\delta m_n\|_{L^2} \leq (n / n_s)^\nu \|\delta m_{n_s}\|_{L^2}.$$
\end{proposition}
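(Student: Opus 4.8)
The plan is to show that the free-decompression flow amplifies $L^2$ errors in the Stieltjes transform at most exponentially in the ``time'' $t=\log(n/n_s)$; since $e^{\nu t}=(n/n_s)^\nu$, this is exactly the claimed polynomial-in-ratio bound, and the entire point is to rule out exponential-in-ratio blow-up. The first step is to \emph{linearise} the dynamics. Writing the approximant as $m+\delta m$ and subtracting the copies of \eqref{eq:StieltjesPDE} obeyed by $m$ and by $m+\delta m$, the leading-order evolution of the perturbation is the first-order linear PDE
\begin{equation*}
\partial_t\,\delta m \;=\; -\,\delta m \;+\; \partial_z\!\Big(\tfrac{\delta m}{m}\Big) \;=\; -\Big(1+\tfrac{\partial_z m}{m^2}\Big)\delta m \;+\; \tfrac{1}{m}\,\partial_z\,\delta m ,
\end{equation*}
a transport equation with complex velocity $-1/m$ and zeroth-order coefficient $-1-m^{-2}\partial_z m$. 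Equivalently, one can differentiate the closed-form characteristic solution of \Cref{prop:CharCurve}, $m(t,z)=m_0(\phi(t,z))e^{-t}$, with respect to the initial data, obtaining $\delta m(t,z)=e^{-t}\,\delta m_0(z_0)\big/\partial_{z_0}Z(t,z_0)$ with $z_0=\phi(t,z)$ and $Z(t,z_0)=z_0-m_0(z_0)^{-1}(e^t-1)$; I would keep both forms on hand.

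Second, I would run an \emph{energy estimate} along a contour $\Gamma$ about $\operatorname{supp}\rho$ on which the Stieltjes transforms are evaluated. With $E(t)=\|\delta m(t,\cdot)\|_{L^2(\Gamma)}^2$,
\begin{equation*}
\frac{dE}{dt} \;=\; 2\operatorname{Re}\!\int_\Gamma \overline{\delta m}\,\partial_t\,\delta m\,|dz| \;=\; -2\!\int_\Gamma \operatorname{Re}\!\Big(1+\tfrac{\partial_z m}{m^2}\Big)|\delta m|^2\,|dz| \;+\; 2\operatorname{Re}\!\int_\Gamma \overline{\delta m}\,\tfrac{\partial_z\,\delta m}{m}\,|dz| .
\end{equation*}
The zeroth-order contribution is $\le C E(t)$ once the coefficient $1+m^{-2}\partial_z m$ is bounded on $\Gamma$, and this is exactly where $\|\rho\|_{L^\infty}<\infty$ is used: it gives $|\operatorname{Im} m|\le\pi\|\rho\|_{L^\infty}$ on horizontal sections, hence $m$ bounded; $\operatorname{Im} m>0$ on the upper half-plane makes $m$ zero-free there; the normalisation $m(z)\sim-1/z$ at infinity makes $m^{-2}\partial_z m$ bounded; and a Cauchy estimate off $\operatorname{supp}\rho$ bounds $\partial_z m$. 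For the transport term, integration by parts along $\Gamma$ (legitimate because $\delta m$ is holomorphic off $\operatorname{supp}\rho$, so $\partial_z\,\delta m$ is its tangential derivative there) turns $2\operatorname{Re}\int_\Gamma\overline{\delta m}\,m^{-1}\partial_z\,\delta m\,|dz|$ into $\int_\Gamma\operatorname{Re}\!\big(\partial_z(1/m)\big)|\delta m|^2\,|dz|$, again $\le C E(t)$, plus a remainder discussed below. Granting that remainder is also $O(E)$, Grönwall gives $E(t)\le e^{2\nu t}E(0)$, i.e.\ $\|\delta m_n\|_{L^2}\le (n/n_s)^\nu\|\delta m_{n_s}\|_{L^2}$ with $\nu$ fixed by the bounds above.

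The main obstacle is that the transport velocity $1/m$ is genuinely complex, so the transport operator is not skew-adjoint: the integration by parts above leaves a remainder of the shape $\int_\Gamma \operatorname{Im}(1/m)\,\operatorname{Im}\!\big(\overline{\delta m}\,\partial_z\,\delta m\big)\,|dz|$ (up to sign and constants), which at face value needs an $H^1$ bound on $\delta m$ and so does not close at the $L^2$ level. Two handles are available. One can exploit that $\delta m$ lies in a Hardy space (holomorphic and decaying in the upper half-plane, hence with one-sided Fourier spectrum along $\Gamma$), together with the fixed sign $\operatorname{Im}(1/m)=-\operatorname{Im} m/|m|^2<0$, to control the interaction of the offending modes; or — the route I would actually pursue — work from the characteristic representation, where $\|\delta m(t)\|_{L^2(\Gamma)}^2 = e^{-2t}\int_{\phi(t,\Gamma)}|\delta m_0|^2\big/|\partial_{z_0}Z(t,\cdot)|\,|dz_0|$, and bound it using (i) a lower bound $|\partial_{z_0}Z(t,z_0)|\ge c(t)>0$ valid before the first caustic of the characteristic flow, and (ii) control of $\|\delta m_0\|_{L^2}$ on the deformed contour $\phi(t,\Gamma)$ via convexity (subharmonicity of $y\mapsto\|\delta m_0(\cdot+iy)\|_{L^2}^2$) and the fact that $\delta m_0$, as a difference of Stieltjes transforms of $L^\infty$ densities, extends continuously down toward $\operatorname{supp}\rho$. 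Collecting the $t$-dependence of $e^{-t}$, of $c(t)$, and of how far $\phi(t,\Gamma)$ has been pushed toward $\operatorname{supp}\rho$ then produces the exponent $\nu$. Both routes presuppose that the evolution stays in the regime where $m(t,\cdot)$ is still the Stieltjes transform of a bounded-density law — before caustics form and before the contour meets $\operatorname{supp}\rho$ — which is exactly the regime in which free decompression is meaningful; beyond it the statement is vacuous.
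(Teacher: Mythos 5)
Your primary route — linearise the PDE, run an $L^2$ energy estimate, close with Grönwall, and convert $e^{\nu t}$ to $(n/n_s)^\nu$ — is exactly the paper's route (Appendix G), and your linearised transport equation and energy identity match the paper's equations term for term. You also correctly identify the precise obstruction the paper runs into: after integrating by parts, the term $\int \Im(1/m)\,\Im(\overline{\delta m}\,\partial_x\delta m)\,dx$ survives, and it cannot be closed at the $L^2$ level without some control on $\partial_x\delta m$.

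The gap is that you do not actually close this term, and the two handles you sketch each stop short. Route (a), the Hardy-space sign argument, would work if $\Im(1/m)$ were constant: for $\delta m\in H^2(\mathbb{C}^+)$ one has $\Im\int\overline{\delta m}\,\partial_x\delta m\,dx=\int_0^\infty\xi|\widehat{\delta m}(\xi)|^2\,d\xi\ge 0$, and $\Im(1/m)<0$ would give the good sign. But $\Im(1/m)$ varies with $x$, so the favourable-sign argument becomes a commutator/weighted estimate that you do not carry out. Route (b), the characteristic pushforward with a Jacobian bound $|\partial_{z_0}Z|\ge c(t)$ and contour convexity, is a genuinely different strategy and attractive, but you neither establish a caustic-free window nor quantify $c(t)$, so it remains a sketch. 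For comparison, the paper closes the obstruction by \emph{adding a hypothesis}: its Proposition~6 assumes a Bernstein (Paley--Wiener) bound $\|\partial_x\delta m\|_{L^2(\Gamma_y)}\le M\|\delta m\|_{L^2(\Gamma_y)}$, which is precisely the $H^1$-to-$L^2$ control you say is needed, and then applies Cauchy--Schwarz. That assumption does not appear in the statement of Proposition~\ref{prop:stability_bound} itself, so the paper's own argument is conditional in the same spot you flagged — your diagnosis is accurate — but your write-up, as it stands, also does not deliver an unconditional proof of the stated inequality.
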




\subsection{Heuristic Motivation}
\label{sxn:heuristic_derivation}

While our method is ultimately constructed via \Cref{thm:NicaSpeicher}, it is illustrative to consider the behavior of the Stieltjes transforms of successive matrices $\tens{A}_n$ in our sequence. 
In what follows, we will proceed informally, roughly following the reasoning outlined in \cite{taoblog}. 
This heuristic approach is to aid the reader's intuition by viewing the evolution equations in terms of the Schur complement, to motivate the full proof in \Cref{app:proofs}.
We let $\vect{e}_i$ denote the $i$-th column basis vector. Taking two successive matrices in our sequence $\tens{A}_n$, we exploit properties of the Schur complement and the Woodbury matrix identity, to obtain the relationship between the trace of their inverses
\begin{align}
    \tr (\tens{A}_{n+1} - z \tens{I})^{-1} = \tr (\tens{A}_{n} - z \tens{I})^{-1} + \frac{\vect{e}_{n+1}^{\intercal} (\tens{A}_{n+1} - z \tens{I})^{-2} \vect{e}_{n+1}}{\vect{e}_{n+1}^{\intercal} (\tens{A}_{n+1} - z \tens{I})^{-1} \vect{e}_{n+1}}.
\end{align}
Assuming the rows and columns of $\tens{A}$ have already been randomly permuted, then the distribution of each row and column of $(\tens{A}_{n+1} - z \tens{I})^{-1}$ is the same. Therefore, in expectation, 
\begin{equation*}
    \mathbb{E} \left[\vect{e}_{n+1}^{\intercal} (\tens{A}_{n+1} - z \tens{I})^{-1} \tens{e}_{n+1} \right] =
    \frac{1}{n+1} \sum_{i=1}^{n+1} \mathbb{E} \left[\vect{e}_i^{\intercal} (\tens{A}_{n+1} - z \tens{I})^{-1} \vect{e}_i \right] =
    \frac{1}{n+1} \mathbb{E} \left[\tr (\tens{A}_{n+1} - z \tens{I})^{-1} \right].
\end{equation*}
The same holds for $\vect{e}_{n+1}^{\intercal} (\tens{A}_{n+1} - z \tens{I})^{-2} \vect{e}_{n+1}$. This is effectively a Hutchinson trace estimation procedure \citep{BKS07}, and we note that the \emph{variance} of the Hutchinson estimators become small for large $n$ as well \citep{fred_hutchinson}. 
Assuming this and noting that $\tr(\tens{A}_{n+1} - z \tens{I})^{-2} = \frac{d}{dz}\tr(\tens{A}_{n+1} - z \tens{I})^{-1}$, normalizing by $n+1$ we deduce the difference equation
\begin{align}
\label{eq:StieltjesDiff}
    m_{\tens{A}_{n+1}}(z) - m_{\tens{A}_n}(z) = n^{-1} \left(- m_{\tens{A}_n}(z) + m_{\tens{A}_n}^{-1}(z) m_{\tens{A}_n}'(z) \right) + \mathcal{O}(n^{-2}).
\end{align}
For $n$ large, provided $e^{t}\approx n$ for $t>0$ so that $\frac{\partial}{\partial t}m(e^{t},z)\approx n[m_{\tens{A}_{n+1}}(z)-m_{\tens{A}_n}(z)]$, (\ref{eq:StieltjesDiff}) is well approximated by the differential equation for the Stieltjes transform $m_{\tens{A}}(t,z)$ given by (\ref{eq:StieltjesPDE}).


\section{Approximating the Stieltjes Transform}
\label{sec:approximation}

\begin{figure}
    \includegraphics[width=\textwidth]{\figdir/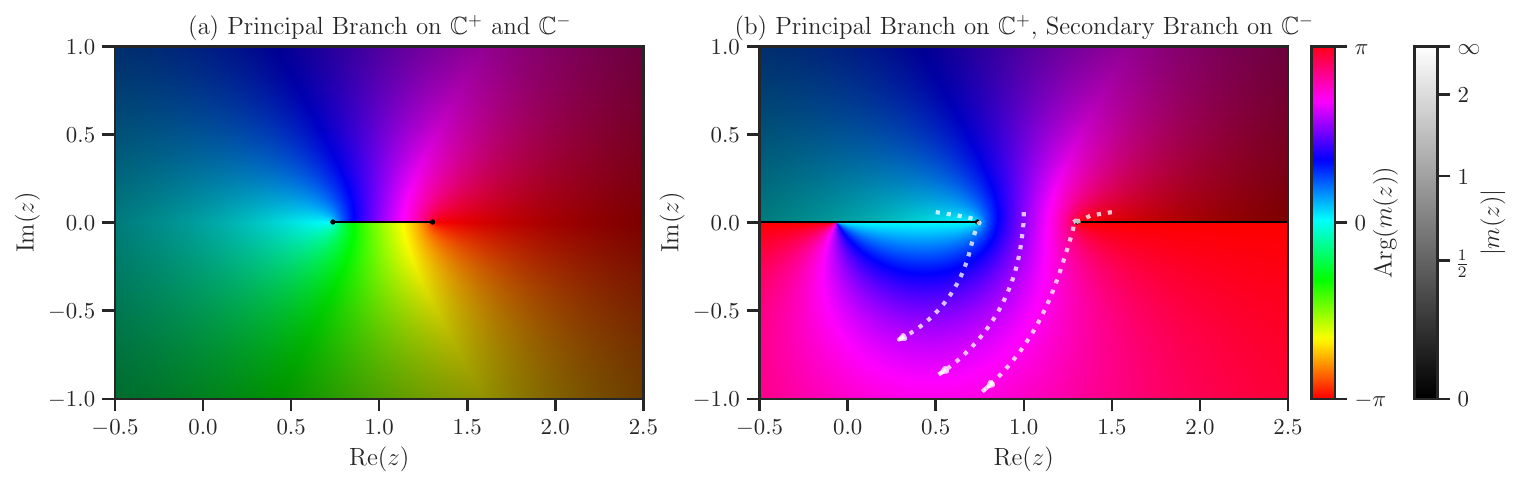}
    \ifjournal
    \vspace{-5mm}
    \fi
    \caption{Analytic continuation of the Stieltjes transform of a Marchenko--Pastur distribution (with ratio $\lambda=\frac{1}{50}$). (a) The principal branch contains a branch cut along the support of the spectral density, while (b) the secondary branch is continuous in this region. Curves (white) highlight the evolution of $t \mapsto \phi(t,z)$ over $t \in [0,4]$, crossing the support of the spectral density.}
    \label{fig:mp-stieltjes}
\end{figure}

In principle, \Cref{prop:MainPDE,prop:CharCurve} suggest a straightforward approach to estimating the Stieltjes transform under free decompression, but these results disguise two main practical challenges:
\begin{enumerate}[leftmargin=*,noitemsep,nolistsep]

\item The Stieltjes transform is singular on the support of the spectral density, requiring analytic continuation in a way that does not disrupt the complex transport flow of \eqref{eq:StieltjesPDE}. Without imposing additional properties, numerical analytic continuation is notoriously ill-posed \citep{trefethen2023numerical}. 

\item The second challenge is accurately approximating the initial Stieltjes transform $m(z)$ from the submatrix eigenvalues. This is a well-studied problem, but as we will see, the nature of the PDE \eqref{eq:StieltjesPDE} necessitates accomplishing this task to a much higher degree of precision than is typically required. 
\end{enumerate}

\begin{algorithm}[t!]
\caption{Pseudocode for Free Decompression}
\label{algo:fd}
\SetKwInOut{Input}{Input}
\SetKwInOut{Output}{Output}
\DontPrintSemicolon

\vspace{1mm}
\Input{random submatrix \(\tens{A}_{n_s}\in\mathbb{R}^{n_s\times n_s}\) of Hermitian matrix $\tens{A}\in\mathbb{R}^{n \times n}$ ;\\
        spectral smoothing function \texttt{EigApprox}; gluing function \texttt{Glue}; \\
        Stieltjes transform approximation \texttt{Stieltjes}, perturbation $\delta$
        }
\vspace{1mm}
\Output{Estimated spectral density values \(\{\hat{\rho}(x)\}_{x\in \mathcal{X}}\).}

\vspace{2mm}
Compute the spectrum of $\tens{A}_{n_s}$, \(\{\lambda_i\}_{i=1}^{n}\). \hspace{.5cm} \tcp*{Step 1: Compute spectrum}

$S_j = \texttt{EigApprox}(\{\lambda_i\}_{i=1}^{n})$ \hspace{.5cm} \tcp*{Step 2: Estimate smoothed spectral density}

$G_k = \texttt{Glue}(\{S_j\}_{i=1}^n)$ \hspace{.5cm} \tcp*{Step 3: Estimate glue function}

$m(z) = \texttt{Stieltjes}(S_j, G_j).$\hspace{.5cm} \tcp*{Step 4: Compute the Stieltjes transform}

\vspace{1mm}

\ForEach{\(x \in \mathcal{X}\)}{
    Find \(z_x\in \mathbb{C}^{-}\) such that
    $x + i\delta = z_x - \frac{\frac{n}{n_s}-1}{m(z_x)}.$\hspace{.5cm} \tcp*{Step 5: Solve characteristics}
}

\Return \(\left\{ \frac{1}{\pi} \Im \, m(z_x)\,\frac{n_s}{n} \right\}_{x \in \mathcal{X}}\).\hspace{.5cm} \tcp*{Step 6: Return the estimated spectral density}
\end{algorithm}


\subsection{Analytic Continuation}

The Stieltjes transform \(m\) is typically defined on the upper half-plane \(\mathbb{C}^+ \coloneqq \{z : \Im(z) > 0\}\) and extended to the lower half-plane \(\mathbb{C}^{-}\) via the Schwarz reflection principle, \(m(\overline{z}) = \overline{m(z)}\). This defines the \emph{principal branch}, which exhibits a branch cut along the real axis on \(I := \supp(\rho)\). However, the curves \(\phi(t, z)\) traced by the PDE dynamics descend from \(\mathbb{C}^+\) into \(\mathbb{C}^-\), crossing the real axis precisely through \(I\), where the principal branch becomes discontinuous. This is formalized in the following proposition.

\begin{proposition}
\label{prop:crossMain}
    Let the curve \(t \mapsto \phi(t, z)\) be the solution of characteristic curve given in \Cref{prop:CharCurve}. Then, for each \(z \in \mathbb{C}^+\), there exists \(t^{\ast} > 0\) such that \(\Im\phi(t^{\ast}, z) = 0\). In particular, \(\Re \phi(t^{\ast}, z) \in I\). 
\end{proposition}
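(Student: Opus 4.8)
The plan is to reduce the claim, via the defining relation of $\phi$ in \Cref{prop:CharCurve}, to a scalar real equation that can be solved by the intermediate value theorem. Write $z = a + ib$ with $b = \Im z > 0$, and set $h(x) := \mathrm{p.v.}\!\int \frac{\rho(y)}{y-x}\,\dd y$, so that the non-tangential boundary value of $m_0$ on the interior of $I = \supp(\rho)$ is $m_0(x+i0) = h(x) + i\pi\rho(x)$ (Sokhotski--Plemelj). From \Cref{prop:CharCurve}, $\phi(\cdot,z)$ is the branch through $\phi(0,z) = z$ and satisfies $z = \phi(t,z) - (e^t-1)\,m_0(\phi(t,z))^{-1}$, with $m(t,z) = m_0(\phi(t,z)) e^{-t}$; while $\phi(t,z) \in \mathbb{C}^+$ this uses the principal branch of $m_0$, which is analytic and nonvanishing there (since $\Im m_0 > 0$ on $\mathbb{C}^+$), so $\phi(\cdot,z)$ stays well-defined and continuous. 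Hence it suffices to produce a single $t^\ast > 0$ with $\phi(t^\ast,z) \in \mathbb{R}$. Note also that if $\phi(t^\ast,z) = \xi \in \mathbb{R}$, then $z = \xi - (e^{t^\ast}-1)\,m_0(\xi)^{-1}$; were $\xi \notin I$, then $m_0(\xi) \in \mathbb{R}\setminus\{0\}$ and the right-hand side would be real, contradicting $b > 0$. So necessarily $\xi \in I$, which already yields the ``in particular'' clause, and it remains only to produce a crossing.

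Approaching $\xi \in \mathrm{int}(I)$ with $\rho(\xi) > 0$ from $\mathbb{C}^+$ uses $m_0(\xi+i0) = h(\xi) + i\pi\rho(\xi)$, so the existence of a crossing with $\phi(t^\ast,z) = \xi$ is equivalent to solving
\[
  a + ib = \xi - (e^{t^\ast}-1)\bigl(h(\xi) + i\pi\rho(\xi)\bigr)^{-1}
\]
for $(\xi, t^\ast) \in \mathrm{int}(I)\times(0,\infty)$. The imaginary part gives $e^{t^\ast}-1 = \frac{b\,(h(\xi)^2 + \pi^2\rho(\xi)^2)}{\pi\rho(\xi)}$, which is strictly positive whenever $\rho(\xi) > 0$, hence determines a genuine $t^\ast > 0$; substituting this into the real part collapses the system to the single scalar equation $F(\xi) := \xi - \frac{b}{\pi}\cdot\frac{h(\xi)}{\rho(\xi)} = a$.

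The crux is that $F$ attains every real value. When $I = [\alpha,\beta]$ is an interval I would argue this by the intermediate value theorem: under mild regularity of $\rho$ (positive and, say, H\"older on $(\alpha,\beta)$, so $h$ is continuous there by the Plemelj theory), $F$ is continuous on $(\alpha,\beta)$, and an edge analysis gives $F(\xi) \to +\infty$ as $\xi \to \beta^-$ and $F(\xi) \to -\infty$ as $\xi \to \alpha^+$. Indeed, writing $h(\xi) = \rho(\xi)\ln\frac{\beta-\xi}{\xi-\alpha} + \int\frac{\rho(y)-\rho(\xi)}{y-\xi}\,\dd y$, one has $h(\xi) \to -\infty$ at $\beta$ when $\rho(\beta) > 0$ (a hard edge), whereas if $\rho(\beta) = 0$ (a soft edge) then $h(\beta) = \int\frac{\rho(y)}{y-\beta}\,\dd y$ is finite and negative and $h(\xi)/\rho(\xi) \to -\infty$; in either case $F(\xi) \to +\infty$, and symmetrically at $\alpha$. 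Thus $F$ is onto $\mathbb{R}$, giving $\xi \in (\alpha,\beta)$ with $F(\xi) = a$; back-substitution confirms the pair $(\xi, t^\ast)$ solves $\Phi(t^\ast, z, \xi) = 0$. Finally, to see $\xi$ lies on the branch of $\phi(\cdot,z)$ through $\phi(0,z) = z$, one tracks $\phi$ along the characteristic $s \mapsto \xi - (e^s-1)\bigl(h(\xi)+i\pi\rho(\xi)\bigr)^{-1}$, which has strictly positive imaginary part for $s \in (0, t^\ast]$ and reaches $z$ at $s = t^\ast$; by the local uniqueness of \Cref{prop:CharCurve} this forces $\phi(t^\ast,z) = \xi$. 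Together with the first paragraph, $\Im\phi(t^\ast,z) = 0$ and $\Re\phi(t^\ast,z) = \xi \in I$, as claimed.

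The step I expect to be the main obstacle is this edge analysis of $F$ and the surjectivity claim: making it robust without strong regularity assumptions on $\rho$, and extending it to disconnected $\supp(\rho)$, where $F$ need not be onto $\mathbb{R}$ on a single component and one must instead control which component receives the crossing point, or argue topologically that the characteristics emanating from $I$ cover all of $\mathbb{C}^+$. The branch-consistency check for $\phi$ is a secondary technical point, routine once one knows the characteristics do not cross before meeting the real axis.
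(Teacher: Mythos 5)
Your argument for the ``in particular'' clause---that a crossing point $\xi \notin I$ would force $z = \xi - (e^{t^*}-1)\,m_0(\xi)^{-1}$ to be real via Sokhotski--Plemelj---coincides with the paper's proof of that half (\Cref{prop:cross}). For the existence of the crossing time $t^*$, however, you take a genuinely different and considerably heavier route. The paper (\Cref{prop:descent}) stays entirely in the upper half-plane: writing $w = -1/m_0$, which is a Herglotz map $\mathbb{C}^+ \to \mathbb{C}^+$, and taking imaginary parts of the defining relation $z = \phi(t) - (e^t-1)\,w(\phi(t))\cdot(-1)^{-1}$ (equivalently $z=\phi(t)+(e^t-1)w(\phi(t))$ with $w=-1/m_0$) gives $\Im\phi(t) = \Im z - (e^t-1)\,\Im w(\phi(t))$. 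Since $\Im w(\phi(t))>0$ so long as $\phi(t)\in\mathbb{C}^+$ and the factor $e^t-1$ grows without bound, the imaginary part is forced to zero at a finite $t^*$. This uses nothing beyond $m_0$ being Herglotz with compactly supported spectral measure---no positivity, H\"older continuity, or connectedness of $I$ is needed.

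Your approach instead constructs the crossing point $\xi$ explicitly by reducing the system to the scalar equation $F(\xi)=a$ with $F(\xi)=\xi-\tfrac{b}{\pi}\,\mathcal{H}[\rho](\xi)/\rho(\xi)$. The algebra is correct, but the surjectivity of $F$ is where the argument becomes fragile, as you anticipate: it needs $\rho>0$ and H\"older on $\mathrm{int}(I)$ (so that $\mathcal{H}[\rho]$ is continuous and the edge divergence rates behave), and it assumes $I$ is a single interval. These are not in the proposition's hypotheses, and the paper's imaginary-part argument avoids them entirely. The branch-consistency step also needs more than ``local uniqueness'': the characteristic through $\xi$ is launched from a boundary point of the domain on which $m_0$ is analytic, so \Cref{prop:CharCurve} does not directly apply there, and one would need a limiting argument from within $\mathbb{C}^+$ or a global injectivity statement for the characteristic flow to rule out the curve from $z$ meeting $\mathbb{R}$ earlier at some other point. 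In short, your route would prove a stronger statement (an explicit crossing location) under stronger hypotheses, whereas the paper proves exactly what is asserted with minimal assumptions by attacking $\Im\phi$ directly; for this proposition the latter is clearly the right tool.
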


\Cref{prop:crossMain} is proven in \Cref{sec:char-cross}, and this crossing necessitates defining a new branch of the transform---equal to the principal branch in \(\mathbb{C}^+\)---that extends analytically into \(\mathbb{C}^-\) and remains holomorphic across the interior \(I^\circ\). The existence of such a continuation is guaranteed by the principle of analytic continuation: since \(m\) is holomorphic in \(\mathbb{C}^+\) and has no singularities on \(I\), it can be extended across the cut into \(\mathbb{C}^-\). We refer to this extension as the \emph{secondary branch}.

\Cref{fig:mp-stieltjes} illustrates this continuation for the Marchenko--Pastur law, showing both branches. To construct the secondary branch, we introduce a \emph{gluing function} \(G(z)\), which compensates for the discontinuity across \(I^\circ\). In \Cref{sec:continuation}, we show that this function can be accurately approximated by a low-degree rational function. When the analytic form of the Stieltjes transform is known, this construction is exact, as shown in \Cref{tab:law-stieltjes}. In empirical settings---when the density is estimated from the eigenvalues of a submatrix---the same approximation still enables accurate continuation.

In the sequel, all references to the Stieltjes transform \(m\) implicitly refer to this secondary, analytically continued branch.


\subsection{Approximating the Empirical Spectral Density}

In light of \Cref{prop:crossMain}, free decompression cannot be applied---even in a weak sense---to the empirical Stieltjes transform, $\hat m_{\tens{A}_n}(z) = \sum_{i=1}^n\frac{1}{\lambda_i - z}$. Instead, we must work with the Stieltjes transform of a smoothed approximation to the empirical spectral distribution. For the synthetic examples we consider, the spectral densities of interest exhibit square-root behavior at the edges of their support, as shown in \Cref{tab:law-dist}, making Chebyshev polynomials a natural basis for approximation.

However, for real datasets, which may exhibit more complex or irregular spectral behavior, it is often preferable to use the more flexible class of Jacobi polynomials. These introduce two tunable hyperparameters \((\alpha,\beta)\) and recover Chebyshev polynomials when \((\alpha,\beta) = (\tfrac{1}{2}, \tfrac{1}{2})\). To fit these polynomials, we found that Galerkin projections of kernel density estimates were often necessary, using Gaussian or Beta kernels. Once the spectral density is approximated in the Jacobi basis, the Stieltjes transform is computed directly using Gauss--Jacobi quadrature \citep[Section 3.2.2]{SHEN-2011}. Details of these procedures and a full algorithmic description of our implementation can be found in \Cref{app:ESDapprox}. Pseudocode for the full free decompression method is provided in \Cref{algo:fd}.


\section{Numerical Examples} \label{sec:numerics}

We now present numerical examples that demonstrate the utility of our free decompression method. 
To begin, we consider a number of synthetic examples for random matrices whose spectral densities and their corresponding Stieltjes transforms have known analytic expressions. We then consider two evaluations on real-world large-scale datasets: Facebook Page-Page network \citep{SNAP-2014,ROZEMBERCZKI-2021}; and the empirical Neural Tangent Kernel (NTK) corresponding to ResNet50 \citep{he2016deep} trained on CIFAR10 \citep{krizhevsky2009learning}. All experiments were conducted on a consumer-grade device with an AMD Ryzen\textregistered 7 5800X processor, NVIDIA RTX 3080, and 64GB RAM. Hyperparameters used in each experiment are listed in \Cref{app:software}.

\paragraph{Synthetic Experiments:} \label{sec:synthetic}
We consider five families of distributions that commonly arise as spectral distributions in random matrix theory: the \cite{WIGNER-1955} semicircle law (free Gaussian); \cite{MARCHENKO-1967} (free Poisson); \cite{KESTEN-1959} and \cite{MCKAY-1981} (free binomial); \cite{WACHTER-1978} (free Jacobi); and the general free-Meixner family \citep{SAITOH-2001,ANSHELEVICH-2008} (free analogs of the classical \cite{MEIXNER-1934} laws). In \Cref{tab:law-dist}, we list, for each law, its absolutely continuous density \(\rho(x)\) supported on \([\lambda_{-}, \lambda_{+}]\), and any point masses. Immediately following, \Cref{tab:law-generators} summarizes the simplest random-matrix or combinatorial constructions realizing these laws in practice. 
\Cref{fig:mp-free} depicts our free decompression procedure for the Marchenko--Pastur case. We sampled $\tens{X} \in\mathbb{R}^{n \times d}$, \(n=\num{1000}\), \(d=\num{50000}\), with each $X_{ij}\sim \mathcal{N}(0,1)$. The left panel shows the empirical density of $\tens{A} = \frac{1}{d} \tens{X} \tens{X}^{\intercal}$ as well as our density approximation using Beta kernels projected onto 50-degree Chebyshev polynomials. The center panel shows a range of densities estimated by free decompression. The right panel compares our freely decompressed approximation of the spectral distribution of $\tens{A}\in\mathbb{R}^{n \times n}$, \(n=\num{32000}\), with the known exact Marchenko--Pastur law with ratio $\lambda = \frac{32}{50}$. The principal and secondary branches are compared in \Cref{fig:mp-stieltjes}.
The remaining models are treated in \Cref{sec:rm-models}, with comparison of the expected log-determinants and distributional distances.

\begin{figure} [t]
    \includegraphics[width=\textwidth]{\figdir/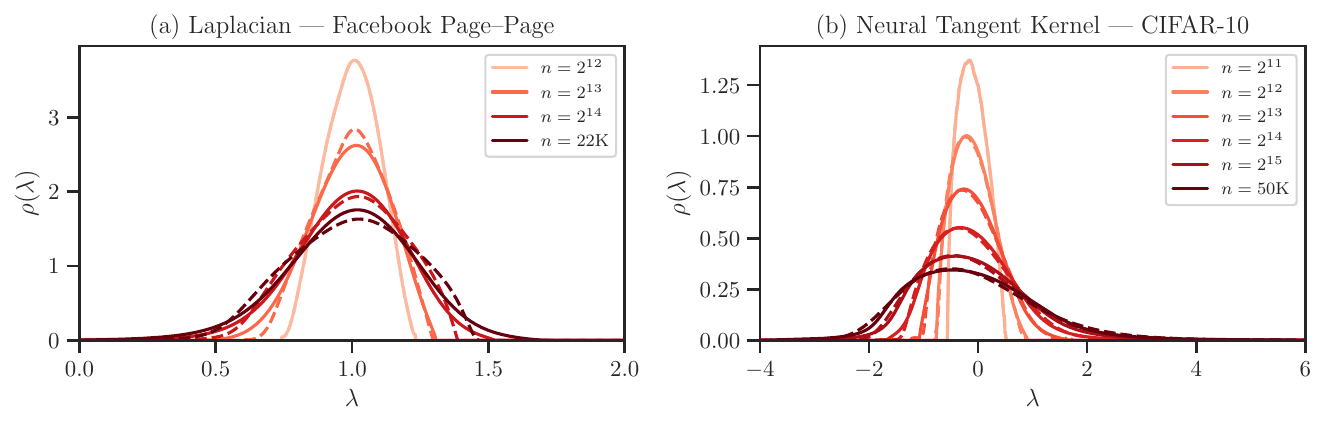}
    \ifjournal
    \vspace{-5mm}
    \fi
    \caption{(a) Empirical spectral densities (dashed) of the symmetrically normalized Laplacian matrix of the SNAP Facebook dataset, compared against estimates obtained via free decompression from an initial submatrix of size \(2^{12}\) (solid).
    (b) Empirical spectral densities (dashed) of submatrices of the log-NTK matrix of size \(\num{50000}\), computed from the CIFAR-10 dataset using a ResNet-50 model, and estimates obtained via free decompression from an initial submatrix of size \(2^{11}\) (solid).}
    \label{fig:ntk_eigs}
\end{figure}

\paragraph{SNAP Facebook Dataset:} \label{sec:facebook}
We use the publicly available Facebook Page--Page network from SNAP \citep{SNAP-2014,ROZEMBERCZKI-2021}, where nodes represent verified Facebook pages (e.g., politicians, musicians) and edges indicate mutual likes. This graph is undirected with \(\num{22470}\) nodes and \(\num{171002}\) edges. We study the eigenvalue distribution of the symmetrically normalized Laplacian matrix of the graph (see \eg \cite[p. 16, Definition 4]{MAHONEY-2016}), which yields a compact spectrum in \([0, 2]\). To avoid artificial spikes caused by degree-1 leaf nodes (which concentrate eigenvalues at \(\lambda = 1\)), we apply a small Erd\H{o}s--R\'{e}nyi perturbation graph \(\mathcal{G}(n, p)\) at the connectivity regime where \(p = \frac{1}{n} \log(n)\) \citep{HUANG-2015} to the adjacency matrix of the graph before constructing the Laplacian.
Comparison of the empirical spectral densities of randomly sampled submatrices with their free decompression approximations is shown in the left panel of \Cref{fig:ntk_eigs}.

\ifjournal
    \begin{wraptable}{t!}{0.66\textwidth}
    \vspace{-6.8mm}
\else
    \begin{table}[t!]
\fi
\caption{Comparison of runtime of direct computation of spectral density versus the free decompression (FD) of the NTK dataset, and accuracy in terms of statistical distance and moments.}
\label{tab:complexity}
\vspace{1mm}
\centering
\begin{tabular}{lrrrrrr}
    
    \toprule
    Size & \multicolumn{2}{c}{Process Time (sec)} & \multicolumn{2}{c}{Divergences} & \multicolumn{2}{c}{Rel. Error} \\
    \cmidrule(lr){2-3} \cmidrule(lr){4-5} \cmidrule(lr){6-7}
    \(n_s\) & Direct & FD (ours) & TV & JS & \(\mu_{1}\) & \(\mu_{2}\) \\
    \midrule
    \(2^{11}\) &    10.2 & 10.2 + 0.00 & 0.0\% & 0.0\% & 0.0\% & 0.0\% \\
    \(2^{12}\) &    50.9 & 10.2 + 54.2 & 1.2\% & 3.7\% & 0.4\% & 0.3\% \\
    \(2^{13}\) &   358.9 & 10.2 + 56.6 & 1.9\% & 5.2\% & 0.9\% & 0.2\% \\
    \(2^{14}\) &  2820.2 & 10.2 + 54.9 & 2.4\% & 5.8\% & 0.9\% & 0.1\% \\
    \(2^{15}\) & 20451.2 & 10.2 + 61.9 & 2.6\% & 5.8\% & 1.2\% & 0.5\% \\
    \(50\)K    & 67331.1 & 10.2 + 16.2 & 2.9\% & 5.5\% & 2.4\% & 0.4\% \\
    \bottomrule
\end{tabular}
\ifjournal
\else
    \end{table}
\fi

\ifjournal
    \vspace{-3.5mm}
    \end{wraptable}
\fi

\paragraph{Neural Tangent Kernel:} \label{sec:ntk}
Finally, we consider the spectrum of an empirical NTK derived from a ResNet50 network trained on CIFAR10. The NTK is a well-studied object in machine learning \citep{jacot18,novak2022fast}; see \Cref{sec:ntk-bg} for background. This example is particularly challenging where na\"ive approaches fail. The empirical NTK is known to be highly ill-conditioned for well-trained multi-class models \citep{ameli2025determinant}. When the NTK is constructed in blockwise fashion, classification models with \(c\) classes often exhibit a low-rank structure, with approximately \(1/c\) of the eigenvalues being extremely small. To address this, we project the NTK matrix onto its non-null eigenspace, removing the near-zero eigenvalues, and then reconstruct a Hermitian matrix in the reduced space. Subsamples are drawn from this reduced matrix. Specifically, we took \(\hat{\tens{A}}\) to be the full \(\num{55560} \times \num{55560}\) matrix (corresponding to \(\num{5556}\) images), and reduced this to a \(\num{50000} \times \num{50000}\) matrix \(\tens{A}\) after removing the null component. Submatrices of dimension \(\num{2048}, \num{4096}, \num{8192}, \num{16382},\) and \(\num{32768}\) were then sampled, and their empirical spectral distributions are shown as dashed lines in the right panel of \Cref{fig:ntk_eigs}. Free decompression approximations based on the Stieltjes transform of the \(\num{2048} \times \num{2048}\) submatrix (see \Cref{fig:ntk_stieltjes}) are shown as solid lines. \Cref{tab:complexity} reports the corresponding runtimes and errors in total variation (TV) distance, Jensen-Shannon (JS) divergence, and the relative error in the first and second moments.



\section{Conclusion}
\label{sec:conlusion}

In modern settings, it is becoming increasingly common that the spectrum information of a matrix is required, even when the matrix is too large to fit into memory, or is otherwise unavailable. These matrices comprise the category of \emph{impalpable matrices}, offering unique challenges to numerical linear algebra practitioners. We have outlined \emph{free decompression}, a novel method of approximating the spectral density of a large impalpable Hermitian matrix from the spectrum of a randomly sampled submatrix. Our initial implementation of this method shows promise as a tool to extract accurate information about the spectrum of extremely large matrices that would otherwise be unavailable. We hope that this work motivates other researchers to further develop these and similar tools to further unlock information in impalpable matrices that is otherwise out of~reach.


\paragraph{Limitations.} 

While our method imposes few restrictions on the large matrix of interest, the quality of approximation depends strongly on asymptotic estimates of the spectral density of $\tens{A}_n$. While promising for future research, performing free decompression accurately remains highly challenging, with \Cref{prop:stability_bound} highlighting the need for accurate spectral density estimation and analytic continuation. Without a general approach to estimate and operate on the Stieltjes transform on log-scale, that is, estimating $z \mapsto m(e^z)$, highly ill-conditioned matrices still remain out of reach. At present, the implementation of our method is sensitive to the quality of approximation of the gluing function; and, while our approach works flawlessly in simple scenarios, it becomes more delicate with real data. Nevertheless, we stress that despite these limitations, no competing alternative exists for estimating the spectral density of impalpable matrices.


\newcommand{\acknowledgmenttext}
{
    LH is supported by the Australian Research Council through a Discovery Early Career Researcher Award (DE240100144). MWM acknowledges partial support from DARPA, DOE, NSF, and~ONR. This work was supported in part by the U.S. Department of Energy, Office of Science, Office of Advanced Scientific Computing Research's Applied Mathematics Competitive Portfolios program under Contract No. AC02-05CH11231.
}

\ifjournal
   \begin{ack}
   \acknowledgmenttext
   \end{ack}
\else
    \vspace{0.3cm}
    \noindent\textbf{Acknowledgments.}
    \acknowledgmenttext
\fi


\bibliographystyle{apalike2}
\bibliography{refs}



\newpage
\appendix

\begin{appendices}

\addtocontents{toc}{\protect\setcounter{tocdepth}{2}}  
\tableofcontents


\section{Nomenclature} \label{sec:nomenclature}

We use boldface lowercase letters for vectors, boldface upper case letters for matrices, and normal face letters for scalars, including the components of vectors and matrices. \Cref{tab:nomenclature} summarizes the main symbols and notations used throughout the paper, organized by context. 

\begin{table}[ht!]
\centering

\caption{Common notations used throughout the manuscript.} 
\label{tab:nomenclature}

\begin{tabular}{lll}
    
    \toprule
    Context & Symbol & Description \\
    \midrule
    Free Probability
                  & \(\rho\) & Absolutely continuous empirical spectral density, \(\mathbb{R} \to \mathbb{R}^{+}\) \\
                  & \(I\) & Compact support interval of density \(\rho\), \(I = [\lambda_{-}, \lambda_{+}] \subsetneq \mathbb{R}\) \\
                  & \(m\) & Stieltjes transform of density, \(m(z) = \int_{\mathbb{R}} \frac{\rho(y)}{y - z} \, \mathrm{d} y\), \(\mathbb{C} \setminus I \to \mathbb{C}\) \\
                  & \(\mathcal{H}[\rho]\) & Hilbert transform\protect\footnotemark{} of density, \(\mathcal{H}[\rho](x) = \operatorname{p.v.} \int_{\mathbb{R}} \frac{\rho(y)}{y - x} \, \mathrm{d} y\), \(\mathbb{R} \to \mathbb{R}\) \\
                  & \(R\) & Voiculescu's R-transform, \(R(z) = \omega(-z) - \frac{1}{z}\) with \(\omega(m(z)) = z\) \\
    \addlinespace[1mm]  
    \arrayrulecolor{gray!30}\cline{1-3} 
    \addlinespace[1mm]
    \arrayrulecolor{black}
    \multirow{2}{*}{\makecell[l]{Free\\Decompression}}
                  & \(n\) & Size of the target (large) matrix \\
                  & \(n_s\) & Size of the sampled (small) sub-matrix, \(n_s < n\) \\
                  & \(t\) & Decompression scale, \(t = \log \big(\frac{n}{n_s} \big)\) \\
                  & \(\rho_0, m_0\) & Density and Stieltjes transform at \(t=0\) \\
    \addlinespace[1mm]  
    \arrayrulecolor{gray!30}\cline{1-3} 
    \addlinespace[1mm]
    \arrayrulecolor{black}
    Notation     & \(\mathbb{C}^{\pm}\) & Upper (lower) half complex plane \\
                 & \(\Re, \Im\) & Real and imaginary part of complex variable \\
                 & \(\operatorname{p.v.}\) & Cauchy principal value \\
    \bottomrule
\end{tabular}
\end{table}

\footnotetext{This definition differs by a factor of \(-1/\pi\) from the more common definition \(\frac{1}{\pi} \operatorname{p.v.} \int \frac{\rho(y)}{x - y} \, \mathrm{d} y\).}


\section{Overview of Random Matrix Ensembles} \label{sec:rm-models}

In this appendix, we collect several canonical spectral laws realized in random matrix or combinatorial models; these will serve as the ground‐truth densities in our analytic benchmarks.

Concretely, we consider the \cite{WIGNER-1955} semicircle law (free Gaussian), \cite{MARCHENKO-1967} (free Poisson), \cite{KESTEN-1959} and \cite{MCKAY-1981} (free binomial), \cite{WACHTER-1978} (free Jacobi), and the general free-Meixner family \citep{SAITOH-2001,ANSHELEVICH-2008} (free analogs of the classical \cite{MEIXNER-1934} laws).

In \Cref{tab:law-dist}, we list, for each law, its absolutely continuous density \(\rho(x)\) supported on \(I = [\lambda_{-}, \lambda_{+}]\), and any point masses. Immediately following, \Cref{tab:law-generators} summarizes the simplest random-matrix or combinatorial constructions realizing these laws in practice.

\begin{table}[ht!]
\caption{Spectral laws used in our analytic benchmarks, along with their free-probability analogs, absolutely continuous density functions \(\rho(x)\), compact support intervals \([\lambda_{-}, \lambda_{+}]\), and any point-mass atoms.}
\label{tab:law-dist}
\vspace{0.07in}
\centering
\begin{adjustbox}{width=1\textwidth}
\small
\begin{tabular}{lllll}
    
    \toprule
    Distribution & Free Corresp. & Abs. Cont. Density \(\rho(x)\) & Support \(\lambda_{\pm}\) & Number of Atoms \\
    \midrule
    Wigner semicircle & Free Gaussian & \(\displaystyle{\frac{2\sqrt{r^2 - x^2}}{\pi r^2}}\) & \(\pm r\) & None \\
    \addlinespace[1mm]
    \arrayrulecolor{gray!30}\specialrule{0.3pt}{0pt}{0pt}
    \addlinespace[1mm]
    Marchenko--Pastur & Free Poisson & \(\displaystyle{\frac{\sqrt{(\lambda_{+} - x)(x - \lambda_{-})}}{2 \pi \lambda x}}\) & \((1 \pm \sqrt{\lambda})^2\) & \((1 - \frac{1}{\lambda})\delta(x)\) if \(\lambda > 1\) \\
    \addlinespace[1mm]
    \arrayrulecolor{gray!30}\specialrule{0.3pt}{0pt}{0pt}
    \addlinespace[1mm]
    Kesten--McKay & Free Binomial & \(\displaystyle{\frac{d \sqrt{4(d-1) - x^2}}{2 \pi (d^2 - x^2)}}\) & \(\pm 2 \sqrt{d-1}\) & None \\
    \addlinespace[1mm]
    \arrayrulecolor{gray!30}\specialrule{0.3pt}{0pt}{0pt}
    \addlinespace[1mm]
    Wachter & Free Jacobi & \(\displaystyle{\frac{(a+b) \sqrt{(\lambda_{+} - x) (x - \lambda_{-})}}{2 \pi x (1-x)}}\) & \(\left( \displaystyle{\frac{\sqrt{b} \pm \sqrt{a (a+b-1)}}{a + b}} \right)^2\) & \(x=0, 1\) \\
    \addlinespace[1mm]
    \arrayrulecolor{gray!30}\specialrule{0.3pt}{0pt}{0pt}
    \addlinespace[1mm]
    Meixner & Free Meixner & 
    \(\displaystyle{\frac{c\sqrt{4b - (x-a)^2}}{2 \pi ((1-c) x^2 + acx + bc^2)}}\)
    & \(a \pm 2 \sqrt{b}\) & 
    \makecell[l]{At most two}
    \\
    \addlinespace[1mm]
    \arrayrulecolor{black}
    \bottomrule
\end{tabular}
\end{adjustbox}
\end{table}

\begin{table}[ht!]
\caption{Empirical realizations of the spectral laws listed in \Cref{tab:law-dist}. The matrix \(\tens{X}\) is assumed to have i.i.d.\ standard normal entries.}
\label{tab:law-generators}
\vspace{0.07in}
\centering
\begin{adjustbox}{width=1\textwidth}
\small
\begin{tabular}{lll}
\toprule
Distribution & Matrix or Combinatorial Model & Parameters \\
\midrule
    Wigner semicircle & Gaussian orthogonal ensemble \(\frac{1}{\sqrt{2}} (\tens{X} + \tens{X}^{\intercal})\) & \(r = 2 \sqrt{n}\) \\
    & Adjacency matrix of Erd\H{o}s--R\'{e}nyi graph \(G(n, p)\) & \(pn = \mathcal{O}(\log(n))\) \\
    \addlinespace[1mm]
    \arrayrulecolor{gray!30}\specialrule{0.3pt}{0pt}{0pt}
    \addlinespace[1mm]
    Marchenko--Pastur & Sample covariance (Wishart) \( \frac{1}{d} \tens{X} \tens{X}^{\intercal} \), \( \tens{X} \in \mathbb{R}^{n \times d} \) & \(\lambda = \frac{n}{d}\) \\
    \addlinespace[1mm]
    \arrayrulecolor{gray!30}\specialrule{0.3pt}{0pt}{0pt}
    \addlinespace[1mm]
    Kesten--McKay & Haar--orthogonal Hermitian sum \(\sum_{i=1}^{k}(\tens{O}_i+\tens{O}_i^{\intercal})\) & \(d = 2k\) \\
    & Projection model \(d \, \tens{P} \tens{O} \tens{D} \tens{O}^{\intercal} \tens{P} \) \citep{LONGORIA-2021} & \(d \geq 2\) \\
    & Adjacency matrix of a random \(d\)-regular graph & \(d \geq 2\) \\
    \addlinespace[1mm]
    \arrayrulecolor{gray!30}\specialrule{0.3pt}{0pt}{0pt}
    \addlinespace[1mm]
    Wachter & Generalized eigenvalues of \(( \tens{S}_1, \tens{S}_1 + \tens{S}_2)\), \(\tens{S}_i = \frac{1}{d_i} \tens{X}_i \tens{X}_i^{\intercal}\) & \(a = \frac{d_1}{n}\), \(b = \frac{d_2}{n}\) \\
    & Arises in MANOVA problems  & \\
    \addlinespace[1mm]
    \arrayrulecolor{gray!30}\specialrule{0.3pt}{0pt}{0pt}
    \addlinespace[1mm]
    Meixner & Bordered Toeplitz tridiagonal with Jacobi coefficients \(\alpha_1, \beta_1\) & \(a = \alpha_1, b = \beta_1 - 1\) \\
    & Block--Gaussian ensembles \citep{LENCZEWSKI-2015} & \\
    \addlinespace[1mm]
    \arrayrulecolor{black}
    \bottomrule
\end{tabular}
\end{adjustbox}
\end{table}

All of the above distributions arise---up to affine rescaling---from the free Meixner class of measures \citep{ANSHELEVICH-2007}, whose orthogonal-polynomial recursion coefficients stabilize after the first level.  Equivalently, the associated Jacobi matrix takes the bordered–Toeplitz form
\begin{equation*}
    \tens{J} =
    \begin{bmatrix}
        \begin{array}{c|cccc}
            \alpha_0 & \beta_0  &          &        &         \\
            \hline
            \beta_0  & \alpha_1 & \beta_1  &         &        \\
                     & \beta_1  & \alpha_1 & \beta_1 &        \\
                     &          & \ddots   & \ddots  & \ddots 
        \end{array}
    \end{bmatrix}, \quad
\alpha_n = \alpha_1,\;\beta_n = \beta_1\quad(n\ge1).
\end{equation*}
See \citet[Table 2]{DUBBS-2015} for the specific \((\alpha_1,\beta_1)\) giving each law. In particular, setting \(\alpha_0=0\), \(\beta_0=1\) yields the standard zero-mean, unit-variance normalization.

Since the Jacobi parameters in the above become constant beyond the second level, the continued‐fraction expansion of the Stieltjes transform
\begin{equation*}
    m(z) = \frac{1}{z - \alpha_0 - \displaystyle{\frac{\beta_0^2}{z - \alpha_1 - \displaystyle{\frac{\beta_1^2}{z - \alpha_1 - \displaystyle{\frac{\beta_1^2}{\ddots}}}}}}},
\end{equation*}
becomes eventually periodic, exhibiting a repeating tail that we encode by
\begin{equation*}
    m(z) = \frac{1}{z - \alpha_0 - \beta_0^2 T},
    \quad \text{where} \quad
    T = \frac{1}{z - \alpha_1 - \beta_1^2 T}.
\end{equation*}
Eliminating \(T\) yields the quadratic equation
\begin{equation*}
    Q(z) m(z)^2 - P(z) m(z) + 1 = 0,
\end{equation*}
where
\begin{align*}
    P(z) &= 2(z-\alpha_0) - \frac{\beta_0^2}{\beta_1^2} (z - \alpha_1), \\
    Q(z) &= (z - \alpha_0)^2 - \frac{\beta_0^2}{\beta_1^2} (z-\alpha_0)(z-\alpha_1) + \frac{\beta_0^4}{\beta_1^2}.
\end{align*}
Note that \(Q(z)\) is at most quadratic (degenerating to linear when \(\beta_{0}^{2}=\beta_{1}^{2}\)), and \(P\) is at most linear. Solving for \(m(z)\) yields
\begin{equation}
    m(z) = \frac{P(z) + \sqrt{P(z)^2 - 4 Q(z)}}{2 Q(z)}, \label{eq:law-stieltjes}
\end{equation}
where the branch of the square root is chosen such that \(m(z)\) defines a Herglotz map \(\mathbb{C}^+ \to \mathbb{C}^+\). The corresponding Hilbert transform \(\mathcal{H}[\rho](x) = \Re \, m(x + i0)\) inside the support interval \(I = [\lambda_{-}, \lambda_{+}]\) is thus the rational function
\begin{equation}
    \mathcal{H}[\rho](x) = \frac{P(x)}{2Q(x)}, \quad x \in I. \label{eq:law-hilbert}
\end{equation}
\Cref{tab:law-stieltjes} lists these polynomials \(P,Q\) and the corresponding free cumulant \(R\)-transforms for every distribution in \Cref{tab:law-dist}.

\begin{table}[t!]
\caption{Stieltjes, Hilbert, and \(R\)-transforms for the spectral laws listed in \Cref{tab:law-dist}. The Stieltjes and Hilbert transforms are constructed from the polynomials \(P(z)\) and \(Q(z)\) according to \eqref{eq:law-stieltjes} and \eqref{eq:law-hilbert}.}
\label{tab:law-stieltjes}
\vspace{0.07in}
\centering
\begin{adjustbox}{width=1\textwidth}
\small
\begin{tabular}{llll}
    
    \toprule
    & \multicolumn{2}{c}{Stieltjes and Hilbert Transforms} & \\
    \cmidrule(lr){2-3}
    Distribution & \(P(z)\) & \(Q(z)\) & \(R\)-Transform \\
    \midrule
    Wigner semicircle & \(-z\) & \(\displaystyle{\frac{r^2}{4}}\)  & \(\displaystyle{\frac{r^2}{4}} z\) \\
    \addlinespace[1mm]
    \arrayrulecolor{gray!30}\specialrule{0.3pt}{0pt}{0pt}
    \addlinespace[1mm]
    Marchenko--Pastur & \(1 - \lambda - z\) & \(\lambda z\) & \(\displaystyle{\frac{1}{1 - \lambda z}}\) \\
    \addlinespace[1mm]
    \arrayrulecolor{gray!30}\specialrule{0.3pt}{0pt}{0pt}
    \addlinespace[1mm]
    Kesten--McKay & \(\displaystyle{\frac{(2-d) z}{d-1}}\) & \(\displaystyle{\frac{d^2 - z^2}{d-1}}\) & \(\displaystyle{\frac{-d + d\sqrt{1 + 4z^2}}{2 z}}\)\\
    \addlinespace[1mm]
    \arrayrulecolor{gray!30}\specialrule{0.3pt}{0pt}{0pt}
    \addlinespace[1mm]
    Wachter & \(\displaystyle{\frac{a - 1 - (a+b-2) z}{a + b - 1}}\) & \(\displaystyle{\frac{z(1-z)}{a + b - 1}}\) & \(\displaystyle{\frac{-(a+b) + z + \sqrt{(a+b)^2 + 2(a-b)z + z^2}}{2z}}\) \\
    \addlinespace[1mm]
    \arrayrulecolor{gray!30}\specialrule{0.3pt}{0pt}{0pt}
    \addlinespace[1mm]
    Meixner & $\displaystyle\frac{ac - (c-2)z}{2}$ & $\displaystyle\frac{(1-c)z^{2}+acz+bc^{2}}{4}$ & \(\displaystyle{
    \left(\frac{c}{1-c}\right) \frac{1-az + \sqrt{(1-az)^2 - 4b(1 - c)z^2}}{2 z}
    }\) \\
    \addlinespace[1mm]
    \arrayrulecolor{black}
    \bottomrule
\end{tabular}
\end{adjustbox}
\end{table}


\subsection{Free Decompression of Closed-Form Benchmark Distributions}

In addition to the Marchenko--Pastur case discussed earlier (\Cref{fig:mp-free}), free decompression was tested on a matrices with Wigner, Kesten--McKay, Wachter, and Meixner limiting spectral distributions, as shown in \Cref{fig:wigner-free,fig:kesten-free,fig:wachter-free,fig:meixner-free}, respectively. To identify the Meixner distribution under free decompression, we make use of the $R$-transform. Let $R_{a,b,c}(z)$ denote the $R$-transform for the Meixner distribution with parameters $a,b,c$ as given in \Cref{tab:law-stieltjes}. A few calculations show that for any $\alpha > 0$, it holds $R_{a,b,c}(\alpha z) = R_{a(\alpha),b(\alpha),c(\alpha)}(z)$ where
\[
a(\alpha) = a\alpha,\qquad b(\alpha) = \frac{b\alpha^2 (1-c(\alpha))}{1 - c},\qquad c(\alpha) = \frac{c}{c + \alpha(1-c)}.
\]
Consequently, performing free decompression on a Meixner distribution with parameters $a,b,c$ by a factor $n / n_s$ will yield a Meixner distribution with parameters $a(n/n_s)$, $b(n/n_s)$ and $c(n/n_s)$.

\begin{figure}[t!]
    \includegraphics[width=\textwidth]{\figdir/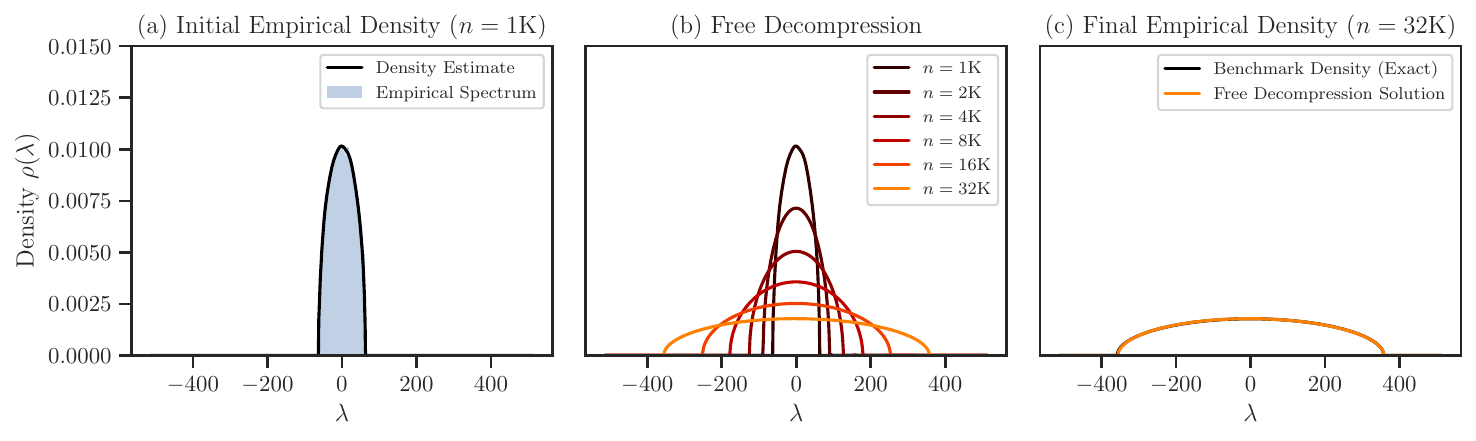}
    \vspace{-4mm}
    \caption{(a) Estimation of the Wigner semi-circle law for \(\tens{A} = (\tens{X} + \tens{X}^{\intercal}) / \sqrt{2}\), where $\tens{X}$ is a $n\times n$ matrix with standard normal entries for $n=\num{1000}$; (b) approximation of densities for various values of $n$ via free decompression (\Cref{algo:fd}); and (c) comparison of our decompressed approximation of $n=\num{32000}$ with the known limiting analytic density. }
    \label{fig:wigner-free}
\end{figure}

\begin{figure}[ht!]
    \includegraphics[width=\textwidth]{\figdir/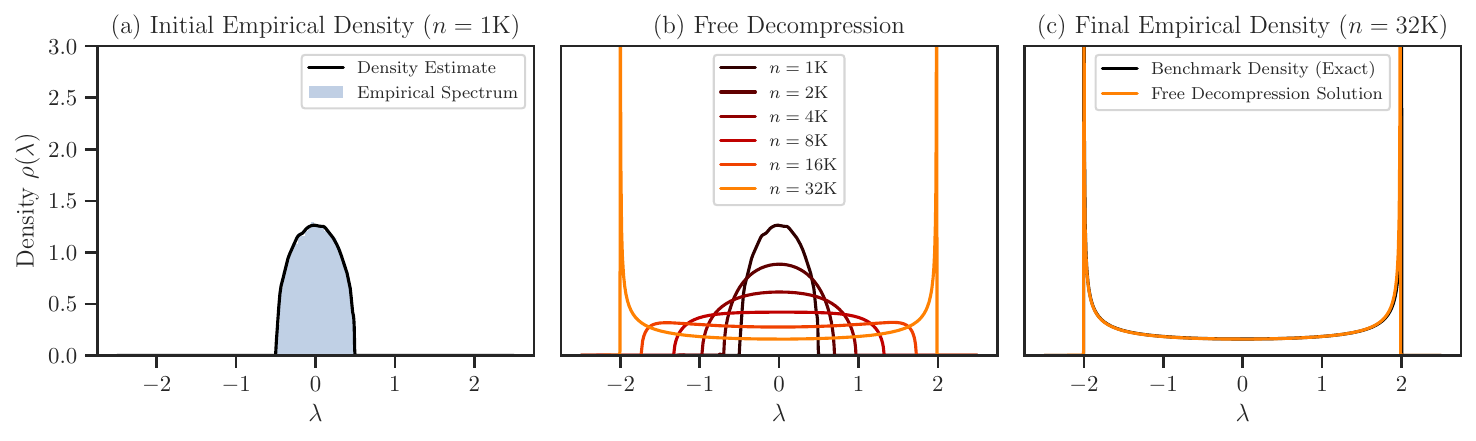}
    \vspace{-4mm}
    \caption{Estimation of the Kesten--McKay law from the model \(\tens{A} = \sum_{i=1}^k (\tens{O}_i + \tens{O}_i^{\intercal})\) \citep[Section 5]{LONGORIA-2021}, with \(d = 2k = 2\) and size \(n = \num{32000}\), where \(\tens{O}_i\) are Haar--orthogonal matrices generated via \cite{MEZZADRI-2007}. (a) Spectral density from a subsample of size \(n_s = 1000\). (b) Densities for various \(n\) via free decompression (\Cref{algo:fd}). (c) Comparison of decompressed and exact density at \(n = \num{32000}\).}
    \label{fig:kesten-free}
\end{figure}

\begin{figure}[ht!]
    \includegraphics[width=\textwidth]{\figdir/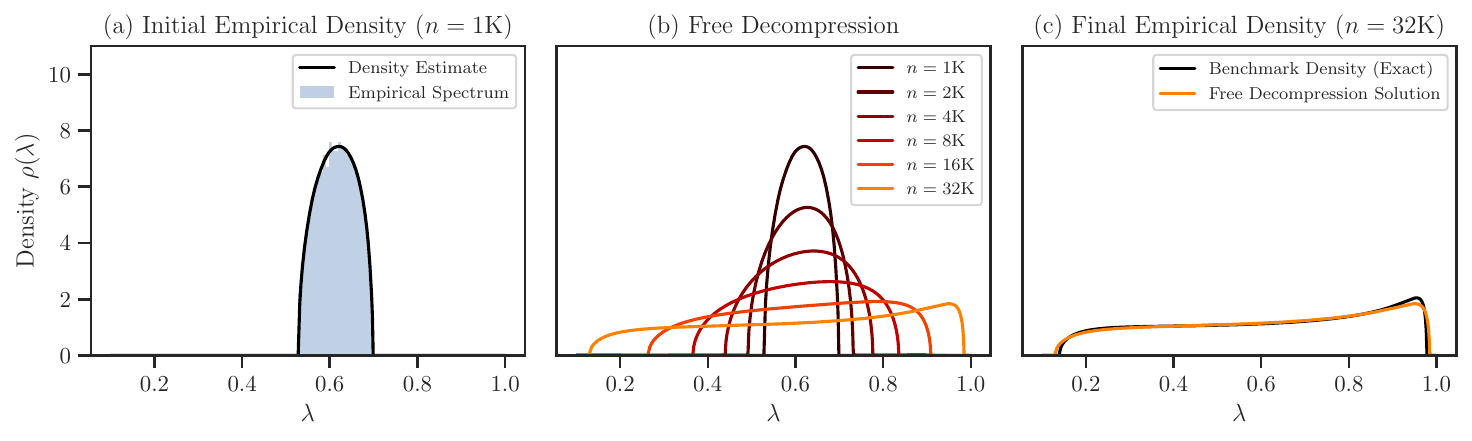}
    \vspace{-4mm}
    \caption{(a) Estimation of the Wachter distribution with ratios $a = 80$, $b = 50$, from eigenvalues of Wishart matrices with $d_1=\num{80000}$ and $d_2 = \num{50000}$ degrees of freedom, respectively, and size $n=\num{1000}$; (b) approximation of densities for various values of $n$ via free decompression (\Cref{algo:fd}); and (c) comparison of our decompressed approximation of $n=\num{32000}$ with the known limiting analytic density for $a = \frac{80}{32}$, $b = \frac{50}{32}$. }
    \label{fig:wachter-free}
\end{figure}

\begin{figure}[ht!]
    \includegraphics[width=\textwidth]{\figdir/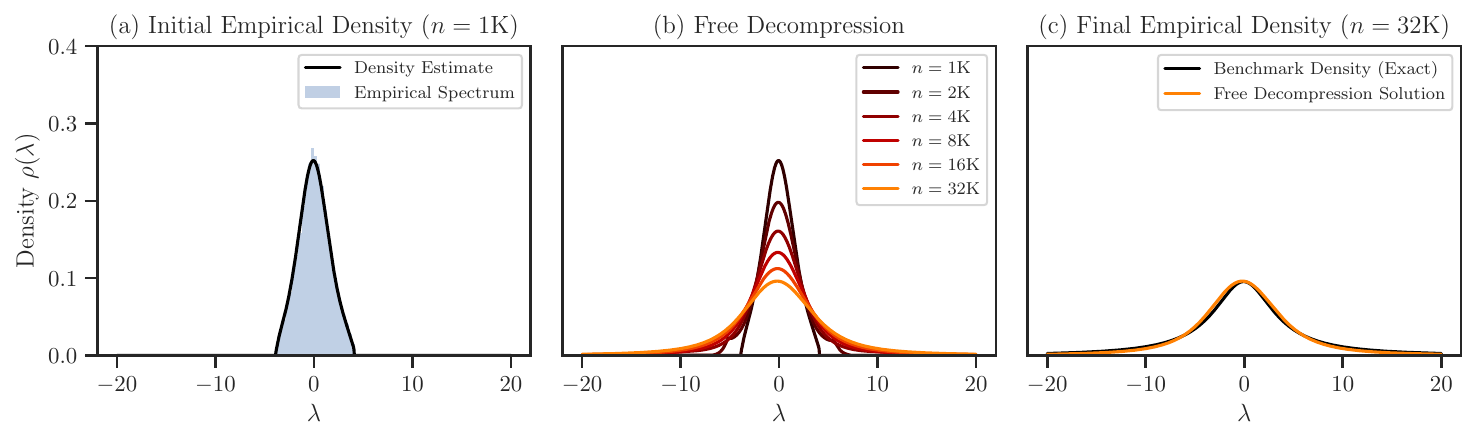}
    \vspace{-4mm}
    \caption{(a) Estimation of the Meixner distribution with $a = 0.1$, $b=4$, $c=0.6$ from $n = 1000$ quasi-Monte Carlo samples; (b) approximation of densities for various values of $n$ via free decompression (\Cref{algo:fd}); and (c) comparison of our decompressed approximation of $n=\num{32000}$ with the known analytic density given by the Meixner distribution with $a\approx 0.566$, $b\approx 253$, $c \approx 0.210$. }
    \label{fig:meixner-free}
\end{figure}

For the Marchenko--Pastur density we also compared the spectral samples of our free decompression estimates with those taken from the analytic density, in order to measure sensitivity to the initial sample that was decompressed. We used three measures of accuracy: total variation, Jensen--Shannon divergence, and log-determinants. To this end, free decompression was performed on ten different matrices of size \(n=\num{1000}\), each with \(d=\num{50000}\) degrees of freedom, to obtain approximations of the spectral density corresponding to \num{32000} dimensional expansions. Eigenvalues were then sampled from the corresponding distributions via quasi-Monte Carlo, and compared to samples from the analytic density with aspect ratio $\lambda=\frac{32}{50}$. The average distance in total variation was found to be $0.2\%$, while the average Jensen--Shannon divergence was $1.867\%$. The mean log-determinant was computed to be $-13514.88$ with a standard deviation of $1.503$, versus the baseline of $-13538.94$. This corresponds to an absolute relative error of $1.78\%$ with standard error of $0.01\%$.


\section{Considerations for Solving the PDE Using Characteristics} \label{sec:continuation}

Our task is to compute \(m(t,z)\) for a grid of points \(z \in \mathbb{C}^+\) and times \(t \in [0, \tau]\), given the initial condition \(m(0,z) = m_0(z)\). By \Cref{prop:CharCurve}, one has the explicit representation \(m(t, z) = m_0(z_0) e^{-t}\) where the ``initial label'' \(z_0\) satisfies the characteristic relation
\begin{equation}
    z = z_0 - \frac{e^{t} - 1}{m_0(z_0)}. \label{eq:get_z0}
\end{equation}
Equivalently, each characteristic is a straight line emanating from \(z_0\) with direction vector \(-m_0(z_0)^{-1}\).

To evaluate \(m(t,z)\) at a given \((t,z)\), one must solve the nonlinear equation \eqref{eq:get_z0} for the unknown initial point \(z_0 = \phi(t, z)\). Although this root-finding is numerically tractable, it raises two interlocking issues. First, the map \(t \mapsto \phi(t, z)\) departs the upper half-plane and crosses the real axis \emph{through} the spectral support \(I \coloneqq \supp(\rho_0)\), where \(m_0\) is discontinuous. Second, evaluating \(m_0\) on the lower half-plane requires a holomorphic extension of \(m_0\) across the cut \(I\).

In the remainder of this section, we address these points in two stages. In \Cref{sec:char-cross}, we prove that every trajectory \(\phi(t, z)\) indeed intersects \(\mathbb{R}\) precisely on \(I\). In \Cref{sec:holomorphic}, we construct a second-sheet continuation of \(m_0\), holomorphic across \(I\), via an additive Riemann-Hilbert (\emph{gluing}) ansatz.


\subsection{Challenge: Crossing the Cut} \label{sec:char-cross}

Consider a fixed target point \(z \in \mathbb{C}^{+}\). We seek to determine the initial points \(z_0 = \phi(t, z)\) that, under the flow of the characteristic equation \eqref{eq:get_z0}, pass through \(z\) at time \(t\). Define the curve \(\mathcal{C}: t \mapsto z_0\) by \(z_0(t) = \phi(t, z)\), given implicitly as the solution of \eqref{eq:get_z0} for fixed \(z\). This curve traces the locus of initial conditions \(z_0\) whose corresponding characteristic trajectories intersect the point \(z\) at time \(t\).

Unlike the characteristic curves themselves—which are straight lines—this curve \(z_0(t)\) is highly nonlinear, beginning at \(z_0(0) = z\). In the following propositions, we show that this curve descends from its initial point \(z \in \mathbb{C}^{+}\), eventually crossing into the lower half-plane \(\mathbb{C}^{-}\), and in fact intersects the real axis precisely on the support of \(\rho\).

\begin{proposition}[Characteristic curve exiting \(\mathbb{C}^{+}\)]\label{prop:descent}
    Let \(\rho_0\) be a probability density supported on a compact interval \(I \subset \mathbb{R}\), and let \(m_0(z)\) denote its Stieltjes transform, which defines a Herglotz map \(\mathbb{C}^{+} \to \mathbb{C}^{+}\). For each \(z \in \mathbb{C}^{+}\) and \(t > 0\), define \(\phi(t) \coloneqq \phi(t, z) \in \mathbb{C}\) implicitly by
    \begin{equation}
        z = \phi(t) - \frac{e^t - 1}{m_0(\phi(t))}. \label{eq:z-phi}
    \end{equation}
    Then there exists a finite time \(t^{\ast}>0\) such that \(\Im\phi(t)>0\) for \(0\le t<t^{\ast}\) and \(\Im\phi(t^{\ast})=0\). 
\end{proposition}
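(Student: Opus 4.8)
The plan is to recast the defining relation \eqref{eq:z-phi} in terms of the reciprocal Cauchy transform $F_0(w) \coloneqq -1/m_0(w)$. Since $m_0$ is Herglotz it has no zeros on $\mathbb{C}^{+}$ and $1/m_0$ maps $\mathbb{C}^{+}$ into $\mathbb{C}^{-}$, so $F_0\colon \mathbb{C}^{+}\to\mathbb{C}^{+}$; and from $m_0(w) = -w^{-1}+O(w^{-2})$ at infinity one obtains $F_0(w) = w - \int y\,\rho_0(y)\,\mathrm{d}y + O(|w|^{-1})$, i.e.\ $F_0(w) = w + O(1)$ near $\infty$. In this notation \eqref{eq:z-phi} reads $z = \phi(t) + (e^{t}-1)F_0(\phi(t))$, and I would argue in three stages: (i) the imaginary part of $\phi(t)$ decays exponentially; (ii) $\phi(t)$ stays bounded; (iii) consequently $\phi$ must meet the real axis in finite time.

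For (i), the key inequality is $\Im F_0(w)\ge \Im w$ on $\mathbb{C}^{+}$. This follows from $\Im F_0(w) = \Im m_0(w)/|m_0(w)|^{2}$ together with the Cauchy--Schwarz bound
\[
|m_0(w)|^{2} = \left|\int \frac{\rho_0(y)}{y-w}\,\mathrm{d}y\right|^{2} \le \int \frac{\rho_0(y)}{|y-w|^{2}}\,\mathrm{d}y = \frac{\Im m_0(w)}{\Im w},
\]
which uses $\rho_0\ge 0$ and $\int\rho_0 = 1$. Taking imaginary parts in $z = \phi(t) + (e^{t}-1)F_0(\phi(t))$, as long as $\phi(t)\in\mathbb{C}^{+}$ this gives $\Im z = \Im\phi(t) + (e^{t}-1)\Im F_0(\phi(t)) \ge e^{t}\Im\phi(t)$, hence $0<\Im\phi(t)\le e^{-t}\Im z$. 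So $\phi$ descends within the strip $\{0<\Im w\le \Im z\}$ with its imaginary part squeezed to $0$.

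For (ii) and (iii): if $|\phi(t)|$ is ever large then $|F_0(\phi(t))|\ge\tfrac12|\phi(t)|$, so $|z-\phi(t)| = (e^{t}-1)|F_0(\phi(t))|\ge \tfrac12(e^{t}-1)|\phi(t)|$, which together with $|z-\phi(t)|\le |z|+|\phi(t)|$ forces $|\phi(t)| = O(|z|e^{-t})$ past a fixed threshold time; by continuity $\phi$ then stays in a fixed bounded set. If $\phi$ never left $\mathbb{C}^{+}$, boundedness and $\Im\phi(t)\to 0$ would yield $t_k\to\infty$ with $\phi(t_k)\to\xi\in\mathbb{R}$, whence $F_0(\phi(t_k)) = (z-\phi(t_k))/(e^{t_k}-1)\to 0$, i.e.\ $|m_0(\phi(t_k))|\to\infty$ --- impossible since $m_0$ extends continuously and without zeros to $\overline{\mathbb{C}^{+}}$ (for $\xi\notin I$ by analyticity across $\mathbb{R}\setminus I$ and $m_0(\xi)\ne 0$; for $\xi\in I$ this holds for densities that are bounded and, say, H\"older with a square-root edge, as in all cases of interest, where $\Im m_0(\xi+i0) = \pi\rho_0(\xi)$). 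Hence $\phi$ exits $\mathbb{C}^{+}$ at some finite $t^{\ast}$; by (i) it cannot escape through $\infty$, so it exits across $\mathbb{R}$, and continuity of $\phi$ yields $\Im\phi(t)>0$ for $0\le t<t^{\ast}$ and $\Im\phi(t^{\ast})=0$.

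The step I expect to be the real obstacle is the implicit claim that $t\mapsto\phi(t)$ is a single-valued curve on all of $[0,t^{\ast})$: the implicit function theorem continues it only while $\partial_w[w+(e^{t}-1)F_0(w)] = 1+(e^{t}-1)F_0'(w)$ stays nonzero along it, i.e.\ while the complex Burgers flow \eqref{eq:StieltjesPDE} develops no shock above the real axis. I would obtain this from the standard subordination-flow analysis for free convolution --- equivalently, from the fact that $m(t,\cdot) = e^{-t}m_0(\phi(t,\cdot))$ is genuinely the Stieltjes transform of a probability measure on the relevant range of $t$, hence has no vertical tangents in $\mathbb{C}^{+}$ --- but a self-contained treatment requires controlling the zeros of $1+(e^{t}-1)F_0'$ in $\mathbb{C}^{+}$, which is the delicate part.
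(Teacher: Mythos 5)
Your argument uses the same key device as the paper's proof — passing to $w = -1/m_0$ and taking imaginary parts of the defining relation — so the basic approach coincides, but you are more careful where the paper is terse. The paper's proof simply asserts that $\beta(t) = \Im\, w(\phi(t))$ ``remains bounded below,'' which is essentially the whole content of the statement, without a real justification. Your Cauchy--Schwarz proof of the Herglotz inequality $\Im F_0(w) \ge \Im w$ gives the cleaner quantitative decay $\Im\phi(t) \le e^{-t}\,\Im z$ while $\phi$ remains in $\mathbb{C}^+$, and your boundedness-plus-contradiction step (a real limit $\xi$ of some $\phi(t_k)$ would force $F_0(\phi(t_k)) = (z-\phi(t_k))/(e^{t_k}-1)\to 0$, hence $|m_0(\phi(t_k))|\to\infty$) supplies the missing justification, at the price of assuming $m_0$ extends continuously and without zeros to $\overline{\mathbb{C}^+}$; that is a stronger regularity hypothesis than ``density supported on a compact interval,'' but the paper's bounded-below claim tacitly relies on the same thing. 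You are also right to flag that global single-valuedness of $t\mapsto\phi(t)$ on $[0,t^{\ast}]$ — absence of characteristic crossings above the real axis — is a genuine gap that neither the statement nor the paper's proof addresses; the paper's \Cref{prop:CharCurve} invokes the implicit function theorem only locally and does not control the zero set of $1+(e^t-1)F_0'$.
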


\begin{proof}
    Let \(a(t) = e^t - 1\) and define \(w(z) = -1 / m_0(z)\), so that \(w : \mathbb{C}^{+} \to \mathbb{C}^{+}\). Write \(\phi(t) = x(t) + i v(t)\) and \(w(\phi(t)) = \alpha(t) + i \beta(t)\) with \(\beta(t) > 0\). Taking imaginary parts in \eqref{eq:z-phi} gives
    \begin{equation}\label{eq:imag}
        v(t) = y - a(t)\, \beta(t).
    \end{equation}
    As \(t \to \infty\), we have \(a(t) \to \infty\) while \(\beta(t)\) remains bounded below (since \(\phi(t)\) stays in \(\mathbb{C}^{+}\) initially and \(w\) is analytic there). Thus, \(v(t) \to -\infty\), so a first time \(t^{\ast}\) must exist where \(v(t^{\ast}) = 0\). 
\end{proof}

\begin{proposition}[Crossing on the support] \label{prop:cross}
    Suppose the hypotheses of \Cref{prop:descent} hold, and define the curve \(t \mapsto \phi(t, z) \in \mathbb{C}\) implicitly by \eqref{eq:z-phi} for a fixed \(z \in \mathbb{C}^{+}\). Let \(t^{\ast} > 0\) be the first time such that \(\Im \phi(t^{\ast}) = 0\), whose existence is guaranteed by \Cref{prop:descent}. Then the real part of the crossing point lies in the support of the density: \(\Re \phi(t^{\ast}) \in I\).
\end{proposition}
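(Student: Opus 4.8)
The plan is to argue by contradiction, exploiting the fact that outside the closed support interval $I$ the Stieltjes transform $m_0$ is genuinely real-analytic and real-valued, whereas the target $z$ lies strictly in $\mathbb{C}^{+}$. Write $x^{\ast} := \phi(t^{\ast}, z)$. Since $t^{\ast}$ is, by construction in \Cref{prop:descent}, the first time with $\Im \phi(t^{\ast}) = 0$, the crossing point $x^{\ast}$ is real, so $\Re \phi(t^{\ast}) = x^{\ast}$ and it suffices to prove $x^{\ast} \in I$.

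Suppose, for contradiction, that $x^{\ast} \notin I$. Since $I$ is compact, there is an open disc $U \subset \mathbb{C}$ centred at $x^{\ast}$ with $U \cap I = \emptyset$. On $\mathbb{C} \setminus I$, and in particular on $U$, the Stieltjes transform coincides with the absolutely convergent integral $m_0(w) = \int_{I} \rho_0(y)/(y - w)\,\mathrm{d}y$, which is holomorphic there; moreover, for real $w \in U$ this integral is real and of constant sign ($m_0 < 0$ when $x^{\ast} > \lambda_{+}$, and $m_0 > 0$ when $x^{\ast} < \lambda_{-}$), so after shrinking $U$ we may take $m_0$ to be nowhere-vanishing on $U$. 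The characteristic locus $t \mapsto \phi(t, z)$ is continuous (as already used in \Cref{prop:descent}) and stays in $\mathbb{C}^{+} \subset \mathbb{C}\setminus I$ for $t < t^{\ast}$, with $\phi(t,z) \to x^{\ast}$ as $t \uparrow t^{\ast}$; hence $\phi(t,z) \in U$ for $t$ near $t^{\ast}$, and letting $t \uparrow t^{\ast}$ in the defining relation \eqref{eq:z-phi}, by continuity of $m_0$ on $U$,
\[
z \;=\; x^{\ast} - \frac{e^{t^{\ast}} - 1}{m_0(x^{\ast})}.
\]
All three quantities on the right are real and $m_0(x^{\ast}) \neq 0$, so $z \in \mathbb{R}$, contradicting $z \in \mathbb{C}^{+}$. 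Therefore $x^{\ast} \in I$, which is the claim.

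I would close with a remark that, when $\rho_0$ is (say, Hölder) continuous, the same identity yields more: taking imaginary parts in \eqref{eq:z-phi} at $t = t^{\ast}$ gives
\[
0 < \Im z = (e^{t^{\ast}} - 1)\,\frac{\Im m_0(x^{\ast} + i0)}{|m_0(x^{\ast} + i0)|^{2}},
\]
and since the Sokhotski--Plemelj formula gives $\Im m_0(x + i0) = \pi \rho_0(x)$, this forces $\rho_0(x^{\ast}) > 0$. Thus the crossing in fact occurs in the open interior $I^{\circ}$ at a point of strictly positive spectral density, and never at a soft square-root edge.

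I do not anticipate any serious obstacle. The single delicate point is the passage to the limit $t \uparrow t^{\ast}$ inside \eqref{eq:z-phi}; this is exactly why the proof is routed through a neighbourhood $U$ disjoint from $I$, where $m_0$ is plainly holomorphic, rather than attempting to interpret $m_0$ directly on its branch cut. The stronger ``positive density'' refinement additionally needs a little regularity of $\rho_0$ near $x^{\ast}$ to ensure the boundary value $m_0(x^{\ast} + i0) = \mathcal{H}[\rho_0](x^{\ast}) + i\pi \rho_0(x^{\ast})$ exists; this is standard but should be stated as a hypothesis.
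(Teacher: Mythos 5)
Your proof is correct and follows essentially the same contradiction argument as the paper: assume $x^{\ast}\notin I$, observe that $m_0$ is real-valued (and, as you more carefully justify, nonvanishing) on a real neighborhood of $x^{\ast}$ disjoint from the compact support $I$, and conclude from \eqref{eq:z-phi} that $z$ would then be real, contradicting $z\in\mathbb{C}^{+}$. The refinement you append---that under mild regularity of $\rho_0$, taking imaginary parts at $t=t^{\ast}$ forces $\rho_0(x^{\ast})>0$, so the crossing cannot occur at a soft square-root edge---is a genuine strengthening not recorded in the paper and is worth keeping.
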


\begin{proof}
    Let \(x^{\ast} \coloneqq \phi(t^{\ast})\) be the real-valued point where the curve \(\phi\) intersects the real axis. Suppose, for contradiction, that \(x^{\ast} \notin I\). Since \(I\) is compact, there exists an open interval \(J \subset \mathbb{R} \setminus I\) containing \(x^{\ast}\). On such an interval \(J\), the Stieltjes transform \(m_0(x)\) admits real boundary values. Indeed, by the Plemelj–Sokhotski formula,
    \begin{equation}
        \lim_{\epsilon \to 0^{+}} m_0(x + i \epsilon) = \mathcal{H}[\rho_0](x) + i \pi \rho_0(x), \quad x \in \mathbb{R}, \label{eq:plemelj}
    \end{equation}
    and since \(\rho_0(x) = 0\) for all \(x \in J\), the imaginary part vanishes and \(m_0(x)\) is real-valued on \(J\). In particular, \(m_0(x^{\ast}) \in \mathbb{R}\).
    Substituting \(\phi(t^{\ast}) = x^{\ast}\) into \eqref{eq:z-phi}, we find that the right-hand side of the equation is real, so \(z \in \mathbb{R}\), contradicting the assumption that \(z \in \mathbb{C}^{+}\). Therefore, the assumption \(x^{\ast} \notin I\) is false, and we conclude \(x^{\ast} \in I\).
\end{proof}

Since the trajectory \(\mathcal{C}\) enters the lower half-plane, we must evaluate the initial field \(m_0\) both across the real line at points on \(\supp(\rho_0)\), and inside \(\mathbb{C}^{-}\). However, by definition, the Stieltjes transform
\begin{equation*}
    m_0(z) = \int_{I} \frac{\rho_0(x)}{x - z} \, \mathrm{d}x,
\end{equation*}
is only defined for \(\Im(z) > 0\), and exhibits a jump discontinuity across the real axis precisely on \(\supp(\rho_0)\). This creates an analytic obstruction: to evaluate \(m_0(\phi(t,z))\) for arbitrary \(t\), we require a holomorphic continuation of \(m_0\) to the lower half-plane that is smooth across the cut \(I\). This continuation is constructed in the next section.


\subsection{Holomorphic Continuation of Stieltjes Transform} \label{sec:holomorphic}

Let \(m^{+}(z)\) denote the Stieltjes transform corresponding to a density \(\rho\) supported on a compact interval \(I \subset \mathbb{R}\) (such as the field \(m_0(z)\) and the density \(\rho_0\) in our case). By definition, this function is a Herglotz map \( \mathbb{C}^{+} \to \mathbb{C}^{+} \), holomorphic on \( \mathbb{C}^{+} \), and admits boundary values on \(\mathbb{R}\) via the Sokhotski–Plemelj theorem:
\begin{equation*}
    \lim_{\epsilon \to 0^{+}} m^{+}(x + i \epsilon) = \mathcal{H}[\rho](x) + i \pi \rho(x).
\end{equation*}
A natural extension of \(m^{+}\) to \( \mathbb{C}^{-} \) is the Schwarz reflection
\begin{equation*}
    m^{+}(z) = \overline{m^{+}(\bar{z})}, \quad \Im(z) < 0.
\end{equation*}
This extension is holomorphic on \( \mathbb{R} \setminus I \), but has a discontinuity across \(I\). Specifically, the discontinuity arises from a jump in the imaginary part:
\begin{equation*}
    \lim_{\epsilon \to 0^{+}} \Im(m^{+}(x + i \epsilon)) - \Im(m^{+}(x - i \epsilon)) = 2 \pi \rho(x).
\end{equation*}
Thus, \( \Im(m^{+}) \) is discontinuous on \( I \), while it vanishes and remains continuous on \( \mathbb{R} \setminus I \). We also note that the real part, \( \Re(m^{+}) = \mathcal{H}[\rho](x) \), is continuous across the entire real axis.

Our goal is to construct a second analytic continuation of \( m^{+} \), denoted \( m^{-} \), that is holomorphic across the interior of the cut \(I\). That is, \( m^{-} \) agrees with \( m^{+} \) on \( \mathbb{C}^{+} \) and continues analytically to \( \mathbb{C}^{-} \) through \(I\), at the expense of developing a discontinuity across \( \mathbb{R} \setminus I \). This setup forms an additive Riemann–Hilbert problem: find two branches \( m^{+} \) and \( m^{-} \) such that
\begin{equation*}
    \lim_{\epsilon \to 0^{+}} m^{+}(x + i \epsilon) = \lim_{\epsilon \to 0^{+}} m^{-}(x - i \epsilon), \quad x \in I.
\end{equation*}
We propose the following structure:
\begin{equation}
    m^{-}(z) \coloneqq
    \begin{cases}
        m^{+}(z), & \Im(z) > 0, \\
        -m^{+}(z) + G(z), & \Im(z) < 0,
    \end{cases}
\end{equation}
where \(G(z)\) is a \emph{glue function} chosen to cancel the jump across \(I\). In particular, we require that
\begin{alignat*}{3}
    \Im (G(x + i0)) & = 0 & \quad x \in I, \\
    \Re (G(x + i0))  & = 2 \mathcal{H}[\rho](x), & \quad x \in I.
\end{alignat*}
The first condition in the above ensures the continuity of the imaginary part on the whole of \(\mathbb{R}\):
\begin{equation*}
    \lim_{\epsilon \to 0^{+}} \Im(m^{-}(x + i \epsilon)) - \Im(m^{-}(x - i \epsilon)) = \pi \rho(x) - \pi \rho(x) = 0,
\end{equation*}
and the second condition guarantees continuity of the real part across \(I\):
\begin{equation*}
    \lim_{\epsilon \to 0^{+}} \Re(m^{-}(x + i \epsilon)) - \Re(m^{-}(x - i \epsilon)) = 0,
\end{equation*}
while allowing discontinuities on \( \mathbb{R} \setminus I \), which are harmless for our purposes.
In this way, \( m^{+} \) and \( m^{-} \) form two complementary Riemann sheets: \( m^{+} \) is holomorphic in \( \mathbb{C}^{+} \cup (\mathbb{R} \setminus I) \) but discontinuous across \(I\), while \( m^{-} \) is holomorphic in \( \mathbb{C}^{-} \cup I \), with a branch cut on \( \mathbb{R} \setminus I \).

To characterize \(G\), note that \( G(z) = m^{-}(z) + m^{+}(z) \), and it must be analytic in \( \mathbb{C}^{-} \). On the real axis inside the support, this glue function satisfies \( \Re G(x) = 2 \mathcal{H}[\rho](x) \), and thus inherits key structural properties from the Hilbert transform. In order to design an efficient functional form for \( G \), we first study the behavior of \( \mathcal{H}[\rho] \) itself in \Cref{prop:hilbert-prop}, which guides the minimal rational approximation ansatz in the next step.

\begin{proposition}[Properties of the Hilbert transform] \label{prop:hilbert-prop}
    Let \(\rho\) be a bounded, piecewise-\(C^1\) density supported on a connected compact interval \(I = [\lambda_{-}, \lambda_{+}] \subset \mathbb{R}\). Define the Hilbert transform
    \begin{equation*}
        \mathcal{H}[\rho](x) = \operatorname{p.v.} \int_{\lambda_{-}}^{\lambda_{+}} \frac{\rho(t)}{t - x} \, \mathrm{d} t,
    \end{equation*}
    where \(\operatorname{p.v.}\) denotes the Cauchy principal value. Then \(\mathcal{H}[\rho]\) and its derivative with respect to \(x\) satisfy:
    \begin{enumerate}
        \item For \(x \in (-\infty, \lambda_{-})\): \(\mathcal{H}[\rho](x) > 0\) and \(\mathcal{H}[\rho]'(x) > 0\).
        \item For \(x\in[\lambda_{-},\lambda_{+}]\): the equation \(\mathcal{H}[\rho](x)=0\) has an odd (hence at least one) number of solutions  \(x^{\ast}\in(\lambda_{-},\lambda_{+})\).
        \item For \(x \in (\lambda_{+}, \infty)\): \(\mathcal{H}[\rho](x) < 0\) and \(\mathcal{H}[\rho]'(x) > 0\).
        \item As $x \to \pm \infty$, $\mathcal{H}[\rho](x) \sim -1/x$.
    \end{enumerate}
\end{proposition}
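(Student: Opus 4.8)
I would separate the four claims into the three ``exterior'' items (1, 3, 4), where the defining integral is non-singular, and the single ``interior'' item (2), where the Cauchy principal value must be confronted and which is the only genuine difficulty. For $x\notin I$ the kernel $t\mapsto(t-x)^{-1}$ and its $x$-derivative $(t-x)^{-2}$ are bounded and continuous on the compact set $I$, so the principal value is superfluous and differentiation under the integral sign is legitimate; this yields $\mathcal H[\rho]'(x)=\int_I\rho(t)(t-x)^{-2}\,\mathrm dt>0$, which is the monotonicity assertion in items 1 and 3. The sign assertions follow from the sign of the kernel itself: $t-x>0$ for every $t\in I$ when $x<\lambda_-$, so $\mathcal H[\rho](x)>0$; and $t-x<0$ for every $t\in I$ when $x>\lambda_+$, so $\mathcal H[\rho](x)<0$. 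For item 4, the expansion $(t-x)^{-1}=-x^{-1}-tx^{-2}+O(x^{-3})$ is uniform for $t\in I$ as $|x|\to\infty$; integrating against $\rho$ and using $\int_I\rho=1$ together with boundedness of the first moment gives $\mathcal H[\rho](x)=-1/x+O(x^{-2})$.

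\textbf{Interior item.} First I would record that, since $\rho$ is bounded and piecewise $C^1$ (hence locally Lipschitz between its finitely many breakpoints), the principal-value integral $\mathcal H[\rho]$ is continuous on $(\lambda_-,\lambda_+)$ by the classical Privalov/Plemelj theory of Cauchy principal value integrals. The crux is the sign of the one-sided limits at the endpoints. For $x\in(\lambda_-,\lambda_+)$ close to $\lambda_-$, split $\mathcal H[\rho](x)$ over the interval $[\lambda_-,\,2x-\lambda_-]$, symmetric about the singularity $t=x$, and its complement $[2x-\lambda_-,\,\lambda_+]$. On the symmetric piece the principal value equals $\int_0^{x-\lambda_-}\frac{\rho(x+s)-\rho(x-s)}{s}\,\mathrm ds$, which is $O(x-\lambda_-)\to0$ by the Lipschitz bound $|\rho(x+s)-\rho(x-s)|\le 2\|\rho'\|_{L^\infty}s$. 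On the complement the kernel is positive, so Fatou's lemma gives $\liminf_{x\to\lambda_-^+}\int_{2x-\lambda_-}^{\lambda_+}\rho(t)(t-x)^{-1}\,\mathrm dt\ge\int_{\lambda_-}^{\lambda_+}\rho(t)(t-\lambda_-)^{-1}\,\mathrm dt\in(0,+\infty]$. Since the symmetric-piece contribution vanishes and the complement's has strictly positive liminf, $\liminf_{x\to\lambda_-^+}\mathcal H[\rho](x)>0$; symmetrically $\limsup_{x\to\lambda_+^-}\mathcal H[\rho](x)<0$. Picking $x_-,x_+\in(\lambda_-,\lambda_+)$ at which $\mathcal H[\rho]$ is positive, respectively negative, the intermediate value theorem applied to the continuous function $\mathcal H[\rho]$ produces a zero $x^\ast\in(\lambda_-,\lambda_+)$.

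\textbf{Parity and main obstacle.} For the parity claim, $\mathcal H[\rho]$ is real-analytic on the open interval wherever $\rho$ is (in particular for every law of \Cref{tab:law-dist}, where by \Cref{tab:law-stieltjes} it is the rational function $P/(2Q)$ with $P$ at most linear), so its zeros are isolated, hence finite in number; since $\mathcal H[\rho]$ passes from positive near $\lambda_-$ to negative near $\lambda_+$, the number of sign-changing zeros --- equivalently the zero count with multiplicity --- is odd. I expect the main obstacle to be exactly the interior item: establishing continuity of the principal-value integral on $(\lambda_-,\lambda_+)$ and, above all, pinning down the sign of $\mathcal H[\rho]$ near the two endpoints under only the bare ``bounded, piecewise-$C^1$'' hypothesis. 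If $\rho$ is permitted to jump at a breakpoint, $\mathcal H[\rho]$ acquires a harmless logarithmic singularity there and the clean ``odd number of solutions'' must be interpreted as ``odd number of sign changes''; for the analytic benchmark densities of \Cref{tab:law-dist} none of this arises and the statement holds verbatim.
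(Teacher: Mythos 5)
Your treatment of items (1), (3), and (4) coincides with the paper's: differentiate under the integral on the exterior where the kernel is regular, read off the sign of the kernel, and (for the tail) expand $(t-x)^{-1}$ --- where the paper instead cites \citet{KING-2009}. The genuine divergence is in item (2). The paper's proof transfers the exterior sign to the interior by asserting that $\mathcal{H}[\rho]$ is continuous at the endpoints of $I$ and then invoking the intermediate value theorem. You instead pin down the sign near $\lambda_{\pm}$ from \emph{inside} $I$ directly: split the principal value over a window symmetric about the singularity, which vanishes as $O(x-\lambda_-)$ by the piecewise-Lipschitz bound, and over the remainder, where the kernel is single-signed and Fatou gives a strictly positive (resp.\ negative) liminf. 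This buys you robustness that the paper's argument lacks --- if $\rho(\lambda_{\pm}) > 0$, the one-sided limits of $\mathcal{H}[\rho]$ at the endpoints diverge logarithmically and the paper's ``continuous, guaranteed by boundedness of $\rho$'' is false there, whereas your Fatou argument needs no endpoint continuity and, in the divergent case, simply reinforces the sign. Both proofs rely equally on interior continuity of the principal value (your Privalov/Plemelj step, which piecewise-$C^1$ does deliver), and both are similarly informal about the parity claim: the paper's ``every simple zero flips the sign'' and your appeal to real-analyticity each quietly assume more regularity or genericity than bounded piecewise-$C^1$ provides. You are more candid in flagging that the clean ``odd number of zeros'' is really an odd number of sign changes and that a jump in $\rho$ would spoil your logarithmic estimate, which is a caveat the paper's proof leaves unstated.
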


\begin{proof}
    \emph{Step 1. Outside the support.}
    If \(x \notin [\lambda_{-}, \lambda_{+}]\), differentiation under the integral sign is valid:
    \begin{equation*}
        \mathcal{H}[\rho]'(x) = \int_{\lambda_{-}}^{\lambda_{+}} \frac{\rho(t)}{(t - x)^2} \, \mathrm{d} t > 0.
    \end{equation*}
    Hence, \(\mathcal{H}[\rho]\) is strictly increasing on \((-\infty, \lambda_{-}) \cup (\lambda_{+}, \infty)\). A direct sign check of the integrand confirms that \(\mathcal{H}[\rho](x) > 0\) for \(x < \lambda_{-}\) and \(\mathcal{H}[\rho](x) < 0\) for \(x > \lambda_{+}\), establishing items (1) and~(3). Furthermore, the asymptotic statement in item (4) follows directly from \cite[Eq. 4.111]{KING-2009}, which gives
    \(\mathcal{H}[\rho](x)\sim-(\int\rho \, \mathrm{d}x)/x\) as \(|x|\to\infty\); since \(\int \rho \, \mathrm{d} x = 1\), this is \(-1/x\).
    

    \medskip
    \emph{Step 2. Existence of a zero.}
    From steps (1), we have \(\mathcal{H}[\rho] > 0\) as \(x \downarrow \lambda_{-}\), and \(\mathcal{H}[\rho] < 0\) as \(x \uparrow \lambda_{+}\). Since \(\mathcal{H}[\rho]\) is continuous---guaranteed by by the boundedness of \(\rho\)---the intermediate value theorem ensures that there exists at least one root \(x^{\ast} \in (\lambda_{-}, \lambda_{+})\) such that \(\mathcal{H}[\rho](x^{\ast}) = 0\). Moreover, every simple zero flips the sign of \(\mathcal{H}[\rho]\), so starting with positive sign on the left and finishing with negative sign on the right forces an odd number of sign-changes and hence an odd number of interior zeros.
\end{proof}

The properties in \Cref{prop:hilbert-prop} provide a clear picture of the behavior of the Hilbert transform of a probability density with compact support; see \Cref{fig:HilbertExample} for an example.

\begin{figure}[ht!]
    \centering
    \begin{tikzpicture}[scale=0.8, transform shape]
  \begin{axis}[
      width=10cm, height=6cm,
      axis lines=middle,
      axis line style={-{Stealth[scale=1.2]}},
      xmin=-8, xmax=12,
      ymin=-1.2, ymax=1.2,
      domain=-8:12,
      samples=200,
      xtick=\empty,
      ytick=\empty,
      clip=false,
    ]
    \def\lambdam{2}
    \def\lambdap{6}

    \addplot[black, thick] 
      {exp(-(abs(x-\lambdam))) - exp(-(abs(x-\lambdap)))};
    \addlegendentry{$\mathcal{H}[\rho](x)$}

    \draw[dashed] (axis cs:\lambdam,-1.2) -- (axis cs:\lambdam,1.2) 
      node[above,pos=0.95] {$\lambda_-$};
    \draw[dashed] (axis cs:\lambdap,-1.2) -- (axis cs:\lambdap,1.2) 
      node[above,pos=0.95] {$\lambda_+$};

     \addplot+[black, only marks, mark=*, mark size=1.3pt, mark options={draw=black,fill=black}] 
        coordinates {(4,0)};
     \node[above right] at (axis cs:4,0) {$x^{\ast}$};
    
  \end{axis}
\end{tikzpicture}
    \caption{A schematic example of the behavior of the Hilbert transform as prescribed by \Cref{prop:hilbert-prop}.}
    \label{fig:HilbertExample}
\end{figure}
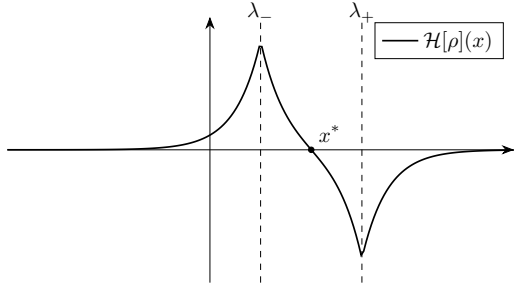

We now seek a practical approximation for the real part of the glue function \( G(x) \) on the support interval \( I \). Motivated by the fact that many classical spectral laws---such as the free Meixner family of distributions in \Cref{sec:rm-models}---admit closed-form Hilbert transforms of rational form, we adopt a Pad\'{e}-type ansatz:
\begin{equation}
    G(x) \approx \frac{P(x)}{Q(x)}, \label{eq:pade}
\end{equation}
with \( P \) and \( Q \) polynomials of minimal degree. In the classical examples of \Cref{sec:rm-models}, this rational structure is not merely an approximation but \emph{exact}, as shown in \eqref{eq:law-hilbert}, with \(P\) at most linear and \(Q\) at most quadratic (see \Cref{tab:law-stieltjes}).

Even in more general settings---where the density \( \rho \) is empirical or lacks a closed-form expression---the Pad\'{e} approximant remains attractive as it provides a compact, flexible representation with interpretable parameters. Once we commit to this functional form, we can leverage structural properties of the Hilbert transform to constrain the numerator and denominator more effectively.
\Cref{prop:hilbert-prop} provides key analytic features of \( \mathcal{H}[\rho] \) that guide this design. Namely, \(P\) should have at least a zero inside \(I\), while \(Q\) should have no zeros in \(\mathbb{C}^{-}\) or within \(I\), though it may have zeros in \(\mathbb{R} \setminus I\). 
Mirroring these features, we parametrize the Pad\'{e} approximant as
\begin{equation*}
    G(z) = \frac{P(z)}{Q(z)} = d + cz + \sum_{j=1}^q \frac{r_i}{z - a_j},
\end{equation*}
which is a Pad\'{e} approximant of degree \([q+1/q]\), or \([q/q]\) if \(c=0\) or \([q-1/q]\) if \(c=d=0\). We constrain the parameters to ensure there is at least one zero in \(I\) and no pole \(a_j\) inside \(I\).


\section{Methods of Estimating Stieltjes Transform}
\label{app:ESDapprox}

Given a probability density \(\rho(x)\) supported on an interval \(I \subset \mathbb{R}\), its Stieltjes transform is defined by
\begin{equation*}
    m(z) \coloneqq \int_{\mathbb{R}} \frac{\rho(x)}{x - z} \, \mathrm{d}x, \qquad z \in \mathbb{C} \setminus I.
\end{equation*}
The density \(\rho\) can be recovered from the boundary behavior of \(m(z)\) via the inversion formula:
\begin{equation*}
    \rho(x) = \frac{1}{\pi} \lim_{\eta \to 0^+} \Im \, m(x + i\eta).
\end{equation*}
Accurately estimating \(m(z)\) is therefore essential for recovering the underlying spectral density.
Given a Hermitian matrix \(\tens{A} \in \mathbb{R}^{n \times n}\), the Stieltjes transform of its empirical spectral distribution is defined by
\begin{equation*}
    m(z) = \frac{1}{n} \tr \left(\tens{A} - z \tens{I}\right)^{-1}, \quad z \in \mathbb{C} \setminus I.
\end{equation*}
A direct way to compute \(m(z)\) is
\begin{equation*}
    m(z) = \frac{1}{n} \sum_{i=1}^{n} \frac{1}{\lambda_i - z},
\end{equation*}
where \(\lambda_i\) are the eigenvalues of \(\tens{A}\). However, because the characteristic curves must pass through the roots of $m$ (\Cref{prop:cross}), applying free decompression to this empirical version of $m$ will not remove its poles. It is necessary to consider approximations of $m$ that correspond to Stieltjes transforms for smooth densities. 

In this section, we describe three numerical strategies for estimating the Stieltjes transform of a Hermitian matrix. The most widely used approach in the literature is based on the Lanczos algorithm (\Cref{sec: lanczos}), which constructs a continued fraction approximation to the resolvent. However, we find that this method often suffers from slow convergence and numerical instability, due in part to its reliance on a random initial vector and the large number of Lanczos steps required to achieve acceptable accuracy. As such, it is often ill-suited for tasks that demand high-precision estimates of \(m(z)\).

To overcome these limitations, we present two alternative methods. The first, described in \Cref{sec:jacobi}, expands the spectral density in a Jacobi polynomial basis, offering flexibility in modeling various edge behaviors. The second, described in \Cref{sec:chebyshev}, specializes this approach to Chebyshev polynomials, for which we derive a closed-form expression for the Stieltjes transform of each basis element. This enables direct and stable evaluation via power series. Together, these methods offer more accurate and robust alternatives to Lanczos-based approaches, especially in regimes where high-resolution spectral information is required.


\subsection{Estimating Stieltjes Transform using Lanczos Method} \label{sec: lanczos}

The key idea of the Lanczos method is to approximate \(m(z)\) by constructing a rational function---specifically, a continued fraction---that reflects the spectral properties of \(\tens{A}\). This is achieved via a three-term recurrence relation satisfied by orthogonal polynomials associated with the spectral measure \citep{GOLUB-2010,MEURANT-2006}. The recurrence is obtained from the Lanczos algorithm, which builds a tridiagonal (Jacobi) matrix capturing the action of \(\tens{A}\) on a low-dimensional Krylov subspace. In brief, the procedure is as follows.

Given a normalized starting vector \(\vect{v}_0 \in \mathbb{R}^n\), \(\Vert \vect{v} \Vert_2 = 1\), the Lanczos algorithm constructs an orthonormal basis \(\{ \vect{v}_0, \vect{v}_1, \dots, \vect{v}_{p-1}\}\) for the Krylov subspace
\begin{equation*}
    \mathcal{K}_p(\tens{A}, \vect{v}_0) = \mathrm{span} \left\{ \vect{v}_0, \tens{A} \vect{v}_0, \tens{A}^2 \vect{v}_0, \dots, \tens{A}^{p-1} \vect{v}_0 \right\},
\end{equation*}
and produces the symmetric triadiagonal Jacobi matrix
\begin{equation*}
    \tens{T}_p \coloneqq
    \begin{bmatrix}
        \alpha_0 & \beta_1 & & \\
        \beta_1 & \alpha_1 & \beta_2 & \\
        & \ddots & \ddots, & \ddots \\
        & & \beta_{p-1} & \alpha_{p-1}
    \end{bmatrix},
\end{equation*}
which satisfies
\begin{equation*}
    \tens{A} \tens{V}_p \approx \tens{V}_p \tens{T}_p,
\end{equation*}
where \(\tens{V}_p \coloneqq [\vect{v}_0, \vect{v}_1, \dots, \vect{v}_{p-1}] \in \mathbb{R}^{n \times p}\). This identity captures the three-term recurrence relation satisfied by the orthogonal polynomials associated with the spectral measure of \(\tens{A}\).
The Stieltjes transform \(m(z)\) can then be approximated by the first element of the resolvent of \(\tens{T}_p\):
\begin{equation*}
    m_p(z) = \vect{e}_1^{\intercal} (\tens{T}_p - z \tens{I})^{-1} \vect{e}_1.
\end{equation*}
where \(\vect{e}_1 = (1, 0, \dots, 0) \in \mathbb{R}^{n}\) denotes the first standard basis vector. The above expression is equivalent to the continued fraction
\begin{equation*}
    m_p(z) = \frac{1}{z - \alpha_0 - \displaystyle{\frac{\beta_0^2}{z - \alpha_1 - \displaystyle{\frac{\beta_1^2}{\ddots - \displaystyle{\frac{\beta_{p-1}^2}{z - \alpha_p}}}}}}}.
\end{equation*}
This approximation is the \([0/p]\) Pad\'{e} approximant of \(m(z)\), and typically exhibits convergence with increasing \(p\), especially when \(m(z)\) is analytic away from the real axis \citep{GOLUB-1997}. However, the effectiveness of this approximation in practice depends on several implementation details, which we briefly summarize below.

The choice of the starting vector \( \vect{v}_0 \) is not critical, as any unit-norm vector suffices. In practice, a randomly chosen unit vector often leads to a convergence, while in theory, the approximation \( m_p(z) \) is independent of this choice when \( p = n \).  However, for larger values of \( p \), numerical stability may require full or partial reorthogonalization during the Lanczos process to maintain orthogonality of the Krylov basis vectors \citep{MEURANT-2006}.

A typical stopping criterion is based on monitoring the difference between successive approximants and terminating when
\begin{equation*}
    \left| m_p(z) - m_{p-1}(z) \right| < \varepsilon,
\end{equation*}
for a fixed test point \( z \in \mathbb{C}^{+} \); see \citep{FROMMER-2016,UBARU-2017} for detailed analyses of convergence rates and practical recommendations.

The continued fraction \( m_p(z) \) can be evaluated efficiently either through recursive computation or, equivalently, by solving the tridiagonal system \( (\tens{T}_p - z \tens{I}) \vect{x} = \vect{e}_1 \). The overall computational cost is \( \mathcal{O}(p n^2) \) for dense matrices or \( \mathcal{O}(p \, \mathrm{nnz}(\tens{A})) \) for sparse matrices (where $\mathrm{nnz}(\tens{A})$ is the number of non-zero elements of $\tens{A}$), plus \( \mathcal{O}(p^2) \) for reorthogonalization. If the algorithm is run to completion with \( p = n \) and numerical orthogonality is preserved, the approximation yields the exact spectrum of \( \tens{A} \), and thus fully recovers \( m(z) \).


\subsection{Estimating Stieltjes Transform using Jacobi Polynomials} \label{sec:jacobi}

Jacobi polynomials provide a highly flexible basis for approximating spectral densities, particularly when the distribution exhibits distinctive edge behavior—such as square-root singularities or vanishing derivatives at the boundaries of the support. This family of orthogonal polynomials can represent a wide range of distributions encountered in random matrix theory (\eg distributions in \Cref{sec:rm-models}) and more general beta-like distributions.

Let \( \rho(x) \) be a probability density with support on the interval \( [\lambda_{-}, \lambda_{+}] \). Through an affine change of variables, we map the domain to the canonical interval \( [-1, 1] \) via
\begin{equation*}
    t(x) \coloneqq \frac{2x - (\lambda_{+} + \lambda_{-})}{\lambda_{+} - \lambda_{-}}.
\end{equation*}
The density is then expanded in the Jacobi polynomial basis:
\begin{equation}
    \rho(x) \approx \sum_{k=0}^{K} \psi_k \, w^{(\alpha,\beta)}(t(x)) \, P_k^{(\alpha,\beta)}(t(x)), \label{eq:approx-rho}
\end{equation}
where \( P_k^{(\alpha,\beta)} \) is the Jacobi polynomial of degree \(k\) and parameters \( \alpha, \beta > -1 \), orthogonal with respect to the weight
\begin{equation*}
    w^{(\alpha,\beta)}(t) = (1 - t)^\alpha (1 + t)^\beta, \qquad \alpha, \beta > -1.
\end{equation*}
This expansion is especially effective when \( \rho(x) \sim (\lambda_{+} - x)^{\alpha} (x - \lambda_{-})^{\beta}  \) near the endpoints, aligning the basis with the singularity structure of \( \rho \), and accelerating the convergence.

\paragraph{Projection.} Given a set of eigenvalues \( \{\lambda_i\}_{i=1}^N \), we first construct a smoothed density estimate \( \hat{\rho}(x) \), e.g., via Gaussian or beta kernel density estimation. Mapping to the variable \( t \), we approximate the expansion coefficients using the orthogonality of Jacobi polynomials:
\begin{equation}
    \psi_k = \frac{1}{\|P_k^{(\alpha,\beta)}\|^2} \int_{-1}^{1} \hat{\rho}(t) P_k^{(\alpha,\beta)}(t) \, dt \approx \frac{\sum_j \hat{\rho}(t_j) P_k^{(\alpha,\beta)}(t_j) \Delta t}{\|P_k^{(\alpha,\beta)}\|^2}, \label{eq:proj-psi}
\end{equation}
where \( \{t_j\} \) are grid points on \( [-1,1] \) and \( \|P_k^{(\alpha,\beta)}\|^2 \) denotes the squared norm, which admits a closed-form expression
\begin{equation*}
    \|P_k^{(\alpha,\beta)}\|^2 = \frac{2^{\alpha + \beta + 1}}{2k + \alpha + \beta + 1} \; \frac{\Gamma(k + \alpha + 1) \; \Gamma(k + \beta + 1)}{\Gamma(k + 1) \; \Gamma(k + \alpha + \beta + 1)}.
\end{equation*}
In the above, \(\Gamma\) is the Gamma function.

\paragraph{Stieltjes transform in Jacobi basis.} Once the coefficients \( \psi_k \) are obtained, the Stieltjes transform
\begin{equation*}
    m(z) = \int_{\lambda_{-}}^{\lambda_{+}} \frac{\rho(x)}{x - z} \, dx,
\end{equation*}
can be approximated via the expansion \eqref{eq:approx-rho}. First, by mapping \(x\) and \(z\) as
\begin{alignat*}{3}
    x(t) &\coloneqq \frac{1}{2}(\lambda_{+} + \lambda_{-}) + \frac{1}{2}(\lambda_{+} - \lambda_{-}) t, \qquad &&t \in [-1, 1], \\
    z(u) &\coloneqq \frac{1}{2}(\lambda_{+} + \lambda_{-}) + \frac{1}{2}(\lambda_{+} - \lambda_{-}) u, \qquad &&u \in \mathbb{C} \setminus [-1, 1],
\end{alignat*}
we arrive at the transformed expression
\begin{equation*}
    m(u) \approx \frac{2}{\lambda_{+} - \lambda_{-}} \sum_{k=0}^K \psi_k \,\int_{-1}^{1} \frac{w^{(\alpha,\beta)}(t) P_k^{(\alpha,\beta)}(t)}{t - u} \, dt, \quad u \in \mathbb{C}^{+} \setminus [-1, 1].
\end{equation*}
The inner integral in the above defines the so-called \emph{Jacobi function of the second kind},
\begin{equation*}
    Q_k^{(\alpha,\beta)}(u) = \int_{-1}^{1} \frac{w^{(\alpha,\beta)}(t) P_k^{(\alpha,\beta)}(t)}{t - u} \, dt,
\end{equation*}
and the Stieltjes transform becomes
\begin{equation*}
    m(z(u)) \approx \frac{2}{\lambda_{+} - \lambda_{-}} \sum_{k=0}^K \psi_k \cdot Q_k^{(\alpha,\beta)}(u).
\end{equation*}

\paragraph{Gauss--Jacobi quadrature.} In practice, we do not evaluate \( Q_k^{(\alpha,\beta)} \) directly via numerical integration. Instead, we compute each term using a dedicated Gauss--Jacobi quadrature rule \citep[Section 3.2.2]{SHEN-2011} tailored to the weight \( w^{(\alpha,\beta)} \)
\begin{equation*}
    Q_k^{(\alpha, \beta)}(u) \approx \sum_{i=1}^{n_k} \frac{w_i P_k^{(\alpha,\beta)}(t_i)}{t_i - u},
\end{equation*}
where \( \{t_i, w_i\} \) are the quadrature nodes and weights for the given weight function. The number of nodes \( n_k \) depends on the degree \(k\), typically chosen as \( n_k \coloneqq \max(k+1, n_0) \). Evaluating each mode \( Q_k^{(\alpha, \beta)}(u) \) independently using its own quadrature is critical for numerical stability, especially for large \(k\). This separation of summation and integration avoids artifacts that arise when attempting to quadrature-integrate the full series at once.

\paragraph{Choosing parameters.} Jacobi polynomials provide a flexible basis for modeling spectral densities with various edge behaviors, particularly those of the form \( \rho(x) \sim (\lambda_{+} - x)^{\alpha}  (x - \lambda_{-})^{\beta} \). Different choices of \(\alpha\) and \(\beta\) yield classical orthogonal polynomials:
\begin{itemize}
    \item \(\alpha = \beta = 0\) recovers Legendre polynomials.
    \item \(\alpha = \beta = \frac{1}{2}\) gives Chebyshev polynomials of the second kind, well-suited for densities with square-root vanishing edges, such as \( \rho(x) \sim \sqrt{(\lambda_{+} - x)(x - \lambda_{-})} \), which are common in free probability models (see \Cref{tab:law-dist}).
    \item \(\alpha = \beta = -\frac{1}{2}\) yields Chebyshev polynomials of the first kind, appropriate for densities with inverse square-root singularities at the edges.
\end{itemize}
This parameterization aligns the orthogonal basis with the singular structure of \( \rho(x) \), often resulting in exponential convergence with respect to the number of terms \(K\). In practice, we find that the method is not highly sensitive to precise values of \(\alpha\) and \(\beta\), as long as their qualitative effect on edge behavior is appropriate.

\paragraph{Gibbs oscillations.} Expansions in orthogonal polynomials can exhibit Gibbs oscillations, especially for large truncation orders \(K\). These oscillations can be suppressed by applying damping to the coefficients via a filter \( g_k \), resulting in the smoothed expansion:
\begin{equation*}
    \rho(x) \approx \sum_{k=0}^{K} g_k \psi_k P_k^{(\alpha, \beta)}(x).
\end{equation*}
Several damping schemes exist, including \citet{JACKSON-1912}, Lanczos, Fej\'{e}r, and Parzen filters. In this work, we use Jackson damping, defined as
\begin{equation*}
    g_k \coloneqq \frac{(K-k+1) \cos \left( \frac{\pi k}{K+1} \right) + \sin \left( \frac{\pi k}{K+1} \right) \cot \left( \frac{\pi}{K+1} \right)}{K+1}, \quad k = 0, 1, \dots, K.
\end{equation*}
\Cref{fig:jackson} shows how Jackson damping improves the stability of the approximation.

\begin{figure*}[t!]
    \centering
    \includegraphics[width=0.6\textwidth]{\figdir/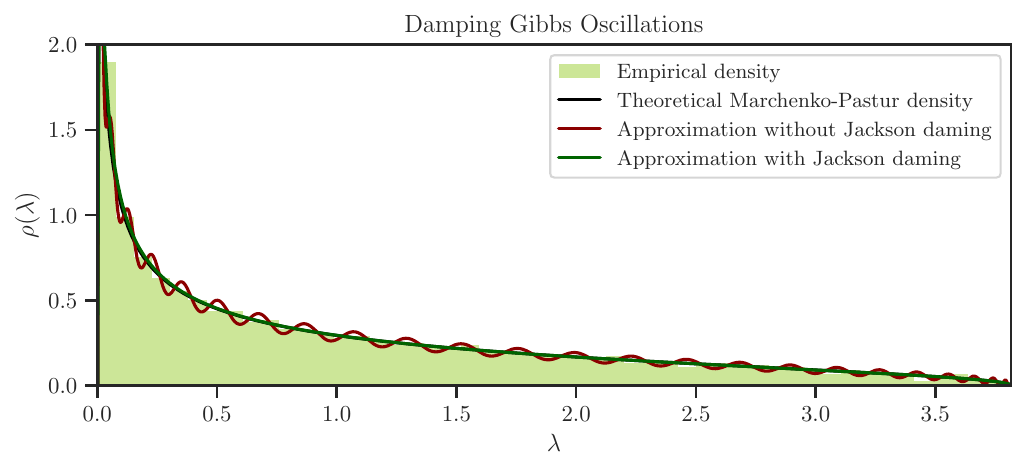}
    \caption{Effect of Jackson damping on approximating the Marchenko--Pastur density from the Gram matrix \(\frac{1}{d} \tens{X} \tens{X}^{\intercal}\), where \(\tens{X}\) is an \(n \times d\) random matrix with \(n = 1000\) and \(n/d = 0.9\). Parameters were intentionally selected to amplify Gibbs oscillations for illustrative purposes; typical cases exhibit much milder artifacts.}
    \label{fig:jackson}
\end{figure*}

\paragraph{Regularization.} To further stabilize the coefficients \(\psi_k\), we introduce Tikhonov regularization by modifying \eqref{eq:proj-psi} as
\begin{equation*}
    \psi_k = \frac{1}{\gamma_k + \|P_k^{(\alpha,\beta)}\|^2} \int_{-1}^{1} \hat{\rho}(t) P_k^{(\alpha,\beta)}(t) \, dt,
\end{equation*}
with the penalty term \(\gamma_k\) chosen as
\begin{equation*}
    \gamma_k \coloneqq \gamma \left( \frac{k}{K+1} \right)^2,
\end{equation*}
where \(\gamma > 0\) is a small regularization parameter. This form penalizes higher-order coefficients more strongly, resulting in smoother approximations without significantly biasing the shape or integral of the estimated density.

\paragraph{Preserving positivity and mass.} Finally, to enforce positivity of \( \rho(x) \) over its support and ensure \( \int \rho(x) \, \mathrm{d} x = 1 \), we perform a secondary constrained optimization over \( \psi_k \), initialized at the projected values, subject to constraints enforcing unit mass and pointwise non-negativity over a fine grid.


\subsection{Estimating Stieltjes Transform using Chebyshev Polynomials} \label{sec:chebyshev}

Constructing the principal branch of the Stieltjes transform numerically is straightforward: one can estimate $m$ on the real axis by letting $m(x + i0^{+}) = \mathcal{H}[\rho](x) + i \pi \rho(x)$ and estimating the Hilbert transform using FFT techniques. Then, $m$ can be extended to $\mathbb{C}^{+}$ by convolution with the Poisson kernel. The lower-half plane is obtained by the Schwarz reflection principle. However, constructing the secondary branch of the Stieltjes transform numerically is ordinarily very challenging, as it is tantamount to numerical analytic continuation. 

Our primary mechanism for estimating the Stieltjes transform of a density with compact support is the following lemma. Here, we let $j(z) \coloneqq \frac12(z + z^{-1})$ denote the Joukowski transform, and let
\[
J(z) \coloneqq z - \sqrt{z^2 - 1}
\]
denote the inverse Joukowski transform, where we take the branch of the square root to have positive imaginary part for $z \in \mathbb{C}^+$. The Chebyshev polynomials of the second kind $U_k(x)$ are defined for $x \in [-1,1]$ by the recurrence relation
\[
U_{k+1}(x) = 2 x U_k(x) - U_{k-1}(x)
\]
from $U_0(x) = 1$ and $U_1(x) = 2x$. 
\begin{lemma}
\label{lem:StieltjesCheb}
The Stieltjes transform of $U_k(x) \sqrt{1-x^2}$ on $[-1,1]$ is given by $\pi J(z)^{k+1}$ for any $k \in \{0\} \cup \mathbb{N}$.
\end{lemma}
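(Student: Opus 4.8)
The plan is to compute the Stieltjes transform
\[
m_k(z) = \int_{-1}^{1} \frac{U_k(x)\sqrt{1-x^2}}{x - z}\,\mathrm{d}x
\]
directly, exploiting the trigonometric parametrization that diagonalizes Chebyshev polynomials. First I would substitute $x = \cos\theta$, $\theta \in [0,\pi]$, so that $\sqrt{1-x^2} = \sin\theta$, $\mathrm{d}x = -\sin\theta\,\mathrm{d}\theta$, and $U_k(\cos\theta) = \sin((k+1)\theta)/\sin\theta$. The integral collapses to
\[
m_k(z) = -\int_0^\pi \frac{\sin((k+1)\theta)\sin\theta}{\cos\theta - z}\,\mathrm{d}\theta,
\]
which can be extended to a full-period integral over $[0,2\pi]$ (dividing by $2$, since the integrand is even in $\theta$) and then written as a contour integral over the unit circle $|w|=1$ via $w = e^{i\theta}$. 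After clearing denominators and using $\cos\theta - z = \tfrac12(w + w^{-1}) - z = \tfrac1{2w}(w^2 - 2zw + 1)$, the integrand becomes a rational function of $w$ whose denominator factors as $(w - J(z)^{-1})(w - J(z))$ up to scaling — here I use that $J(z)$ and $1/J(z)$ are the two roots of $w^2 - 2zw + 1 = 0$, with $|J(z)| < 1$ for $z \in \mathbb{C}^+$ by the chosen branch of the square root (since $J(z) = z - \sqrt{z^2-1} = 1/j^{-1}$-type considerations; $J$ maps $\mathbb{C}\setminus[-1,1]$ into the open unit disk).

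Next I would apply the residue theorem: only the pole at $w = J(z)$ lies inside the unit circle (the pole at $w = 1/J(z)$ is outside, and any pole at $w = 0$ coming from the numerator needs separate bookkeeping). Expanding $\sin((k+1)\theta) = (w^{k+1} - w^{-(k+1)})/(2i)$ and $\sin\theta = (w - w^{-1})/(2i)$, the numerator contributes powers $w^{k+1}, w^{k}, w^{-k}, w^{-(k+1)}$ against a simple denominator; the negative powers give a pole at the origin. I expect these origin contributions to cancel in pairs (or to sum to zero by symmetry of the Laurent coefficients), leaving only the residue at $w = J(z)$, which evaluates to a constant multiple of $J(z)^{k+1}$ because $w^{k+1}$ evaluated at $w = J(z)$ is exactly $J(z)^{k+1}$ and the remaining factors combine, using $J(z) - 1/J(z) = -2\sqrt{z^2-1}$ and $J(z)\cdot(1/J(z)) = 1$, into the normalizing constant $\pi$.

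The main obstacle will be the careful bookkeeping of the pole at the origin and pinning down the overall constant with the correct sign and branch: one must track the four separate monomial terms, verify that the singular part at $w=0$ genuinely cancels, and confirm that the surviving residue at $J(z)$ produces precisely $\pi J(z)^{k+1}$ rather than, say, $\pm\pi J(z)^{k+1}$ or a constant off by a factor of $2$. A clean way to sidestep much of this is to verify the claim for $k = 0$ and $k = 1$ by hand (for $k=0$ this is the classical semicircle Stieltjes transform, $\pi(z - \sqrt{z^2-1}) = \pi J(z)$) and then argue by induction: the recurrence $U_{k+1}(x) = 2xU_k(x) - U_{k-1}(x)$ combined with the identity $x = \tfrac12(J(z)^{-1} + J(z))$ on the relevant term — more precisely, using that multiplication by $x$ inside the integral shifts to multiplication by $z$ plus a boundary/lower-order correction — should reduce to checking that $\pi J(z)^{k+2} = 2z\cdot\pi J(z)^{k+1} - \pi J(z)^{k}$ fails in general, so instead the correct inductive identity is the defining relation $J(z)^2 - 2zJ(z) + 1 = 0$, i.e. $2zJ(z)^{k+1} = J(z)^{k+2} + J(z)^k$, which matches the Chebyshev recurrence exactly once one accounts for the fact that $\int x U_k \sqrt{1-x^2}/(x-z) = z\,m_k(z) + \int U_k\sqrt{1-x^2}$ and $\int_{-1}^1 U_k(x)\sqrt{1-x^2}\,\mathrm{d}x = \tfrac{\pi}{2}\delta_{k,0}$ by orthogonality. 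This reduces the whole lemma to the base case and a one-line algebraic check, which I would present as the cleanest route.
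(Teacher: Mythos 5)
Your route is correct in substance but genuinely different from the paper's. The paper does not evaluate the integral at all: it \emph{posits} the answer $m(z)=\pi J(z)^{k+1}$, observes that (with the chosen branch) $m$ is an analytic map $\mathbb{C}^{+}\to\mathbb{C}^{+}$, and invokes the Herglotz--Nevanlinna representation to conclude that $m$ must be of the form $C+Dz+\int (\lambda-z)^{-1}\,\mathrm{d}\mu(\lambda)$; since $J(i\eta)\to 0$ forces $C=D=0$, $m$ is the Stieltjes transform of \emph{some} measure $\mu$, and the Stieltjes inversion formula together with the identity $U_k(\cos\theta)\sin\theta=\sin((k+1)\theta)$ identifies $\mathrm{d}\mu$ as $U_k(x)\sqrt{1-x^2}\,\mathrm{d}x$. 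That ``guess and verify'' argument completely sidesteps the residue bookkeeping you flag as the main obstacle: no contour, no pole at the origin, no constant to pin down by hand. Your alternative inductive route is also clean and arguably more elementary: the relation $\int \frac{xU_k(x)\sqrt{1-x^2}}{x-z}\,\mathrm{d}x = z\,m_k(z)+\tfrac{\pi}{2}\delta_{k,0}$ plus the recurrence $U_{k+1}=2xU_k-U_{k-1}$ give $m_{k+1}=2z\,m_k-m_{k-1}+\pi\delta_{k,0}$, and the defining equation $J^2-2zJ+1=0$ closes the induction for $k\ge 1$; the $k=0$ step then generates $m_1$ from $m_0$ and the $\pi\delta_{0,0}$ correction. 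You should actually carry out both base cases rather than merely assert them: with the paper's convention $m(z)=\int\rho(x)/(x-z)\,\mathrm{d}x$ and its stated branch of $\sqrt{z^2-1}$, a direct computation of $m_0$ gives a sign that you need to reconcile with the lemma as stated (and with whether $J$ restricted to the cut equals $e^{i\theta}$ or $e^{-i\theta}$), and that same sign propagates through your $\delta_{0,0}$ step. Once that is nailed down, the induction is airtight and avoids the contour argument entirely. In short: the paper buys brevity by invoking a representation theorem, you buy explicitness by direct computation, and both hinge on the same trigonometric identity for $U_k$.
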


\begin{proof}
Let $m(z) = \pi J(z)^{k+1}$. Since $m:\mathbb{C}^{+} \to \mathbb{C}^{+}$ is analytic, by the Herglotz-Nevanlinna representation theorem, there exist constants $C,D > 0$ and a Borel measure $\mu$ such that
\[
m(z) = C + D z + \int \frac{1}{\lambda - z} \dd \mu(\lambda).
\]
Since $J(i \eta) \to 0$ as $\eta \to \infty$, $C = D = 0$, and so $m$ is the Stieltjes transform of a measure $\mu$. If $x \coloneqq \cos \theta$ for $\theta \in [0,\pi]$, since $j(e^{i\theta}) = \cos \theta$, $j(e^{i\theta}) = x$ and $J(x) = e^{i\theta}$. Since $U_k(\cos \theta) \sin \theta = \sin((k+1)\theta)$, it follows that
\[
U_k(x)\sqrt{1 - x^2} = \Im (e^{i(k+1)\theta}) = \Im (J(x)^{k+1}).
\]
Since $J$ is analytic on $\mathbb{C}^+$, this, in turn, implies that
\[
\dd \mu(x) = \lim_{\delta \to 0^+} \frac{1}{\pi} \Im [ m(x+ i\delta )] = U_k(x) \sqrt{1 - x^2},
\]
and the result follows.
\end{proof}

The significance of \Cref{lem:StieltjesCheb} is that we can approximate a spectral density as a Chebyshev series, and immediately read off the corresponding Stieltjes transform. To see this, suppose that a spectral density $\rho$ on $[-1,1]$ has the Chebyshev expansion
\[
\rho(x) = \sqrt{1 - x^2} \sum_{k=0}^{\infty} \psi_k U_k(x).
\]
By Fubini's Theorem,
\[
\mathbb{E}_{X \sim \rho}[U_k(X)] = \sum_{l=0}^{\infty} \psi_l \int_{-1}^1 U_k(x) U_l(x) \sqrt{1 - x^2} \, \dd x = \frac{\pi}{2} \psi_k,
\]
and so $\psi_k$ can be estimated by $\hat{\psi}_{n,k} = \frac{2}{\pi n} \sum_{i=1}^n U_k(\lambda_i)$. The corresponding Stieltjes transform is then $m(z) = \pi \Lambda(J(z))$ where
\[
\Lambda(z) = \sum_{k=0}^{\infty} \psi_k z^{k+1}.
\]
Noting that $|J(z)| = \exp(-\Re(\mathrm{arccosh}(z)))$, for $z \in \mathbb{C}^{+}$, $|J(z)| \leq 1$, and so $J(z)$ will lie in the domain of convergence of the power series for $\Lambda$. However, this series is unlikely to converge for $z \in \mathbb{C}^{-}$, and is therefore insufficient for estimating the secondary branch. The solution is to analytically continue $\Lambda$ outside the domain of convergence of its power series by appealing to Pad\'{e} approximants. Computing Pad\'{e} approximants algebraically is an expensive task, so we appeal to Wynn's $\epsilon$-algorithm to rapidly evaluate the Pad\'{e} approximant for $\Lambda$, given estimates of its coefficients $\psi_k$. 

\begin{algorithm}[ht!]
\caption{Pad\'{e}--Chebyshev Free Decompression}
\label{algo:psse}
\SetKwInOut{Input}{Input}
\SetKwInOut{Output}{Output}
\DontPrintSemicolon

\vspace{1mm}
\Input{Hermitian matrix \(\tens{A}\in\mathbb{R}^{n\times n}\); \\
       Maximum Chebyshev polynomial order \(K\); \\
       Evaluation set \(\mathcal{X}\subset \mathbb{R}\); \\
       Subsampling parameter \(n_s\) (with \(1\le n_s \le n\)); \\
       Perturbation \(\delta\) where \(0 < \delta \ll 1\).}
\Output{Estimated spectral density values \(\{\hat{\rho}(x)\}_{x\in \mathcal{X}}\).}

\vspace{2mm}
\tcp{Step 1: Generate eigenvalues from a random submatrix}
Extract a random submatrix \(\hat{\tens{A}}\) from \(\tens{A}\) and compute its eigenvalues \(\{\lambda_i\}_{i=1}^{n}\).\;

\(\lambda_{-} \leftarrow \min\{\lambda_i\}_{i=1}^n - \delta n^{-1} \)
\tcp*{Estimate of the left endpoint of the support}

\(\lambda_{+} \leftarrow \max\{\lambda_i\}_{i=1}^n + \delta n^{-1}\)
\tcp*{Estimate of the right endpoint of the support}

\vspace{2mm}
\tcp{Step 2: Estimate Chebyshev coefficients}
\For{\(k = 0\) \KwTo \(K\)}{
    \(\hat{\psi}_{n,k} \leftarrow \frac{4}{\pi(\lambda_{+} - \lambda_{-})n} \sum_{i=1}^{n} U_k(M_{\lambda_{-}, \lambda_{+}}(\lambda_i))\)
}

\vspace{2mm}
\tcp{Step 3: Compute the Pad\'{e} approximant}
Using Wynn's \(\epsilon\)–algorithm, obtain the Pad\'{e} approximant $\Lambda(z)$ for the power series $\sum_{k=0}^{K} \hat{\psi}_{n,k}\, z^{k+1}$.

\vspace{2mm}
\tcp{Step 4: Form the Stieltjes transform}
\(m(z) \leftarrow \pi\, \Lambda(J(M_{a,b}(z)))\)

\vspace{2mm}
\tcp{Step 5: Solve characteristic curves}
\ForEach{\(x \in \mathcal{X}\)}{
    Using Newton's method, find \(z_x\in \mathbb{C}\) such that
    \[
    x + i\delta = z_x - \frac{\frac{n}{n_s}-1}{m(z_x)}.
    \]
}

\vspace{2mm}
\tcp{Step 6: Return the estimated spectral density}
\Return \(\left\{\frac{1}{\pi} \Im \, m(z_x)\,\frac{n_s}{n} \right\}_{x \in \mathcal{X}}\).\;
\end{algorithm}

To extend this procedure to an arbitrary support interval $[\lambda_{-}, \lambda_{+}]$, we can make use of a conformal map
\[
M_{\lambda_{-}, \lambda_{+}}(z) = \frac{2 z - (\lambda_{+} + \lambda_{-})}{\lambda_{+} - \lambda_{-}} \quad : \quad [\lambda_{-}, \lambda_{+}] \mapsto [-1,1].
\]
In this case, the spectral density is given by
\begin{equation*}
    \rho(x) = \frac{2 \sqrt{(\lambda_{+} - x)(x - \lambda_{-})}}{b-a} \sum_{k=0}^{\infty} \psi_k U_k(M_{\lambda_{-}, \lambda_{+}}(x)),
\end{equation*}
where the coefficients of the estimates are
\begin{equation*}
    \hat{\psi}_{n,k} = \frac{4}{\pi n (\lambda_{+} - \lambda_{-})} U_k (\lambda_i),
\end{equation*}
and the Stieltjes transform becomes
\begin{equation*}
    m(z) = \pi \Lambda (J(M_{\lambda_{-}, \lambda_{+}}(z))).
\end{equation*}

\section[Proofs of Propositions 1 and 2]{\texorpdfstring{Proofs of \cref{prop:MainPDE,prop:CharCurve}}{Proofs of Propositions 1 and 2}}
\label{app:proofs}
\begin{proof}[Proof of \Cref{prop:MainPDE}]
By construction, $R(t, e^{-t} z) = R(0, z)$. In terms of the inverse Stieltjes transform, this implies
\[
\omega(t,e^{-t}z)-\left(1-e^{t}\right)\frac{1}{z}=\omega(0,z).
\]
Differentiating both sides in $t$ gives
\begin{equation}
\label{eq:InvStieltjesPDE}
\frac{\partial \omega}{\partial t}(t,e^{-t}z)-ze^{-t}\frac{\partial \omega}{\partial z}(t,e^{-t}z)+\frac{e^{t}}{z}=0.
\end{equation}
Since $m(t,\omega(t,z)) = z$, $\frac{\partial m}{\partial z} \frac{\partial \omega}{\partial z} = 1$ and $\frac{\partial m}{\partial t} + \frac{\partial m}{\partial z}\frac{\partial \omega}{\partial t} = 0$. Applying these relations to (\ref{eq:InvStieltjesPDE}) gives
\[
-\frac{\partial m}{\partial t}(t,\omega(t,e^{-t}z))-ze^{-t}+\frac{e^{t}}{z}\frac{\partial m}{\partial z}(t,\omega(t,e^{-t}z))=0.
\]
Now evaluate this expression at $z = \tilde{z}$ where $\tilde{z} = e^t m(t,z)$ so that $\omega(t,e^{-t}\tilde{z}) = z$. Then
\[
-\frac{\partial m}{\partial t}(t,z)-m(t,z)+\frac{1}{m(t,z)}\frac{\partial m}{\partial z}(t,z)=0.
\]
Rearranging this expression implies the result.
\end{proof}

\begin{proof}[Proof of \Cref{prop:CharCurve}]
  Let \(\tau \mapsto \bigl(t(\tau), z(\tau)\bigr)\) be a characteristic curve of the quasilinear PDE
  \eqref{eq:StieltjesPDE}. By the chain rule,
    \begin{equation} \label{eq:ChainRule}
        \frac{\mathrm{d} m(t(\tau), z(\tau))}{\mathrm{d} \tau} =
        \frac{\partial m(t, z)}{\partial t} \frac{\mathrm{d} t(\tau)}{\mathrm{d} \tau} +
        \frac{\partial m(t, z)}{\partial z} \frac{\mathrm{d} z(\tau)}{\mathrm{d} \tau}.
    \end{equation}
    Matching coefficients in \eqref{eq:ChainRule} with the PDE \eqref{eq:StieltjesPDE} yields the system of ODEs
    \begin{subequations} \label{eq:CharSys}
    \begin{alignat}{3}
        \frac{\mathrm{d} t(\tau)}{\mathrm{d} \tau} &= 1, \qquad & t(0)&= 0, \\
        \frac{\mathrm{d} z(\tau)}{\mathrm{d} \tau} &= -m(\tau)^{-1}, \qquad & z(0)&= z_0,\\
        \frac{\mathrm{d} m(\tau)}{\mathrm{d} \tau} &= -m(\tau), \qquad & m(0)&= m_0(z_0).
    \end{alignat}
    \end{subequations}
    Solving \eqref{eq:CharSys} is straightforward, yielding \eqref{eq:PDE-sol}.

    By standard ODE existence and uniqueness theorems, for each \(z_0\) there is a unique characteristic curve \(\tau \mapsto (t(\tau),z(\tau),m(\tau))\) near \(\tau=0\).  Since \(\tfrac{dt}{d\tau}=1\) and \(\tfrac{dz}{d\tau}\neq 0\) if \(m_0(z_0)\neq 0\), we can (locally) invert \(\tau \mapsto t\) and \(\bigl(z_0\bigr)\mapsto z\).  This yields a unique \(C^1\) function \(m(t,z)\) solving \eqref{eq:StieltjesPDE} with \(m(0,z)=m_0(z)\).

    Finally, the condition \(m_0(z)\neq 0\) for all \(z\in\Omega\) ensures that \(m(\tau)\neq 0\) in a neighborhood of \(\tau=0\), so the term \(m^{-1}\) in \eqref{eq:StieltjesPDE} is well-defined. Uniqueness follows from the uniqueness of solutions to \eqref{eq:CharSys} with given initial data. 
    We see that \Cref{prop:CharCurve} gives explicit solutions to \eqref{prop:MainPDE} in terms of the initial data and desired time $t$. 
    
    It remains to show the existence of $\phi$.
    Define \(\Phi(t, z, z_0) \coloneqq z_0 + m_0(z_0)^{-1}(1 - e^t) - z\). If \(\partial \Phi / \partial z_0 \neq 0\), by the implicit function theorem \citep{hamilton82}, there exists an inverse function \(z_0 = \phi(t, z)\) solving \(\Phi(t, z, \phi(t, z)) = 0\), which allows the explicit solution of \eqref{eq:StieltjesPDE} by $m(t, z) = m_0(\phi(t, z)) e^{-t}.$
    This completes the proof.
\end{proof}


\section{Stability Analysis} \label{app:stability}

To analyze the stability of \(m(t, z)\) over time \(t\), we perturb the PDE \eqref{eq:StieltjesPDE} by replacing \(m\) with \(m + \delta m\), and derive a PDE governing the evolution of the perturbation field \(\delta m(t, z)\). We then quantify the impact of this perturbation using an appropriate functional norm.

A natural choice for this purpose is the \(L^2\)-norm along horizontal slices in the upper half-plane, defined by
\begin{equation*}
    \Vert m(t, \cdot + iy) \Vert^2_{L^2(\Gamma_y)} \coloneqq \int_{\mathbb{R}} \left| m(t, x + iy) \right|^2 \, \mathrm{d}x,
\end{equation*}
where \(\Gamma_y \coloneqq \{ z = x + iy \;|\; x \in \mathbb{R} \}\) is the horizontal line at height \(y > 0\). This norm is particularly suitable for our analysis since we are ultimately interested in evaluating \(m(t, x + i \delta)\) slightly above the real axis, in order to recover the spectral density from the Stieltjes transform.

Moreover, this space includes the boundary values of Stieltjes transforms \(m(t, x+i0^+)\) whenever \(\rho(t, \cdot) \in L^2(\mathbb{R})\), since one can show that
\begin{equation*}
    \Vert m(t, x+i0^{+}) \Vert^2_{L^2(\Gamma_{0^{+}})} = ( 1 + \pi^2) \Vert \rho(t, \cdot) \Vert^2_{L^2(\mathbb{R})}.
\end{equation*}

We now establish an upper bound on the growth of this norm over time, culminating in a Gr\"{o}nwall-type inequality. This is formalized in the following proposition.




\begin{proposition} \label{prop:stability}
    Suppose \(\rho(t, \cdot) \in L^2(\mathbb{R})\), and let \(m(t, z)\) be the Stieltjes transform defined on the domain \(\mathbb{C}^{+}\). Let \(\delta m\) be small perturbation of \(m\) satisfying the Bernstein inequality
    \begin{equation}
        \Vert \partial_x \delta m(t, z)\Vert \leq M \Vert \delta m(t, z)\Vert_{L^2(\Gamma_y)}, \quad z \in \Gamma_y, \forall y > 0 \label{eq:bernstein}
    \end{equation}
    which means on each horizontal slice, \(\delta m(\cdot+iy)\) is \(M\)-band‑limited (Paley--Wiener space), ensuring sufficient decay of the perturbations. Then, for every \(y > 0\), it holds that
    \begin{equation}
        \frac{\mathrm{d}}{\mathrm{d} t} \Vert \delta m (t) \Vert_{L^2(\Gamma_y)}^2 \leq (-2 + C_y(t)) \; \Vert \delta m(t) \Vert_{L^2(\Gamma_y)}^2, \label{eq:gronwall}
    \end{equation}
    where
    \begin{equation}
        C_y(t) \coloneqq 3 \sup_{z \in \Gamma_y} \left| \Re \, \frac{\partial_z m(t, z)}{m(t, z)^2} \right| + 2 M \sup_{z \in \Gamma_y} \left| \Im \, \frac{1}{m(t, z)} \right|.
        \label{eq:esssup-C}
    \end{equation}
\end{proposition}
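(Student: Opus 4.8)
The plan is to linearize \eqref{eq:StieltjesPDE} about $m$ and then run an energy estimate on each horizontal slice $\Gamma_y$. Substituting $m \mapsto m + \delta m$ into \eqref{eq:StieltjesPDE}, expanding $(m+\delta m)^{-1} = m^{-1} - m^{-2}\delta m + \mathcal{O}(\delta m^2)$, and discarding quadratic terms yields the variational equation
\begin{equation*}
\frac{\partial \delta m}{\partial t} = -\,\delta m + \frac{1}{m}\frac{\partial \delta m}{\partial z} - \frac{\partial_z m}{m^{2}}\,\delta m .
\end{equation*}
Since the coefficients $1/m$ and $\partial_z m / m^2$ are holomorphic in $\mathbb{C}^{+}$ and the equation is of transport type, $\delta m$ stays holomorphic in $z$; hence along $\Gamma_y$ we may identify $\partial_z \delta m$ with $\partial_x \delta m$. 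Everything afterward is bookkeeping with integration by parts, Cauchy--Schwarz, and the Bernstein bound \eqref{eq:bernstein}.

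First I would differentiate $\|\delta m(t)\|_{L^2(\Gamma_y)}^2 = \int_{\mathbb{R}}|\delta m(t,x+iy)|^2\,\mathrm{d}x$ in $t$, pass the derivative inside the integral, and use the variational equation together with $\tfrac{\mathrm{d}}{\mathrm{d}t}|\delta m|^2 = 2\Re(\overline{\delta m}\,\partial_t\delta m)$ to obtain
\begin{equation*}
\frac{\mathrm{d}}{\mathrm{d}t}\|\delta m\|_{L^2(\Gamma_y)}^2 = -2\|\delta m\|_{L^2(\Gamma_y)}^2 + 2\int_{\mathbb{R}}\Re\!\left(\frac{\overline{\delta m}}{m}\,\partial_x\delta m\right)\mathrm{d}x - 2\int_{\mathbb{R}}\Re\!\left(\frac{\partial_z m}{m^{2}}\right)|\delta m|^2\,\mathrm{d}x .
\end{equation*}
The first term supplies the dissipative $-2\|\delta m\|^2$; the last term is bounded immediately by $2\sup_{\Gamma_y}\bigl|\Re(\partial_z m/m^2)\bigr|\,\|\delta m\|_{L^2(\Gamma_y)}^2$. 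For the middle term I would write $1/m = u + i v$ with $u = \Re(1/m)$, $v = \Im(1/m)$, so that $\Re(\overline{\delta m}\,\partial_x\delta m) = \tfrac12\partial_x|\delta m|^2$ and
\begin{equation*}
2\int_{\mathbb{R}}\Re\!\left(\frac{\overline{\delta m}}{m}\,\partial_x\delta m\right)\mathrm{d}x = \int_{\mathbb{R}} u\,\partial_x|\delta m|^2\,\mathrm{d}x - 2\int_{\mathbb{R}} v\,\Im(\overline{\delta m}\,\partial_x\delta m)\,\mathrm{d}x .
\end{equation*}
Integrating the first piece by parts (boundary contributions vanish since $\delta m(x+iy)\to 0$ as $|x|\to\infty$) and using $\partial_x u = \Re(\partial_x(1/m)) = -\Re(\partial_z m/m^2)$ bounds it by $\sup_{\Gamma_y}\bigl|\Re(\partial_z m/m^2)\bigr|\,\|\delta m\|^2$; Cauchy--Schwarz bounds the second piece by $2\sup_{\Gamma_y}|v|\,\|\delta m\|_{L^2(\Gamma_y)}\|\partial_x\delta m\|_{L^2(\Gamma_y)}$, and the Paley--Wiener hypothesis \eqref{eq:bernstein} replaces $\|\partial_x\delta m\|$ by $M\|\delta m\|$, giving at most $2M\sup_{\Gamma_y}|\Im(1/m)|\,\|\delta m\|^2$. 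Summing the contributions — $2$ from the reaction term and $1$ from the integration by parts on $\Re(\partial_z m/m^2)$, plus $2M$ from the imaginary part — produces exactly $C_y(t)$ of \eqref{eq:esssup-C}, hence \eqref{eq:gronwall}.

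I expect the main obstacle to be the analytic rigor of the energy manipulation rather than any single inequality: justifying differentiation under the integral sign, the vanishing of the boundary terms at $x = \pm\infty$, and above all verifying that the perturbed field stays in the band-limited (Paley--Wiener) class uniformly in $t$, so that \eqref{eq:bernstein} applies with a fixed $M$ along the entire flow. This is where the hypothesis genuinely earns its keep, since without decay control on $\partial_x\delta m$ the transport term cannot be absorbed into the dissipative term. A secondary subtlety is ensuring the linearization is legitimate, i.e.\ that the discarded $\mathcal{O}(\delta m^2)$ remainder is negligible in the small-perturbation regime where the statement is claimed; Gr\"onwall's lemma then converts \eqref{eq:gronwall} into the polynomial growth bound of \Cref{prop:stability_bound}.
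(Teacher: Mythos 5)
Your proof is correct and follows essentially the same route as the paper: linearize \eqref{eq:StieltjesPDE} about $m$, run an $L^2(\Gamma_y)$ energy estimate, split $1/m$ into real and imaginary parts, integrate by parts the real-part contribution using $\Re(\overline{\delta m}\,\partial_x\delta m)=\tfrac12\partial_x|\delta m|^2$, and absorb the imaginary-part contribution via Cauchy--Schwarz plus the Bernstein hypothesis \eqref{eq:bernstein}. Your accounting ($2$ from the reaction term, $1$ from the integration by parts, $2M$ from the band-limited piece) reproduces exactly $C_y(t)$ of \eqref{eq:esssup-C}; the paper does the same computation starting from $\tfrac12\tfrac{\mathrm{d}}{\mathrm{d}t}\|\delta m\|^2$, so the raw coefficients appear halved but the final bound is identical. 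One small point in your favor: you bound the reaction integral by $\sup_{\Gamma_y}\bigl|\Re\bigl(\partial_z m / m^2\bigr)\bigr|$ throughout, which is the sharp choice needed to land on the stated $C_y(t)$; the paper momentarily writes the looser $\sup_{\Gamma_y}\bigl|\partial_z m / m^2\bigr|$ in an intermediate line before combining, so your version is actually cleaner on that step.
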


\begin{proof}
    Let \(\tilde{m} = m + \delta m\), where \(\delta m \in H^2(\mathbb{R}_{>0} \times \mathbb{C}^{+})\) (the usual Hardy space) is a small perturbation with the boundary condition \(\delta m(t, z) = 0\) at \(\vert z \vert \to \infty\). Plugging \(\tilde{m}\) into the PDE \eqref{eq:StieltjesPDE} and subtracting the unperturbed PDE gives the perturbation equation
    \begin{equation*}
        \frac{\partial \delta m}{\partial t} = - \delta m + \frac{1}{m + \delta m} \frac{\partial (m + \delta m)}{\partial z} - \frac{1}{m} \frac{\partial m}{\partial z}.
    \end{equation*}
    Expanding the nonlinear term using a first-order approximation and rearranging, we obtain
    \begin{equation*}
        \frac{\partial \delta m}{\partial t} = \frac{1}{m} \frac{\partial \delta m}{\partial z} - \delta m  - \frac{\delta m}{m^2} \frac{\partial m}{\partial z} + \mathcal{O}(|\delta m|^2).
    \end{equation*}
    The linearized equation above is essentially a transport-type PDE, with the first, second, and third terms on the right-hand side representing the transport term, damping term, and perturbation term, respectively.

    To obtain the energy, we multiply both sides of the above by \(\overline{\delta m}\) (the complex conjugate of \(\delta m\)), integrate over \(\Gamma_y\), and consider its real part, yielding
    \begin{equation}
        \Re \int_{\mathbb{R}} \frac{\partial \delta m}{\partial t} \overline{\delta m} \, \mathrm{d} x = \Re \int_{\mathbb{R}} \frac{1}{m} \frac{\partial \delta m}{\partial z} \overline{\delta m} \, \mathrm{d} x - \Re \int_{\mathbb{R}} \left( 1 + \frac{1}{m^2} \frac{\partial m}{\partial z} \right) \vert \delta m \vert^2 \, \mathrm{d} x. \label{eq:energy}
    \end{equation}
    The term on the left-hand side of \eqref{eq:energy}, after taking \(\sup_{y > 0}\), can be expressed as
    \begin{equation}
         \Re \int_{\mathbb{R}} \frac{\partial \delta m}{\partial t} \overline{\delta m} \, \mathrm{d} x = \frac{1}{2} \frac{\mathrm{d}}{\mathrm{d} t} \Vert \delta m \Vert_{L^2(\Gamma_y)}^2. \label{eq:energy-dt}
    \end{equation}

    Since \(\delta m\) is holomorphic, we have \(\partial_z \delta m = \partial_x \delta m\) on every stripe \(\Gamma_y\). Writing \(m^{-1} = a + i b\) with real valued \(a, b\), we have
    \begin{equation*}
        \Re \left( \frac{1}{m} (\partial_x \delta m) \, \overline{\delta m} \right) =
         \frac{1}{2} \Re \left( \frac{\partial_x m}{m^2} \right) \vert \delta m \vert^2
        - b \Im \left( (\partial_x \delta m) \, \overline{\delta m} \right)
        + \frac{1}{2} \partial_x (a \vert \delta m \vert^2).
    \end{equation*}
    Integrating over \(x \in \mathbb{R}\), the third term on the right hand side of the above vanishes. As for the other terms, we apply Cauchy--Schwarz and the Bernstein bound \eqref{eq:bernstein} to obtain
    \begin{equation}
        \Re \int_{\mathbb{R}} \frac{1}{m} \frac{\partial \delta m}{\partial z} \overline{\delta m} \, \mathrm{d} x \leq
         \left(\frac{1}{2} \sup_{z \in \Gamma_y} \left| \Re \, \frac{\partial_z m}{m^2} \right| + M \sup_{z \in \Gamma_y} \left| \Im \, \frac{1}{m} \right| \right)
         \int_{\mathbb{R}} \vert \delta m \vert^2 \, \mathrm{d} x. \label{eq:energy-real}
    \end{equation}

    Also, the second term on the right-had side of \eqref{eq:energy} satisfies
    \begin{equation}
        - \Re \int_{\mathbb{R}} \left( 1 + \frac{1}{m^2} \frac{\partial m}{\partial z} \right) \vert \delta m \vert^2 \, \mathrm{d} x
        \leq - \int_{\mathbb{R}} \vert \delta m \vert^2 \, \mathrm{d} x + \sup_{z \in \Gamma_y} \left| \frac{\partial_z m}{m^2} \right| \int_{\mathbb{R}} \vert \delta m \vert^2 \, \mathrm{d} x. \label{eq:energy-real-2} 
    \end{equation}  

    Putting all together by substituting \eqref{eq:energy-dt}, \eqref{eq:energy-real}, and \eqref{eq:energy-real-2} into \eqref{eq:energy}, we obtain
    \begin{align*}
        \frac{1}{2} \frac{\mathrm{d}}{\mathrm{d} t} \Vert \delta m \Vert_{L^2(\Gamma_y)}^2 & \leq
        \left(-1 + \frac{3}{2} \sup_{z \in \Gamma_y} \left| \Re \, \frac{\partial_z m}{m^2} \right| + M \sup_{z \in \Gamma_y} \left| \Im \, \frac{1}{m} \right| \right) \int_{\mathbb{R}} \vert \delta m \vert^2 \, \mathrm{d} x \\
        &= (-1 + C_y(t)) \, \Vert \delta m \Vert^2_{L^2(\Gamma_y)}.
    \end{align*} 
    This completes the proof.
\end{proof}

\begin{remark}
    To complete \Cref{prop:stability}, we verify that the constant \(C_y(t)\) is finite for every \(y > 0\). From the representation of the Stieltjes transform,
    \begin{equation*}
        m(t, z) = \int_{\mathbb{R}} \frac{\rho(t, x)}{x - z} \, \mathrm{d}x,
    \end{equation*}
    and for \(z = x + i y\) with \(y > 0\), one has the uniform bounds
    \begin{equation*}
        \left| m(t, z) \right| \geq \frac{y}{D^2 + y^2},
        \qquad
        \left| \partial_z m(t, z) \right| \leq \frac{1}{y^2},
    \end{equation*}
    where \(D \coloneqq \operatorname{diam}(\operatorname{supp}(\rho))\) is the diameter of the support of \(\rho\). It follows that both
    \(\sup_{z \in \Gamma_y} | \Im ( m^{-1})|\)
    and
    \(\sup_{z \in \Gamma_y} | \Re ( m^{-2} \partial_z m ) |\)
    are finite. Therefore, \(C_y(t) < \infty\) for all \(y > 0\).
\end{remark}

\Cref{prop:stability} then implies that the perturbation \(\delta m\) satisfies an exponential growth bound in time:
\begin{equation*}
    \Vert \delta m(t) \Vert_{L^2(\Gamma_y)}^2 \leq \Vert \delta m_0 \Vert_{L^2(\Gamma_y)}^2 e^{\nu t},
\end{equation*}
with exponent \(\nu = -2 + C_y < \infty\) for any fixed \(y > 0\). But since \(e^t = \frac{n}{n_s}\),
the exponential bound translates to polynomial growth in \(n\),
\begin{equation*}
    \Vert \delta m(t) \Vert_{L^2(\Gamma_y)}^2
    \leq \Vert \delta m_0 \Vert_{L^2(\Gamma_y)}^2 \left( \frac{n}{n_s} \right)^{\nu}.
\end{equation*}


\section{Background Material for the Neural Tangent Kernel} \label{sec:ntk-bg}

The neural tangent kernel (NTK) is an important object in machine learning \citep{jacot18}, and has become a prominent theoretical and practical tool for studying the behavior of neural networks both during training and inference \citep{novak2022fast}. 
The Gram matrix associated to the NTK has been used in lazy training \citep{chizat19}, and as a tool for uncertainty quantification and estimation \citep{immer21LLA,wilson2025}, exploiting the deep connections between neural networks and Gaussain processes. More recently, it has been shown to arise naturally as a measure of model quality in approximations of the marginal likelihood \citep{hodgkinson2023interpolating,immer23}, as a tool for quantification of model complexity \citep{vakili22}, and in estimation of generalization error in PAC-Bayes bounds \citep{hodgkinson2023pacbayes,kim2023scaleinvariant}. 

While it was first derived in the context of neural networks, the quantity is well-defined for a broader class of functions. For clarity will consider the setting where the underlying function is $C^1$. However, this can be relaxed in practice to include, as an important example, networks with ReLU activations. For such a function $f_{\vect{\theta}}: \mathcal{X}\to\mathbb{R}^d$ parameterized by $\vect{\theta} \in \mathbb{R}^p$, we define the (empirical) NTK to be 
\begin{align}
    \kappa_{\vect{\theta}} (x,x^\prime) \coloneqq \tens{J}_{\vect{\theta}} \big(f_{\vect{\theta}}(x)\big) \tens{J}_{\vect{\theta}} \big(f_{\vect{\theta}}(x^\prime) \big)^{\intercal}.
\end{align}
Here, $\tens{J}_{\vect{\theta}} (f_{\vect{\theta}}(x)) \in \mathbb{R}^{d\times p}$ is the Jacobian of $f_{\vect{\theta}}$ with respect to its (flattened vector of) parameters $\vect{\theta}$, evaluated at the data point $x$. Since \(\kappa_{\vect{\theta}}(x, x^\prime) \in \mathbb{R}^{d \times d}\), evaluating the NTK over \(n\) data points produces a fourth-order tensor of shape \((n, n, d, d)\). For computational convenience, this tensor is typically reshaped into a two-dimensional block matrix of size \(n d \times n d\), where each \((i,j)\)-block contains the \(d \times d\) matrix \(\kappa(x_i, x_j)\).

The fact that the NTK arises in a variety of contexts makes it a natural object of study. For well-trained models, the Gram matrix is nearly singular, making accurate characterization of its spectral properties crucial for estimating quantities used in downstream tasks \citep{ameli2025determinant}. This also motivates considering the spectrum on a logarithmic scale. In \Cref{sec:ntk}, we analyze the empirical NTK of a ResNet50 model trained on the CIFAR10 dataset, a 10-class classification task. In this case, \(10\%\) of the eigenvalues are extremely small (\( \sim 10^{-14} \)), reflecting near-singular behavior likely tied to the model's class structure.

To examine the scaling behavior of the non-singular bulk spectrum, we project the NTK matrix onto its non-null eigenspace, removing the low-rank null component. We then randomly sample orthogonal eigenvectors within this reduced space and reconstruct a corresponding Hermitian matrix, which is subsequently permuted to generate random subsets for use in our free decompression procedure.

The Stieltjes transform of the approximated spectrum of this matrix is depicted in \Cref{fig:ntk_stieltjes}, which shows its analytic continuation using the principal branch (left), as well as the secondary branch (right).



\begin{figure} [t!]
    \includegraphics[width=\textwidth]{\figdir/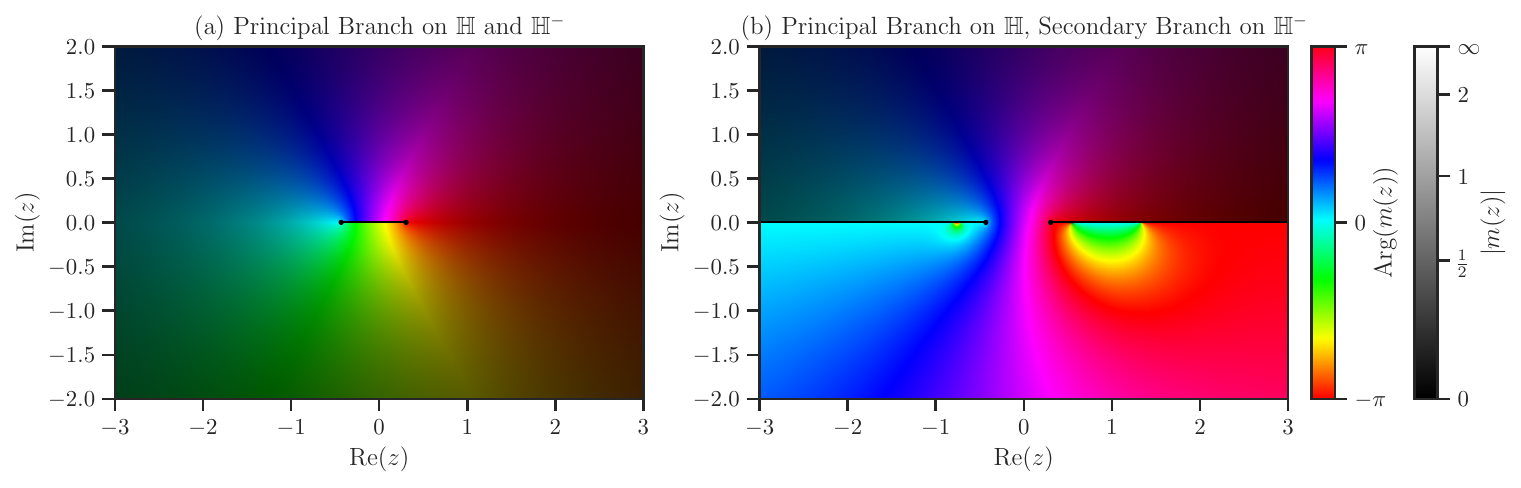}
    \caption{Analytic continuation of the Stieltjes transform of a density estimate of the spectral distribution of the log-NTK matrix described in \Cref{sec:ntk}. (a) The principal branch contains a branch cut along the support of the spectral density, while (b) the secondary branch is continuous in this region.}
    \label{fig:ntk_stieltjes}
\end{figure}


\section{Implementation and Reproducibility Guide} \label{app:software}

We developed a Python package, \texttt{\freealg},\footnote{\texttt{\freealg} is available for installation from PyPI (\FreeAlgPypiUrl), the documentation can be found at \FreeAlgDocUrl, and the source code is available at \FreeAlgGithubUrl.} which implements our algorithms and enables reproduction of the numerical results in this paper. A minimal example using the \texttt{\freealg.FreeForm} class is shown in \Cref{list:freealg1}. This example generates the results in \Cref{fig:mp-free,fig:mp-stieltjes}, illustrating the Marchenko–Pastur distribution with parameter \(\lambda = \frac{1}{50}\), starting from a matrix of size \(n_s = 1000\) and decompressing to a matrix of size \(n = \num{32000}\). Further details on function arguments and class parameters are available in the package documentation.

\clearpage

\lstinputlisting[
    style=mystyle,
    language=Python,
    caption={A minimal usage example of the \texttt{\freealg} package.},
    label={list:freealg1},
    float=ht!]{freealg_mwe_1.py}

\end{appendices}

\end{document}